\newcommand\bcmdtab{\noindent\bgroup\tabcolsep=0pt%
  \begin{tabular}{@{}p{10pc}@{}p{20pc}@{}}}
\newcommand\ecmdtab{\end{tabular}\egroup}
  \title[Theory and Practice of Logic Programming]
        {Rewriting recursive aggregates in answer set programming: back to monotonicity}
  \author[M. Alviano and W. Faber and M. Gebser]
         {MARIO ALVIANO \\
         University of Calabria, Italy 
         \and
         WOLFGANG FABER \\
         University of Huddersfield, UK 
         \and
         MARTIN GEBSER \\
         Aalto University, HIIT, Finland}
\newtheorem{proposition}{Proposition}
\newtheorem{examplee}{Example}
\newenvironment{example}{\begin{examplee}}{\hfill $\blacksquare$\end{examplee}}
\newcommand{\TODO}[1]{\textbf{[TODO:} #1\textbf{]}}
\newcommand{\naf}{\ensuremath{\raise.17ex\hbox{\ensuremath{\scriptstyle\mathtt{\sim}}}}\xspace} \DeclarePairedDelimiter\norm{\lVert}{\rVert}
\newif\ifInput
\def\V{\ensuremath{\mathcal{V}}\xspace}
\def\SUM{\ensuremath{\textsc{sum}}\xspace}
\def\COUNT{\ensuremath{\textsc{count}}\xspace}
\def\AVG{\ensuremath{\textsc{avg}}\xspace}
\def\AGGone{\ensuremath{\textsc{agg}_1}\xspace}
\def\AGGtwo{\ensuremath{\textsc{agg}_2}\xspace}
\def\MIN{\ensuremath{\textsc{min}}\xspace}
\def\MAX{\ensuremath{\textsc{max}}\xspace}
\def\EVEN{\ensuremath{\textsc{even}}\xspace}
\def\ODD{\ensuremath{\textsc{odd}}\xspace}
\def\rec{\ensuremath{\mathit{rec}}\xspace}
\def\rew{\ensuremath{\mathit{rew}}\xspace}
\def\pre{\ensuremath{\mathit{pre}}\xspace}
\def\ext{\ensuremath{\mathit{ext}}\xspace}
\def\aux{\ensuremath{\mathit{aux}}\xspace}
\def\At{\ensuremath{\mathit{At}}\xspace}
\def\Ag{\ensuremath{\mathit{Ag}}\xspace}
\def\SM{\ensuremath{\mathit{SM}}\xspace}
\def\G{\ensuremath{\mathcal{G}}\xspace}
\def\lparse{\textsc{lparse}\xspace}
\def\sequiv{\ensuremath{\equiv_{\mathit{SE}}}\xspace}
\def\lit{\ensuremath{\mathit{Lit}}\xspace}
\def\wt{\ensuremath{\Sigma}\xspace}
\def\wlit{\ensuremath{\mathit{wLit}}\xspace}
\def\wlita{\ensuremath{\wlit^\ast}\xspace}
\def\wlitp{\ensuremath{\wlit^+}\xspace}
\def\wlitn{\ensuremath{\wlit^-}\xspace}
\def\mon{\ensuremath{\mathit{pos}}\xspace}
\def\gtA{\ensuremath{A_>}\xspace}
\def\ltA{\ensuremath{A_<}\xspace}
\def\AtF{\ensuremath{\mathit{At}^F}\xspace}
\begin{document}

\label{firstpage}

\maketitle

\begin{abstract}
Aggregation functions are widely used in answer set programming for representing and reasoning on knowledge involving sets of objects collectively.
Current implementations simplify the structure of programs in order to optimize the overall performance.
In particular, aggregates are rewritten into simpler forms known as monotone aggregates.
Since the evaluation of normal programs with monotone aggregates is in general on a lower complexity level than the evaluation of normal programs with arbitrary aggregates, any faithful translation function must introduce disjunction in rule heads in some cases.
However, no function of this kind is known.
The paper closes this gap by introducing a polynomial, faithful, and modular translation
for rewriting common aggregation functions into the simpler form accepted by current solvers.
A prototype system allows for experimenting with arbitrary recursive aggregates,
which are  also    supported in the recent version 4.5  of the grounder \textsc{gringo}, using the methods presented in this paper.
\end{abstract}

\begin{keywords}
answer set programming;
polynomial, faithful, and modular translation;
aggregation functions.
\end{keywords}


\section{Introduction}\label{sec:intro}

Answer set programming (ASP) is a declarative language for knowledge representation and reasoning \cite{DBLP:journals/cacm/BrewkaET11}.
In ASP knowledge is encoded by means of logic rules, possibly using disjunction and default negation, interpreted according to the stable model semantics \cite{DBLP:conf/iclp/GelfondL88,DBLP:journals/ngc/GelfondL91}.
Since its first proposal, the basic language was extended by several constructs in order to ease the representation of practical knowledge, and particular interest was given to aggregate functions \cite{DBLP:journals/ai/SimonsNS02,DBLP:journals/ai/LiuPST10,DBLP:conf/aaaiss/BartholomewLM11,DBLP:journals/ai/FaberPL11,DBLP:journals/tocl/Ferraris11,DBLP:journals/tplp/GelfondZ14}.
In fact, aggregates allow for expressing properties on sets of atoms declaratively, and are widely used for example to enforce \emph{functional dependencies}, where 
a rule of the form 
\begin{equation*}
 \bot \leftarrow R'(\overline{X}),\ \COUNT[\overline{Y} : R(\overline{X},\overline{Y},\overline{Z})] \leq 1
\end{equation*}
constrains relation $R$ to satisfy the functional dependency $\overline{X} \rightarrow \overline{Y}$, where
$\overline{X} \cup \overline{Y} \cup \overline{Z}$ is the set of attributes of $R$, and $R'$ is the projection of $R$ on $\overline{X}$.

Among the several semantics proposed for interpreting ASP programs with aggregates, two of them \cite{DBLP:journals/ai/FaberPL11,DBLP:journals/tocl/Ferraris11} are implemented in widely-used ASP solvers \cite{DBLP:journals/tplp/FaberPLDI08,DBLP:journals/ai/GebserKS12}.
The two semantics agree for programs without negated aggregates, and are thus referred indistinctly in this paper as F-stable model semantics.
It is important to observe that the implementation of F-stable model semantics is \emph{incomplete} in current ASP solvers.
In fact, the grounding phase rewrites aggregates into simpler forms known as \emph{monotone} aggregates, and many common reasoning tasks on normal programs with monotone aggregates belong to the first level of the polynomial hierarchy, while in general they belong to the second level for normal programs with aggregates \cite{DBLP:journals/ai/FaberPL11,DBLP:journals/tocl/Ferraris11}.
Since disjunction is not   introduced during the rewriting of aggregates, this is already evidence that currently available rewritings can be correct only if recursion is limited to \emph{convex} aggregates \cite{DBLP:journals/jair/LiuT06}, the largest class of aggregates for which the common reasoning tasks still belong to the first level of the polynomial hierarchy in the normal case \cite{DBLP:conf/lpnmr/AlvianoF13}.

However, non-convex aggregations may arise in several contexts while modeling complex knowledge \cite{DBLP:journals/ai/EiterILST08,DBLP:journals/tplp/EiterFKR12,woltran-aggregates}.
A minimalistic example is provided by the $\Sigma^P_2$-complete problem called \emph{Generalized Subset Sum} \cite{DBLP:journals/jcss/BermanKLPR02}, where two vectors $u$ and $v$ of integers as well as an integer $b$ are given, and the task is to decide whether the formula $\exists x \forall y (ux + vy \neq b)$ is true, where $x$ and $y$ are vectors of binary variables of the same length as $u$ or  $v$, respectively.
For example, for $u = [1,2]$, $v = [2,3]$, and $b = 5$, the task is to decide whether the following formula is true:
$\exists x_1 
         x_2 
 \forall y_1 
         y_2 (1 \cdot x_1 + 2 \cdot x_2 + 2 \cdot y_1 + 3 \cdot y_2 \neq 5)$.
Any natural encoding of such an instance would include an aggregate of the form $\SUM[1 : x_1, 2 : x_2, 2 : y_1, 3 : y_2] \neq 5$, and it is not immediate how to obtain an equivalent program that comprises monotone aggregates only.

The aim of this paper is to overcome the limitations of current rewritings in order to provide a \emph{polynomial, faithful, and modular} translation 
\cite{DBLP:journals/jancl/Janhunen06} that allows to compile logic programs with aggregates into equivalent logic programs that only comprise monotone aggregates.
The paper focuses on common aggregation functions such as \SUM, \AVG, \MIN, \MAX, \COUNT, \EVEN, and \ODD.
Actually, all of them are mapped to possibly non-monotone sums in Section~\ref{sec:tosum}, and non-monotonicity is then eliminated in Section~\ref{sec:rewriting}.
The rewriting is further optimized in Section~\ref{sec:refined} by taking strongly connected components of a refined version of the positive dependency graph into account.
Crucial properties like correctness and modularity are established
in Section~\ref{sec:properties}, followed by the discussion of related work and conclusions.
The proposed rewriting is implemented in a prototype system
(\url{http://alviano.net/software/f-stable-models/}),
and
is   also    adopted in the recent      version 4.5 of the grounder \textsc{gringo}.
With   the prototype, aggregates are represented by reserved predicates, so that the grounding phase can be delegated to \textsc{dlv} \cite{DBLP:conf/datalog/AlvianoFLPPT10} or \textsc{gringo} \cite{DBLP:conf/lpnmr/GebserKKS11}.
The output of a   grounder is then processed to properly encode aggregates for the subsequent stable model search, as performed by \textsc{clasp} \cite{DBLP:journals/ai/GebserKS12}, \textsc{cmodels} \cite{gilima06a}, 
or \textsc{wasp} \cite{DBLP:journals/tplp/AlvianoDR14}.

\section{Background}\label{sec:background}

Let $\V$ be a set of propositional atoms including $\bot$.
A propositional literal is an atom possibly preceded by one or more occurrences of the \emph{negation as failure} symbol \naf.
An aggregate literal, or simply aggregate, is of one of the following three forms:
\begin{eqnarray}
 \AGGone[w_1 : l_1, \ldots, w_n : l_n] \odot b \quad&
 \COUNT[l_1, \ldots, l_n] \odot b \quad&
 \AGGtwo[l_1, \ldots, l_n] \label{eq:agg}
\end{eqnarray}
where $\AGGone \in \{\SUM,\AVG,\MIN,\MAX\}$, $\AGGtwo \in \{\EVEN,\ODD\}$, $n \geq 0$, $b,w_1,\ldots,w_n$ are integers, $l_1,\ldots,l_n$ are propositional literals, and $\odot \in \{<,\leq,\geq,>,=,\neq\}$.
(Note that $[w_1 : l_1, \ldots, w_n : l_n]$ and $[l_1, \ldots, l_n]$ are multisets.
 This notation of propositional aggregates differs from ASP-Core-2
 (\url{https://www.mat.unical.it/aspcomp2013/ASPStandardization/})
 for ease of presentation.)
A literal is either a propositional literal, or an aggregate.
A rule $r$ is of the following form:
\begin{equation}\label{eq:rule}
 p_1 \vee \cdots \vee p_m \leftarrow l_1 \wedge \cdots \wedge l_n
\end{equation}
where $m \geq 1$, $n \geq 0$, $p_1,\ldots,p_m$ are propositional atoms, and $l_1,\ldots,l_n$ are literals.
The set $\{p_1,\ldots,p_m\}\setminus\{\bot\}$ is referred to as head, denoted by $H(r)$, and the set $\{l_1,\ldots,l_n\}$ is called body, denoted by $B(r)$.
A program $\Pi$ is a finite set of rules.
The set of propositional atoms (different from $\bot$) occurring in a program $\Pi$ is denoted by $\At(\Pi)$, and the set of aggregates occurring in $\Pi$ is denoted by $\Ag(\Pi)$.

\begin{example}\label{ex:syntax}
Consider the following program $\Pi_1$:
\begin{equation*} 
\begin{array}{@{~}l@{\qquad}l@{\qquad}l@{\qquad}l@{\qquad}l}
 x_1 \leftarrow \naf\naf x_1 & \!x_2 \leftarrow \naf\naf x_2 & \!y_1 \leftarrow \mathit{unequal} & \!y_2 \leftarrow \mathit{unequal} & \!\bot \leftarrow \naf \mathit{unequal} \\
 \multicolumn{5}{@{~}l}{\mathit{unequal} \leftarrow \SUM[1 : x_1, 2 : x_2, 2 : y_1, 3 : y_2] \neq 5}
\end{array}
\end{equation*} 
As will be clarified after defining the notion of a stable model, $\Pi_1$ encodes the instance of Generalized Subset Sum introduced in Section~\ref{sec:intro}.
\end{example}

An \emph{interpretation} $I$ is a set of propositional atoms such that $\bot \notin I$.
Relation $\models$ is inductively defined as follows:
\begin{itemize}
\item for $p \in \V$, $I \models p$ if $p \in I$;
\item $I \models \naf l$ if $I \not\models l$;
\item $I \models \SUM[w_1 : l_1, \ldots, w_n : l_n] \odot b$ if $\sum_{i \in [1..n],  I \models l_i} w_i \odot b$;
\item $I \models \AVG[w_1 : l_1, \ldots, w_n : l_n] \odot b$ if $m := |\{i \in [1..n] \mid I \models l_i\}|$, $m \geq 1$, and $\sum_{i \in [1..n],  I \models l_i} \frac{w_i}{m} 
       \odot b$;
\item $I \models \MIN[w_1 : l_1, \ldots, w_n : l_n] \odot b$ if $\min(\{w_i \mid i \in [1..n],  I \models l_i\} \cup \{+\infty\}) \odot b$;
\item $I \models \MAX[w_1 : l_1, \ldots, w_n : l_n] \odot b$ if $\max(\{w_i \mid i \in [1..n], I \models l_i\} \cup \{-\infty\}) \odot b$;
\item $I \models \COUNT[l_1, \ldots, l_n] \odot b$ if $|\{i \in [1..n] \mid I \models l_i\}| \odot b$;
\item $I \models \EVEN[l_1, \ldots, l_n]$ if $|\{i \in [1..n] \mid I \models l_i\}|$ is an even number;
\item $I \models \ODD[l_1, \ldots, l_n]$ if $|\{i \in [1..n] \mid I \models l_i\}|$ is an odd number;
\item for a rule $r$ of the form (\ref{eq:rule}), $I \models B(r)$ if $I \models l_i$ for all $i \in [1..n]$, and $I \models r$ if $H(r) \cap I \neq \emptyset$ when $I \models B(r)$;
\item for a program $\Pi$, $I \models \Pi$ if $I \models r$ for all $r \in \Pi$.
\end{itemize}

For any expression $\pi$, if $I \models \pi$, we say that
$I$ is a \emph{model} of $\pi$,
$I$ satisfies $\pi$, or 
$\pi$ is true in~$I$.
In the following, $\top$ will be a shorthand for $\naf \bot$, i.e., $\top$ is a literal true in all interpretations.

\begin{example}\label{ex:models}
Continuing with Example~\ref{ex:syntax}, the models of $\Pi_1$, restricted to the atoms in $\At(\Pi_1)$, are
$X$, $X \cup \{x_1\}$, $X \cup \{x_2\}$, and $X \cup \{x_1,x_2\}$, where $X = \{\mathit{unequal}, y_1, y_2\}$.
\end{example}

The \emph{reduct} of a program $\Pi$ with respect to an interpretation $I$ is obtained by removing rules with false bodies and by fixing the interpretation of all negative literals.
More formally, the following function is inductively defined:
\begin{itemize}
\item for $p \in \V$, $F(I,p) := p$;
\item $F(I,\naf l) := \top$ if $I \not\models l$, 
      and $F(I,\naf l) := \bot$ otherwise;
\item $F(I,\AGGone[w_1 : l_1, \ldots, w_n : l_n] \odot b) := \AGGone[w_1 : F(I,l_1), \ldots, w_n : F(I,l_n)] \odot b$;
\item $F(I,\COUNT[l_1, \ldots, l_n] \odot b) := \COUNT[F(I,l_1), \ldots, F(I,l_n)] \odot b$;
\item $F(I,\AGGtwo[l_1, \ldots, l_n]) := \AGGtwo[F(I,l_1), \ldots, F(I,l_n)]$;
\item for a rule $r$ of the form (\ref{eq:rule}), $F(I,r) := p_1 \vee \cdots \vee p_m \leftarrow F(I,l_1) \wedge \cdots \wedge F(I,l_n)$;
\item for a program $\Pi$, $F(I,\Pi) := \{F(I,r) \mid r \in \Pi,  I \models B(r)\}$.
\end{itemize}
Program $F(I,\Pi)$ is the reduct of $\Pi$ with respect to $I$.
An interpretation $I$ is a \emph{stable model} of a program $\Pi$ if $I \models \Pi$ and there is no $J \subset I$ such that $J \models F(I,\Pi)$.
Let $\SM(\Pi)$ denote the set of stable models of $\Pi$.
Two programs $\Pi,\Pi'$ are equivalent with respect to a context $V \subseteq \V$, denoted $\Pi \equiv_V \Pi'$, if both $|\SM(\Pi)| = |\SM(\Pi')|$ and $\{I \cap V \mid I \in \SM(\Pi)\} = \{I \cap V \mid I \in \SM(\Pi')\}$.
An aggregate $A$ is \emph{monotone} (in program reducts) if $J \models F(I,A)$ implies $K \models F(I,A)$, for all $J \subseteq K \subseteq I \subseteq \V$, and it is \emph{convex} (in program reducts) if $J \models F(I,A)$ and $L \models F(I,A)$ implies $K \models F(I,A)$, for all $J \subseteq K \subseteq L \subseteq I \subseteq \V$;
when either property applies, $I\models A$ and $J \models F(I,A)$
yield $K \models F(I,A)$,
for all $J \subseteq K \subseteq I$. 

\begin{example}\label{ex:stable}
Continuing with Example~\ref{ex:models}, the only stable model of $\Pi_1$ is $\{x_1, \mathit{unequal}, y_1, y_2\}$.
Indeed, the reduct $F(\{x_1, \mathit{unequal}, y_1, y_2\},\Pi_1)$ is 
\begin{equation*} 
\begin{array}{l@{\qquad}l@{\qquad}l} 
 x_1 \leftarrow \top & 
 y_1 \leftarrow \mathit{unequal} & 
 y_2 \leftarrow \mathit{unequal}\\
 \multicolumn{3}{l}{\mathit{unequal} \leftarrow \SUM[1 : x_1, 2 : x_2, 2 : y_1, 3 : y_2] \neq 5}
\end{array}
\end{equation*} 
and no strict subset of $\{x_1, \mathit{unequal}, y_1, y_2\}$ is a model of the above program.
On the other hand, the reduct $F(\{x_2, \mathit{unequal}, y_1, y_2\},\Pi_1)$ is 
\begin{equation*} 
\begin{array}{l@{\qquad}l@{\qquad}l} 
 x_2 \leftarrow \top & 
 y_1 \leftarrow \mathit{unequal} & 
 y_2 \leftarrow \mathit{unequal}\\
 \multicolumn{3}{l}{\mathit{unequal} \leftarrow \SUM[1 : x_1, 2 : x_2, 2 : y_1, 3 : y_2] \neq 5}
\end{array}
\end{equation*} 
and $\{x_2,y_2\}$ is a model of the above program.
Similarly, it can be checked that $\{\mathit{unequal},\linebreak[1] y_1, y_2\}$ and $\{x_1,x_2, \mathit{unequal}, y_1, y_2\}$ are not stable models of $\Pi_1$.
Further note that the aggregate $\SUM[1 : x_1, 2 : x_2, 2 : y_1, 3 : y_2] \neq 5$ 
is non-convex.
The aggregate is also recursive, or not stratified, a notion that will be formalized later in Section~\ref{sec:refined}.
\end{example}

\section{Compilation}\label{sec:compilation}

Current ASP solvers (as opposed to grounders) only accept a limited set of aggregates, essentially aggregates of the form (\ref{eq:agg}) such that \AGGone is \SUM, $b,w_1,\ldots,w_n$ are non-negative integers, and $\odot$ is $\geq$.
The corresponding class of programs will be referred to as \lparse-like programs.
Hence, compilations from the general language are required.
More formally, what is needed is a polynomial-time computable function associating every program $\Pi$ with an \lparse-like program $\Pi'$ such that $\Pi \equiv_{\At(\Pi)} \Pi'$.
To define such a translation is nontrivial, and indeed most commonly used rewritings that are correct in the stratified case are unsound for recursive aggregates.

\begin{example}\label{ex:wrong}
Consider program $\Pi_1$ from Example~\ref{ex:syntax} and the following program $\Pi_2$, often used as an intermediate step to obtain an \lparse-like program:
\begin{equation*} 
\begin{array}{@{~}l@{\qquad}l@{\qquad}l@{\qquad}l@{\qquad}l} 
 x_1 \leftarrow \naf\naf x_1 & 
 \!x_2 \leftarrow \naf\naf x_2 & 
 \!y_1 \leftarrow \mathit{unequal} & 
 \!y_2 \leftarrow \mathit{unequal} & 
 \!\bot \leftarrow \naf \mathit{unequal} \\
 \multicolumn{5}{@{~}l}{\mathit{unequal} \leftarrow \SUM[1 : x_1, 2 : x_2, 2 : y_1, 3 : y_2] > 5} \\
 \multicolumn{5}{@{~}l}{\mathit{unequal} \leftarrow \SUM[1 : x_1, 2 : x_2, 2 : y_1, 3 : y_2] < 5}
\end{array}
\end{equation*} 
The two programs only minimally differ:
the last rule of $\Pi_1$ is replaced by two rules in~$\Pi_2$, following the intuition that the original aggregate is true in an interpretation $I$ if and only if either $I \models \SUM[1 : x_1, 2 : x_2, 2 : y_1, 3 : y_2] > 5$ or $I \models \SUM[1 : x_1, 2 : x_2, 2 : y_1,\linebreak[1] 3 : y_2] < 5$.
However, the two programs are not equivalent.
Indeed, it can be checked that $\Pi_2$ has no stable model, and in particular $\{x_1, \mathit{unequal}, y_1, y_2\}$ is not stable because $F(\{x_1, \mathit{unequal}, y_1, y_2\},\Pi_2)$ is 
\begin{equation*} 
\begin{array}{l@{\qquad}l@{\qquad}l} 
 x_1 \leftarrow \top & 
 y_1 \leftarrow \mathit{unequal} & 
 y_2 \leftarrow \mathit{unequal} \\
 \multicolumn{3}{l}{\mathit{unequal} \leftarrow \SUM[1 : x_1, 2 : x_2, 2 : y_1, 3 : y_2] > 5}
\end{array}
\end{equation*} 
and $\{x_1\}$ is one of its models.
\end{example}

Also replacing negative integers may change the semantics of 
programs.

\begin{example}\label{ex:wrong:2}
Let $\Pi_3 := \{p \leftarrow \SUM[1 : p, -1 : q] \geq 0,\ p \leftarrow q,\ q \leftarrow p\}$.
Its only stable model is $\{p,q\}$.
The negative integer is usually removed by means of a rewriting adapted from pseudo-Boolean constraint solvers, which replaces each element $w : l$ in (\ref{eq:agg}) such that $w < 0$ by 
$-w : \naf l$, and also adds $-w$ to $b$.
The resulting program in the example is $\{p \leftarrow \SUM[1 : p,\linebreak[1] 1 : \naf q] \geq 1,\linebreak[1] p \leftarrow q,\ q \leftarrow p\}$, which has no stable models.
\end{example}

Actually, stable models cannot be preserved in general by rewritings such as those hinted in the above examples unless the polynomial hierarchy collapses to its first level.
In fact, while checking the existence of a stable model is $\Sigma^P_2$-complete for programs with atomic heads, this problem is in NP for \lparse-like programs with atomic heads, and disjunction is necessary for modeling $\Sigma^P_2$-hard instances.
It follows that, in order to be correct, a polynomial-time compilation must possibly introduce disjunction when rewriting recursive programs.
This intuition is formalized in Section~\ref{sec:rewriting}.
Before, in Section~\ref{sec:tosum}, the structure of input programs is simplified by mapping all aggregates to conjunctions of sums, where comparison operators are either
\ifInput
$>$ or $\neq$.
While $>$ can be viewed as $\geq$ relative to an incremented bound $b+1$,
negative integers as well as $\neq$ constitute the remaining gap to \lparse-like programs.
\else
$\geq$ or $\neq$.
\fi

\subsection{Mapping to sums}\label{sec:tosum}

\ifInput

The notion of strong equivalence \cite{lipeva01a,turner03a,DBLP:journals/tocl/Ferraris11} will be used in this section.
Let $\pi := l_1 \wedge \cdots \wedge l_n$ be a conjunction of literals, for some $n \geq 0$.
A pair $(J,I)$ of interpretations such that $J\subseteq I$ is an \emph{SE-model} of $\pi$ if $I \models \pi$ and $J \models F(I,l_1) \wedge \cdots \wedge F(I,l_n)$.
Two conjunctions $\pi,\pi'$ are \emph{strongly equivalent}, denoted by $\pi \sequiv \pi'$, if they have the same SE-models.
Strong equivalence means that replacing $\pi$ by $\pi'$ preserves the stable models of any logic program.

\begin{proposition}[\citeauthoryear{Lifschitz, Pearce, and Valverde}{Lifschitz
  et~al\mbox{.}}{2001}; \citeauthoryear{Turner}{Turner}{2003}; \citeauthoryear{Ferraris}{Ferraris}{2011}]\label{prop:se}
Let $\pi,\pi'$ be two conjunctions of literals such that $\pi \sequiv \pi'$.
Let $\Pi$ be a program, and $\Pi'$ be the program obtained from $\Pi$ by replacing any occurrence of $\pi$ by $\pi'$.
Then, it holds that $\Pi \equiv_\V \Pi'$ (where $\V$ is the set of all propositional atoms).
\end{proposition}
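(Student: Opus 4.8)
The plan is to reduce the global claim $\Pi \equiv_\V \Pi'$ to a local comparison of the single rule in which $\pi$ is replaced by $\pi'$, and then to exploit the two components of the SE-model condition separately: the ``top'' component $I \models \pi$ governs classical satisfaction and hence membership of a rule in a reduct, while the ``bottom'' component $J \models F(I,\pi)$ governs satisfaction of the reduct itself. Since the context is all of $\V$ and every interpretation satisfies $I \cap \V = I$, the definition of $\equiv_\V$ collapses to plain equality $\SM(\Pi) = \SM(\Pi')$. It therefore suffices to show, for every interpretation $I$, that (a) $I \models \Pi$ iff $I \models \Pi'$, and (b) $J \models F(I,\Pi)$ iff $J \models F(I,\Pi')$ for every $J \subseteq I$; together these make the stability test ``$I \models \Pi$ and no $J \subset I$ satisfies $F(I,\Pi)$'' transfer verbatim between the two programs, which is exactly $\SM(\Pi) = \SM(\Pi')$.

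First I would record an auxiliary fact: $I \models l$ iff $I \models F(I,l)$ for every literal $l$, by a routine induction over the forms in~(\ref{eq:agg}). Atoms and negated literals are immediate from the definition of $F$, and for an aggregate the reduct only rewrites the inner literals, whose truth in $I$ is preserved by the inductive hypothesis, so the aggregate keeps its value in $I$. Extending this to the conjunction $\pi = l_1 \wedge \cdots \wedge l_n$ shows that the diagonal pair $(I,I)$ is an SE-model of $\pi$ exactly when $I \models \pi$. Because $\pi \sequiv \pi'$ means $\pi$ and $\pi'$ share all SE-models, comparing $(I,I)$ yields $I \models \pi \iff I \models \pi'$. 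As $\Pi'$ is obtained by substituting $\pi'$ for an occurrence of $\pi$ in a rule body, this gives $I \models B(r) \iff I \models B(r')$ for the affected rule $r$ (its remaining body literals and its head being untouched), hence $I \models r \iff I \models r'$; rules not containing $\pi$ are identical in both programs. Quantifying over rules proves (a).

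For (b) I would first note that the body equivalence just obtained synchronizes the two reducts: a rule $r$ enters $F(I,\Pi)$ precisely when $I \models B(r)$, which holds iff $I \models B(r')$, so the affected rule is reduced on both sides under the same condition and unaffected rules contribute identically. It then remains to compare the reduced forms when $I \models B(r)$, i.e.\ when $I \models \pi$. In that case, for $J \subseteq I$, the pair $(J,I)$ is an SE-model of $\pi$ iff $J \models F(I,\pi)$, and likewise for $\pi'$ (the required ``top'' condition $I \models \pi'$ holds by the equivalence just proved). Strong equivalence then transfers this directly: $J \models F(I,\pi) \iff J \models F(I,\pi')$. Since the other body literals and the head are unaffected by $F(I,\cdot)$, we conclude $J \models F(I,r) \iff J \models F(I,r')$, and combining rule by rule gives (b).

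The hard part is not any single calculation but keeping the two halves of the SE-model definition aligned with the reduct machinery: the component $I \models \pi$ must be matched against a rule's presence in the reduct (it drives (a) and is the standing hypothesis in (b)), whereas the component $J \models F(I,\pi)$ is what strong equivalence actually moves across once we are inside the reduct. A minor additional point is that, when $\pi$ occurs more than once, the single-occurrence argument must be iterated; this is legitimate because over the context $V = \V$ the relation $\equiv_\V$ is literally equality of stable-model sets, hence transitive.
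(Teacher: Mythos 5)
Your proof is correct. Note that the paper itself gives no proof of this proposition: it is imported by citation from Lifschitz et al.~(2001), Turner~(2003), and Ferraris~(2011), so there is no in-paper argument to compare against, and your obligation was to reconstruct the standard replacement theorem in this paper's specific setting (aggregates, the $F$-reduct). That is exactly what you do, and the two places where the adaptation genuinely requires care are handled: first, the auxiliary fact that $I \models l$ iff $I \models F(I,l)$ does need the induction through aggregates you sketch, because $F$ rewrites the literals \emph{inside} an aggregate and one must check the aggregate's value in $I$ is unchanged; second, you correctly split the SE-model condition into the diagonal pairs $(I,I)$, which yield $I \models \pi \iff I \models \pi'$ and hence synchronize both classical satisfaction and membership of the affected rule in the two reducts (the reduct keeps only rules whose body is true in $I$), and the non-diagonal pairs $(J,I)$, which under the standing hypothesis $I \models \pi$ transfer $J \models F(I,\pi) \iff J \models F(I,\pi')$. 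Your closing observations are also both needed and right: with $V = \V$ the relation $\equiv_\V$ collapses to $\SM(\Pi) = \SM(\Pi')$, and multiple occurrences are dispatched by iterating the single-occurrence step, which is legitimate precisely because that collapsed relation is transitive. The only point worth making explicit is the reading of ``occurrence of $\pi$'': it means an occurrence as a sub-conjunction of a rule body (this is how the paper applies the proposition in its equivalences (A)--(O), e.g.\ replacing a single aggregate literal by a conjunction of two sums), and your factorization $B(r) = \pi \wedge \rho$ with $F$ applied literal-wise covers exactly that case.
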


The following strong equivalences can be proven by showing equivalence with respect to models, and by noting that \naf is neither introduced nor eliminated:
\begin{enumerate}[leftmargin=*,labelsep=2pt,label=(\Alph{enumi})]
\item 
  $\SUM[w_1 : l_1, \ldots, w_n : l_n] < b \sequiv \SUM[-w_1 : l_1, \ldots, -w_n : l_n] > -b$
\item 
  $\SUM[w_1 : l_1, \ldots, w_n : l_n] \leq b \sequiv \SUM[-w_1 : l_1, \ldots, -w_n : l_n] > -b-1$
\item 
  $\SUM[w_1 : l_1, \ldots, w_n : l_n] \geq b \equiv_{SE} \SUM[w_1 : l_1, \ldots, w_n : l_n] > b-1$
\item 
  $\begin{array}[t]{@{}l@{}l@{}}
   \SUM[w_1 : l_1, \ldots, w_n : l_n] = b \sequiv {} &             
   \SUM[w_1 : l_1, \ldots, w_n : l_n] > b-1 \wedge {} \\ & 
   \SUM[-w_1 : l_1, \ldots, -w_n : l_n] > -b-1
   \end{array}$
\end{enumerate}
For instance, given an interpretation~$I$, (A) is based on the fact that
$\sum_{i \in [1..n], I \models l_i} w_i < b$
if and only if
$\sum_{i \in [1..n], I \models l_i} -w_i > -b$, so that
$I \models \SUM[w_1 : l_1, \ldots, w_n : l_n] < b$ if and only if
$I \models \SUM[-w_1 : l_1, \ldots, -w_n : l_n] > -b$.
Similar observations apply to (B)--(D),
and strong equivalences as follows hold for further aggregates:
\begin{enumerate}[leftmargin=*,labelsep=2pt,label=(\Alph{enumi})]\setcounter{enumi}{4}
\item 
  $\begin{array}[t]{@{}l@{}l@{}}
   \AVG[w_1 : l_1, \ldots, w_n : l_n] \odot b \sequiv {} & 
   \SUM[w_1-b : l_1, \ldots, w_n-b : l_n] \odot 0 \wedge {} \\ & 
   \SUM[1:l_1, \ldots, 1:l_n] > 0
   \end{array}$
\item 
  $\MIN[w_1 : l_1, \ldots, w_n : l_n] < b \sequiv \SUM[1:l_i \mid i \in [1..n], w_i < b] > 0$
\item 
  $\MIN[w_1 : l_1, \ldots, w_n : l_n] \leq b \sequiv \SUM[1:l_i \mid i \in [1..n], w_i \leq b] > 0$
\item 
  $\MIN[w_1 : l_1, \ldots, w_n : l_n] \geq b \sequiv \SUM[-1:l_i \mid i \in [1..n], w_i < b] > -1$
\item 
  $\MIN[w_1 : l_1, \ldots, w_n : l_n] > b \sequiv \SUM[-1:l_i \mid i \in [1..n], w_i \leq b] > -1$
\item 
  $\MIN[w_1 : l_1, \ldots, w_n : l_n] = b \sequiv \SUM[1-n\cdot(b-w_i) : l_i \mid i \in [1..n], w_i \leq b] > 0$
\item 
  $\MIN[w_1 : l_1, \ldots, w_n : l_n] \neq b \sequiv \SUM[n\cdot(b-w_i)-1 : l_i \mid i \in [1..n], w_i \leq\nolinebreak b] >\nolinebreak -1$
\item 
  $\MAX[w_1 : l_1, \ldots, w_n : l_n] \odot b \sequiv \MIN[-w_1 : l_1, \ldots, -w_n : l_n] \ f(\odot)\ {-b}$\\ 
  where $<{} \stackrel{f}{\mapsto} {}>$, $\leq{} \stackrel{f}{\mapsto} {}\geq$, $\geq{} \stackrel{f}{\mapsto} {}\leq$, $>{} \stackrel{f}{\mapsto} {}<$, $={} \stackrel{f}{\mapsto} {}=$, and $\neq{} \stackrel{f}{\mapsto} {}\neq$
\item 
  $\COUNT[l_1, \ldots, l_n] \odot b \sequiv \SUM[1 : l_1, \ldots, 1 : l_n] \odot b$
\item 
  $\EVEN[l_1, \ldots, l_n] \sequiv \bigwedge_{i \in [1..\left\lceil n/2 \right\rceil]} \SUM[1:l_1, \ldots, 1:l_n] \neq 2 \cdot i - 1$
\item 
  $\ODD[l_1, \ldots, l_n] \sequiv \bigwedge_{i \in [0..\left\lfloor n/2 \right\rfloor]} \SUM[1:l_1, \ldots, 1:l_n] \neq 2 \cdot i$
\end{enumerate}
Given a program $\Pi$, the successive application of (A)--(O),
from the last to the first, gives an equivalent program $\Pi'$
whose aggregates are sums with comparison operators $>$ and $\neq$.

\begin{example}
Let $\Pi_4 := \{p \vee q \leftarrow,\ p \leftarrow \AVG[5 : p, 3 : p, 2 : q, 7 : q] \geq 4\}$.
By applying~(E), the aggregate becomes
$\SUM[1 : p, -1 : p, -2 : q, 3 : q] \geq 0 \wedge 
 \SUM[1 : p,  1 : p,  1 :\nolinebreak q, 1 :\nolinebreak q] >\nolinebreak 0$,
and an application of (C) yields
$\SUM[1 : p, -1 : p, -2 : q, 3 : q] > -1 \wedge\linebreak[1]
 \SUM[1 : p,  1 : p,\linebreak[1] 1 : q,\linebreak[1] 1 : q] > 0$.
Simplifying the latter expression leads to the program
$\Pi_4' := \{p \vee q \leftarrow,\linebreak[1] p \leftarrow \SUM[1 : q] > -1 \wedge \SUM[2 : p, 2 : q] > 0\}$.
Note that $\{p\}$ is the unique stable model of both $\Pi_4$ and $\Pi_4'$,
so that $\Pi_4\equiv_{\{p,q\}}\Pi_4'$.
\end{example}

\subsection{Eliminating non-monotone aggregates}\label{sec:rewriting}

The structure of input programs can be further simplified by eliminating non-monotone aggregates.
Without loss of generality, we hereinafter assume aggregates to be of the form
\begin{equation}\label{eq:aggregate-normal}
  \SUM[w_1 : l_1, \ldots, w_n : l_n] \odot b
\end{equation}
such that $\odot \in \{>,\neq\}$.
For $A$ of the form~(\ref{eq:aggregate-normal}),
by $\lit(A):=\{l_1,\ldots,l_n\}\setminus\{\bot\}$,
we refer to the set of propositional literals (different from $\bot$) occurring in~$A$.
Moreover, let $\wt(l,A):=\sum_{i \in [1..n], l_i=l}w_i$
denote the weight of any $l\in\lit(A)$.
We write
$\wlita(A) := [\wt(l,A):l \mid\linebreak[1] l\in\lit(A),\linebreak[1] \wt(l,A)\neq 0]$,
$\wlitp(A) := [\wt(l,A):l \mid\linebreak[1] l\in\lit(A),\linebreak[1] \wt(l,A)> 0]$, and
$\wlitn(A) := [\wt(l,A):l \mid\linebreak[1] l\in\lit(A),\linebreak[1] \wt(l,A)< 0]$
to distinguish the (multi)sets of literals
associated with non-zero, positive, or negative weights, respectively, in~$A$.
For instance, letting $A:=\SUM[1 : p, -1 : p, -2 : q, 3 : q] > -1$,
we have that $\wlita(A)=\wlitp(A)=[1:q]$ and $\wlitn(A)=[]$.
In the following, we call an aggregate~$A$ of the form (\ref{eq:aggregate-normal})
non-monotone if $\{p\in \V \mid (w:p)\in\wlitn(A)\}\neq\emptyset$, or if $\odot$ is $\neq$,
thus disregarding special cases in which $A$ would still be monotone or convex.
(The rewritings presented below are correct also in such cases,
but they do not exploit the particular structure of an aggregate for avoiding
the use of disjunction in rule heads.)

For an aggregate~$A$ of the form~(\ref{eq:aggregate-normal})
such that $\odot$ is $>$ and a set $V\subseteq\V$ of atoms,
we define a rule with a fresh propositional atom $\aux$ as head and a
monotone aggregate as body by:
\begin{equation}\label{eq:aggregate:rewrite}
  \aux \leftarrow \SUM\left(
                       \begin{array}{@{}l@{}}
                       \wlitp(A) 
                       \cup {} \\{}
                       [-w:p^F \mid (w:p)\in\wlitn(A),p\in V]
                       \cup {} \\{}
                       [-w :\naf l \mid (w:l)\in\wlitn(A),l\notin V]
                       \end{array}
                      \right)
                  > b - \sum_{(w:l)\in\wlitn(A)}w
\end{equation}
Note that (\ref{eq:aggregate:rewrite}) introduces a fresh,
hidden propositional atom $p^F$ \cite{eitowo05a,jannie12a}
for any $p\in V$ associated with a negative weight in~$A$.
However, when $V=\emptyset$, every $(w:l)\in\wlitn(A)$
is replaced by $-w:\naf l$,
thus rewarding the falsity of~$l$ rather than penalizing~$l$,
which is in turn compensated by adding $-w$ to the bound~$b$;
such a replacement 
preserves models \cite{DBLP:journals/ai/SimonsNS02},
but in general not stable models \cite{DBLP:journals/tplp/FerrarisL05}.
By $\mon(A,V)$, we denote the program including rule~(\ref{eq:aggregate:rewrite}) along with the
following rules for every $p\in V$ such that $(w:\nolinebreak p)\in\wlitn(A)$:
\begin{align}
       p^F & {} \leftarrow \naf p \label{eq:pF:1} \\
       p^F & {} \leftarrow \aux \label{eq:pF:2} \\
p \vee p^F & {} \leftarrow \naf\naf \aux \label{eq:pF:3}
\end{align}
Intuitively, any atom $p^F$ introduced in $\mon(A,V)$ must be true whenever $p$ is false,
but also when $\aux$ is true, so to implement the concept of \emph{saturation}
\cite{eitgot95a}.
Rules (\ref{eq:pF:1}) and (\ref{eq:pF:2}) encode such an intuition.
Moreover, rule (\ref{eq:pF:3}) guarantees that at least one of $p$ and $p^F$
belongs to any model of reducts obtained from interpretations~$I$ containing $\aux$.
In fact, $p^F$ represents the falsity of~$p$
in the reduct of rule~(\ref{eq:aggregate:rewrite})
with respect to~$I$ in order to test the satisfaction of the monotone aggregate in~(\ref{eq:aggregate:rewrite})
relative to subsets of~$I$.
For a program~$\Pi$,
the rewriting $\rew(\Pi,A,V)$ is the union of $\mon(A,V)$ and the program
obtained from~$\Pi$ by replacing any occurrence of~$A$ by $\aux$. 
That is, $\rew(\Pi,A,V)$ eliminates a (possibly) non-monotone aggregate~$A$
with comparison operator $>$ in favor of a monotone
aggregate and disjunction within the subprogram $\mon(A,V)$.
In this section, we further rely on $V=\nolinebreak\V$,
i.e., saturation is applied to all atoms associated with negative weights in~$A$,
while a refinement based on positive dependencies will be 
provided in the next section.

\begin{example}\label{ex:gt}
Consider~$\Pi_3$ from Example~\ref{ex:wrong:2}
whose first rule is strongly equivalent to $p \leftarrow \SUM[1 : p,\linebreak[1] -1 : q] > -1$.
For $A:= \SUM[1 : p,-1 : q] > -1$, the program $\mon(A,\V)$ is as follows:
\begin{equation*}
\begin{array}{l@{\qquad}l@{\qquad}l@{\qquad}l}
\aux      \leftarrow \SUM[1 : p, 1 : q^F] > 0 &
 q^F      \leftarrow \naf q & 
 q^F      \leftarrow \aux &
q\vee q^F \leftarrow \naf\naf \aux
\end{array}
\end{equation*}
Moreover, we obtain 
$\rew(\Pi_3,A,\V) = \mon(A,\V) \cup \{p\leftarrow \aux,\ p \leftarrow q,\ q \leftarrow p\}$
as the full rewriting of~$\Pi_3$ for~$A$ and~$\V$.
One can check that no strict subset of $\{p,q,\aux,q^F\}$ is a model of $\rew(\Pi_3,A,\V)$ or
the reduct $F(\{p,q,\aux,q^F\},\rew(\Pi_3,A,\V))$, respectively,
where the latter includes $q\vee q^F \leftarrow\top$.
In fact, $\SM(\rew(\Pi_3,A,\V))=\{\{p,q,\aux,q^F\}\}$ and
$\SM(\Pi_3)=\{\{p,q\}\}$
yield that $\Pi_3 \equiv_{\{p,q\}} \rew(\Pi_3,A,\V)$.
\end{example}

We further extend the rewriting to an aggregate
$A := \SUM[w_1 : l_1, \ldots, w_n : l_n] \neq b$ by considering two cases
based on splitting~$A$ into
$\gtA := \SUM[w_1 : l_1, \ldots, w_n : l_n] > b$ and
$\ltA := \SUM[-w_1 : l_1, \ldots, -w_n : l_n] > -b$.
While $\gtA$ is true in any interpretation~$I$ such that
$\sum_{i \in [1..n], I \models l_i} w_i > b$,
in view of the strong equivalence given in~(A),
$I$ satisfies $\ltA$ if and only if
$\sum_{i \in [1..n], I \models l_i} w_i < b$.
For a program~$\Pi$ and $V\subseteq\V$,
we let $\mon(A,V):=\mon(\gtA,V)\cup\mon(\ltA,V)$, and
the rewriting
$\rew(\Pi,A,V)$ is the union of
$\mon(A,V)$ and the program
obtained from~$\Pi$ by replacing any occurrence of~$A$ by $\aux$,
where the fresh propositional atom $\aux$ serves as the head of rules of the form~(\ref{eq:aggregate:rewrite})
in both $\mon(\gtA,V)$ and $\mon(\ltA,V)$.
Note that $\mon(A,V)$ 
also introduces fresh propositional atoms~$p^F$
for any $p\in V$ such that $(w:p)\in\wlita(A)$.
Again, an atom~$p^F$ represents the falsity of~$p$
in the reduct of rule~(\ref{eq:aggregate:rewrite}) from 
either $\mon(\gtA,V)$ or $\mon(\ltA,V)$ with respect to
interpretations~$I$ containing $\aux$,
which allows for testing the satisfaction of monotone counterparts of
$\gtA$ and $\ltA$ relative to subsets of~$I$.

\begin{example}\label{ex:rew}
Consider program $\Pi_1$ from Example~\ref{ex:syntax},
and let
$A:=\SUM[1 :\nolinebreak x_1,\linebreak[1] 2 : x_2, 2 : y_1, 3 : y_2] \neq 5$.
Then, we obtain the following rewriting $\rew(\Pi_1,A,\V)$:
\begin{equation*} 
\begin{array}{llll}
 x_1 \leftarrow \naf\naf x_1 & x_2 \leftarrow \naf\naf x_2 & y_1 \leftarrow \mathit{unequal} & y_2 \leftarrow \mathit{unequal} \\
 \bot \leftarrow \naf \mathit{unequal} & \multicolumn{3}{l}{\aux \leftarrow \SUM[1 : x_1, 2 : x_2, 2 : y_1, 3 : y_2] > 5} \\ 
 \mathit{unequal} \leftarrow \aux & \multicolumn{3}{l}{\aux \leftarrow \SUM[1 : x_1^F, 2 : x_2^F, 2 : y_1^F, 3 : y_2^F] > 3} \\ 
 \phantom{x_1\vee{}}x_1^F \leftarrow \naf x_1 & \phantom{x_2\vee{}}x_2^F \leftarrow \naf x_2 & \phantom{y_1\vee{}}y_1^F \leftarrow \naf y_1 & \phantom{y_2\vee{}}y_2^F \leftarrow \naf y_2 \\
 \phantom{x_1\vee{}}x_1^F \leftarrow \aux     & \phantom{x_2\vee{}}x_2^F \leftarrow \aux     & \phantom{y_1\vee{}}y_1^F \leftarrow \aux     & \phantom{y_2\vee{}}y_2^F \leftarrow \aux \\
 x_1 \vee x_1^F \leftarrow \naf\naf \aux & x_2 \vee x_2^F \leftarrow \naf\naf \aux &  y_1 \vee y_1^F \leftarrow \naf\naf \aux & y_2 \vee y_2^F \leftarrow \naf\naf \aux
\end{array}
\end{equation*} 
The only stable model of $\rew(\Pi_1,A,\V)$ is
$\{x_1, \mathit{unequal}, y_1, y_2, \aux, x_1^F, x_2^F, y_1^F, y_2^F\}$.
In particular,
note that $x_1\leftarrow\top$, $x_2^F\leftarrow\top$,
$y_1 \vee y_1^F \leftarrow\top$, and $y_2 \vee y_2^F \leftarrow\top$
belong to the reduct,
and any choice between $y_1$ and $y_1^F$ as well as $y_2$ and $y_2^F$
leads to the satisfaction of
$\SUM[1 : x_1, 2 : x_2, 2 : y_1, 3 : y_2] > 5$ or
$\SUM[1 : x_1^F, 2 : x_2^F, 2 : y_1^F, 3 : y_2^F] > 3$ along with saturation.
As a consequence, 
$\Pi_1\equiv_{\{x_1,x_2,\mathit{unequal},y_1,y_2\}}\rew(\Pi_1,A,\V)$.
\end{example}

The subprogram $\mon(A,V)$
for $A$ of the form~(\ref{eq:aggregate-normal}) such that $\odot \in \{>,\neq\}$
and $V\subseteq\nolinebreak\V$ is \lparse-like.
Moreover, the rewriting $\rew(\Pi,A,V)$ can be iterated to eliminate all
non-monotone aggregates~$A$ from~$\Pi$.
Thereby, it is important to note that fresh propositional atoms introduced in $\mon(A_1,V_1)$ and $\mon(A_2,V_2)$
for $A_1\neq A_2$ are distinct.
As hinted in the above examples,
$\rew(\Pi,A,\V)$ preserves stable models of~$\Pi$,
which extends to an iterated elimination of aggregates.
Before formalizing respective properties in Section~\ref{sec:properties}, however,
we refine $\rew(\Pi,A,V)$ to subsets~$V$ of~$\V$
based on positive dependencies 
in~$\Pi$.

\subsection{Refined rewriting}\label{sec:refined}

Given a program~$\Pi$ such that all aggregates in~$\Ag(\Pi)$ are of the form~(\ref{eq:aggregate-normal})
for $\odot \in \{>,\neq\}$,
the (positive) \emph{dependency graph} $\G_\Pi$ of~$\Pi$
consists of the vertices $\At(\Pi) \cup \Ag(\Pi)$
and (directed) arcs $(\alpha,\beta)$ if either of the following conditions
holds for $\alpha,\beta\in\At(\Pi) \cup \Ag(\Pi)$:%
\begin{itemize}
\item there is a rule $r \in \Pi$ such that $\alpha \in H(r)$ and $\beta \in B(r)$;
\item $\alpha \in \Ag(\Pi)$ is of the form (\ref{eq:aggregate-normal}) such that $\odot$ is $>$ and $(w:\beta)\in\wlitp(\alpha)$;
\item $\alpha \in \Ag(\Pi)$ is of the form (\ref{eq:aggregate-normal}) such that $\odot$ is $\neq$ and $(w:\beta)\in\wlita(\alpha)$.
\end{itemize}
That is, $\G_\Pi$ includes arcs from atoms in~$H(r)$ to positive literals in $B(r)$
for rules $r\in\nolinebreak\Pi$, and from aggregates $A\in\Ag(\Pi)$ to atoms associated with a
positive or non-zero weight in~$A$ if the comparison operator of~$A$ is $>$ or $\neq$,
respectively.
A strongly connected component of $\G_\Pi$, also referred to as \emph{component} of $\Pi$, is a maximal subset~$C$ of $\At(\Pi) \cup \Ag(\Pi)$ such that any $\alpha\in C$
reaches each $\beta\in C$ via a path in $\G_\Pi$.
The set of propositional atoms in the component of~$\Pi$ containing an aggregate $A \in \Ag(\Pi)$ is denoted by $\rec(\Pi,A)$ (or $\rec(\Pi,A):=\emptyset$ when $A\notin\Ag(\Pi)$).
Then,
the rewriting $\rew(\Pi,A,\rec(\Pi,A))$ 
restricts saturation for fresh propositional atoms~$p^F$ introduced in
$\mon(A,\rec(\Pi,A))$ to atoms $p\in\rec(\Pi,A)$ occurring in~$A$.

\begin{figure}
    \figrule
    \centering
    \begin{minipage}{.35\textwidth}
        \tikzstyle{node} = [rectangle, draw, rounded corners, text centered]
        \tikzstyle{line} = [draw, ->, thick]
        \tikzstyle{dline} = [draw, ->, thick, dashed]
        \tikzstyle{branch} = [draw, rectangle, rotate=45, anchor=center]

        \begin{tikzpicture}
            \node at (0,0) (p) {$p$};
            \node at (2,0) (q) {$q$};
            \node at (0,-1) (A) {$\SUM[1 : p, -1 : q] > -1$};
            
            \path[line] (p) edge[bend right] (A);
            \path[line] (A) edge[bend right] (p);
            \path[line] (p) edge (q);
            \path[dline] (q) edge[bend right] (p);
        \end{tikzpicture}
    \end{minipage}
    \quad
    \begin{minipage}{.525\textwidth}
        \begin{equation*}
            \Pi_3 = \left\{
                \begin{array}{l}
                    p \leftarrow \SUM[1 : p, -1 : q] > -1\\
                    p \leftarrow q\\
                    q \leftarrow p
                \end{array}
            \right.
            \hspace{-2em}
            \begin{array}{l}
                \left.
                    \begin{array}{l}
                        \\
                        \\
                    \end{array}
                \right\} = \Pi_3'\\
                \\
            \end{array}
        \end{equation*}
    \end{minipage}
    \caption{Dependency graphs considered in Example~\ref{ex:positive}: the dashed arc belongs to $\G_{\Pi_3}$, but not to $\G_{\Pi_3'}$.}\label{fig:dep-graph}
    \figrule
\end{figure}
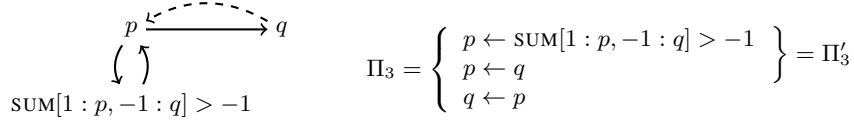
\begin{example}\label{ex:positive}
The dependency graph of  program $\Pi_3$ from Example~\ref{ex:wrong:2} is shown in Fig.~\ref{fig:dep-graph}, where the first rule of $\Pi_3$ is identified
with $p \leftarrow \SUM[1 : p, -1 : q] > -1$.
Let $A$ denote the aggregate $\SUM[1 : p, -1 : q] > -1$.
First of all, note that there is no arc connecting $A$ to $q$ because $(w:q) \notin \wlitp(A)$.
However, $A$ reaches $q$ in $\G_{\Pi_3}$ via $p$, and since also $q$ reaches $A$ via~$p$, we have that $\rec(\Pi_3,A)=\{p,q\}$, and thus $\rew(\Pi_3,A,\rec(\Pi_3,A))=\rew(\Pi_3,A,\V)$.

Now consider $\Pi_3':=\Pi_3\setminus\{q \leftarrow p\}$, whose dependency graph is obtained by removing arc $(q,p)$ from $\G_{\Pi_3}$, i.e., the dashed arc in Fig.~\ref{fig:dep-graph}.
Note that $q$ does not reach $A$ in $\G_{\Pi_3'}$, and therefore $\rec(\Pi_3',A)=\{p\}$.
In this case, $\rew(\Pi_3',A,\rec(\Pi_3',A))=\{\aux\leftarrow \SUM[1 :\nolinebreak p,\linebreak[1] 1 : \naf q] >\nolinebreak 0,\linebreak[1] p \leftarrow\nolinebreak \aux,$ $p\leftarrow q\}$,
where $\SM(\rew(\Pi_3',A,\rec(\Pi_3',A)))=\{\{p,\aux\}\}$ and
$\SM(\Pi_3')=\{\{p\}\}$ yield that
$\Pi_3'\equiv_{\{p,q\}}\rew(\Pi_3',A,\rec(\Pi_3',A))$.
\end{example}

\begin{example}\label{ex:component}
Program $\Pi_1$ from Example~\ref{ex:syntax} has the components $\{x_1\}$, $\{x_2\}$, and 
$\{\mathit{unequal},y_1,y_2,A\}$ for $A := \SUM[1 : x_1; 2 : x_2; 2 : y_1; 3 : y_2] \neq 5$.
Thus, $\rew(\Pi_1,A,\rec(\Pi_1,A))$ comprises the following rules:
\begin{equation*} 
\begin{array}{lll@{\qquad}l}
 x_1 \leftarrow \naf\naf x_1 & x_2 \leftarrow \naf\naf x_2 & y_1 \leftarrow \mathit{unequal} & y_2 \leftarrow \mathit{unequal} \\
 \bot \leftarrow \naf \mathit{unequal} & \multicolumn{3}{l}{\aux \leftarrow \SUM[1 : x_1, 2 : x_2, 2 : y_1, 3 : y_2] > 5} \\ 
 \mathit{unequal} \leftarrow \aux & \multicolumn{3}{l}{\aux \leftarrow \SUM[1 : \naf x_1, 2 : \naf x_2, 2 : y_1^F, 3 : y_2^F] > 3} \\ 
 \phantom{y_1\vee{}}y_1^F \leftarrow \naf y_1 & \phantom{y_2\vee{}}y_2^F \leftarrow \naf y_2 \\
 \phantom{y_1\vee{}}y_1^F \leftarrow \aux     & \phantom{y_2\vee{}}y_2^F \leftarrow \aux \\
 y_1 \vee y_1^F \leftarrow \naf\naf \aux & y_2 \vee y_2^F \leftarrow \naf\naf \aux
\end{array}
\end{equation*} 
In contrast to $\rew(\Pi_1,A,\V)$ in Example~\ref{ex:rew},
$x_1$ and~$x_2$ are mapped to $\naf x_1$ and $\naf x_2$,
rather than $x_1^F$ and $x_2^F$, in the rule
$\aux \leftarrow \SUM[1 : \naf x_1, 2 : \naf x_2, 2 : y_1^F, 3 : y_2^F] > 3$
from $\mon(\SUM[-1 : x_1, -2 : x_2, -2 : y_1, -3 : y_2] > -5,\rec(\Pi_1,A))$.
Hence, the reduct of $\rew(\Pi_1,A,\rec(\Pi_1,A))$
with respect to $\{x_1, \mathit{unequal}, y_1, y_2, \aux, y_1^F, y_2^F\}$
includes $\aux \leftarrow \SUM[1 : \bot, 2 : \top, 2 : y_1^F, 3 : y_2^F] > 3$
as well as $x_1\leftarrow\top$,
$y_1 \vee y_1^F \leftarrow\top$, and $y_2 \vee y_2^F \leftarrow\top$.
As a consequence, any model containing $y_1^F$ or $y_2^F$ entails $\aux$,
and $\aux \leftarrow \SUM[1 : x_1, 2 : x_2, 2 : y_1, 3 : y_2] > 5$
yields $\aux$ when $y_1$ and $y_2$ are both true.
In fact, $\{x_1, \mathit{unequal}, y_1, y_2, \aux, y_1^F, y_2^F\}$
is the only stable model of $\rew(\Pi_1,A,\rec(\Pi_1,A))$,
so that $\Pi_1\equiv_{\{x_1,x_2,\mathit{unequal},y_1,y_2\}}\rew(\Pi_1,A,\rec(\Pi_1,A))$.
\end{example}

The above examples illustrate that saturation can be restricted to
atoms~$p$ sharing the same component of~$\Pi$ with an aggregate~$A$,
where a fresh propositional atom~$p^F$ is introduced in 
$\mon(A,\rec(\Pi,A))$ when $p$ has a
negative or non-zero weight in~$A$,
depending on whether the comparison operator of~$A$ is $>$ or $\neq$, respectively.
That is, the refined rewriting uses disjunction only if $A$ is
a recursive non-monotone aggregate.
In turn, when $A$ is non-recursive or stratified \cite{DBLP:journals/ai/FaberPL11},
the corresponding subprogram $\mon(A,\emptyset)$
does not introduce disjunction or 
any fresh propositional atom different from $\aux$.

\subsection{Properties}\label{sec:properties}

Our first result generalizes a property of models
of reducts to programs with aggregates.
\begin{restatable}{proposition}{PropComponent}\label{PropComponent}
Let $\Pi$ be a program, $I$ be a model of~$\Pi$, and $J\subset I$ be a model of $F(I,\Pi)$.
Then, there is some component~$C$ of~$\Pi$ such that
$I\cap (C\setminus J)\neq\emptyset$ and
$I\setminus (C\setminus J)\models F(I,\Pi)$.
\end{restatable}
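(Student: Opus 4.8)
Throughout I work inside the reduct $F(I,\Pi)$, and I assume $I\subseteq\At(\Pi)$, since atoms not occurring in $\Pi$ are irrelevant to satisfaction. Two observations set the stage. First, the rules of $F(I,\Pi)$ are exactly $F(I,r)$ for the rules $r\in\Pi$ with $I\models B(r)$; each such rule keeps its original head, every negated body literal has collapsed to $\top$ (precisely because $I\models B(r)$), and the remaining body literals are plain atoms and transformed aggregates $F(I,A)$. Second, writing $D:=I\setminus J$ (non-empty since $J\subset I$), one has $I\setminus(C\setminus J)=I\setminus(D\cap C)$ and $I\cap(C\setminus J)=D\cap C$ for every component $C$, so the goal is to find a component $C$ with $D\cap C\neq\emptyset$ such that $I\setminus(D\cap C)\models F(I,\Pi)$.

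The decisive step is the choice of $C$. Among the \emph{active} components, i.e.\ those meeting $D$, I would take one that is minimal for reachability in $\G_\Pi$: a component $C$ from which no other active component is reachable. Such a $C$ exists because reachability is a partial order on the finitely many components. Put $K:=I\setminus(D\cap C)=J\cup(D\setminus C)$, so that $J\subseteq K\subseteq I$ and $I\cap(C\setminus J)=D\cap C\neq\emptyset$ already holds. It remains to show $K\models F(I,\Pi)$, which I attack by contradiction: a violated rule $F(I,r)$ would satisfy $K\models F(I,B(r))$ and $H(r)\cap K=\emptyset$. Since $I\models r$ gives $\emptyset\neq H(r)\cap I\subseteq D\cap C$, some head atom $p$ lies in $H(r)\cap D\cap C$; and since $H(r)\cap J\subseteq H(r)\cap K=\emptyset$ while $J\models F(I,\Pi)$, we get $J\not\models F(I,B(r))$, so $J$ fails some body literal $\ell$ that $K$ satisfies.

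I would then derive a contradiction by proving that $K$ must also fail $\ell$. The atoms on which $J$ and $K$ differ all lie in $K\setminus J=D\setminus C$. There is an arc $(p,\ell)$ when $\ell$ is a plain atom, and for an aggregate $\ell$ every atom $a$ recorded as a successor of $\ell$ in $\G_\Pi$ yields a path $p\to\ell\to a$; by minimality of $C$, no such $a$ can lie in $D\setminus C$ without landing in $C$ itself. For a plain atom $\ell$, and for an aggregate with operator $\neq$ (whose successors are all non-zero-weight atoms), this already forces $\ell$ to take the same value under $J$ and $K$, so $K$ fails $\ell$. For an aggregate with operator $>$ the graph only records the positive-weight atoms, so the disagreeing atoms of $\ell$ are all of weight $\le 0$; being merely \emph{added} in passing from $J$ to $K$, they cannot increase the weighted sum, whence $J\not\models\ell$ gives $K\not\models\ell$. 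In every case $K$ fails $\ell$, contradicting $K\models F(I,B(r))$, so no rule is violated and $K\models F(I,\Pi)$.

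The main obstacle is exactly the last case: because $\G_\Pi$ deliberately omits the negative-weight atoms of a $>$-aggregate, one cannot confine the disagreeing atoms to $C$ by a purely topological argument. The resolution is the monotonicity observation that these omitted atoms carry non-positive weight and only ever switch from false (in $J$) to true (in $K$), so they can only lower a $>$-aggregate's value and never flip it from false to true. This is precisely why the refined positive dependency graph, with its different arc rule for $>$ and $\neq$, is the right notion here. A minor bookkeeping point to keep in mind is that in reduct rules negated literals are already $\top$, so $\ell$ is never a negative literal.
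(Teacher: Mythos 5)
Your proof is correct and takes essentially the same approach as the paper's: you select an active component $C$ that is minimal under reachability in $\G_\Pi$ (equivalent to the paper's choice of a component from which every reachable element of $I\setminus J$ stays inside $C$), and you run the same three-case analysis for a failing body literal (plain atom, $>$-aggregate, $\neq$-aggregate), with the same key observations that non-zero-weight atoms of a $\neq$-aggregate are confined to $C$ while the negative-weight atoms omitted from the arcs of a $>$-aggregate can only decrease the sum when added. The only differences are cosmetic: you argue by contradiction where the paper directly verifies $I\setminus(C\setminus J)\models F(I,\Pi)$, and you make explicit the harmless assumption $I\subseteq\At(\Pi)$ that the paper leaves implicit.
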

In other words, when any strict subset~$J$ of a model~$I$ of~$\Pi$ satisfies $F(I,\Pi)$,
then there is a model~$K$ of $F(I,\Pi)$ such that $J\subseteq K\subset I$ and
$I\setminus K\subseteq C$ for some component~$C$ of~$\Pi$.
For instance, the model $\{x_1\}$ of $F(\{x_1, \mathit{unequal}, y_1, y_2\},\Pi_2)$,
given in Example~\ref{ex:wrong}, is such that 
$\{x_1, \mathit{unequal}, y_1, y_2\}\setminus\{x_1\}\subseteq C$
for the component $C:=\{\mathit{unequal}, y_1, y_2,\linebreak[1]\SUM[1 : x_1,\linebreak[1] 2 : x_2, 2 : y_1, 3 : y_2] > 5\}$
of $\Pi_2$.

For a program~$\Pi$ and~$A$ of the form~(\ref{eq:aggregate-normal}) such that $\odot \in \{>,\neq\}$, 
rewritings $\rew(\Pi,A,\V)$ and $\rew(\Pi,A,\rec(\Pi,A))$
have been investigated above.
In order to establish their correctness,
we show that
$\Pi\equiv_{\At(\Pi)}\rew(\Pi,A,V)$
holds for all subsets~$V$ of~$\V$ such that $\rec(\Pi,A)\subseteq V$.
To this end, let
$\AtF(A,V):=\{p^F \mid (p^F \leftarrow \aux) \in \mon(A,V)\}$
denote the fresh, hidden atoms~$p^F$ introduced in $\mon(A,V)$.
Given an interpretation~$I$ (such that $I\cap(\{\aux\} \cup \AtF(A,V)) = \emptyset$)
and $J\subseteq\nolinebreak I$,
we define an \emph{extension} of~$J$ relative to~$I$~by:
\begin{equation*}
 \ext(J,I) := \left\{
    \begin{array}{ll}
        J \cup \{p^F \in \AtF(A,V) \mid p \notin I\} & \mbox{if } I \not\models A \\
        J \cup \{p^F \in \AtF(A,V) \mid p \notin J\} & \mbox{if } I \models A \text{ and } J \not\models F(I,A) \\
        J \cup \{\aux\} \cup \AtF(A,V)               & \mbox{if } I \models A \text{ and } J \models F(I,A) 
    \end{array}
 \right.
\end{equation*}
For instance, considering $A:=\SUM[1 : x_1, 2 : x_2, 2 : y_1, 3 : y_2] \neq 5$,
$V:=\{\mathit{unequal},\linebreak[1]y_1,y_2\}$,
$I:=\{x_2, \mathit{unequal}, y_1, y_2\}$, and
$J:=\{x_2, y_2\}$,
in view of $I\models A$ and $J\not\models F(I,A)$,
we obtain
$\ext(I,I)=I\cup\{\aux, y_1^F, y_2^F\}$ and  
$\ext(J,I)=J\cup\{y_1^F\}$. 

For $I$ and $J$ as above,
the following technical lemma yields
$\ext(I,I)$ as the subset-minimal model of
reducts $F(I',\mon(A,V))$ with respect to models~$I'$
of the subprogram $\mon(A,V)$ that extend~$I$.
Under the assumption that a nonempty difference $I\setminus J$
remains local to a component~$C$ of~$\Pi$ such that
some atom in~$C$ depends on~$A$, $\ext(J,I)$ further
constitutes the subset-minimal extension of~$J$ to a model
of $F(\ext(I,I),\mon(A,V))$.
\begin{restatable}{lemma}{LemModular}\label{LemReduct}
Let $\Pi$ be a program,
$A$ be an aggregate, and
$V$ be a set of propositional atoms such that $\rec(\Pi,A)\subseteq V$.
Let $I$ be an interpretation such that $I\cap(\{\aux\} \cup \AtF(A,V)) = \emptyset$
and  $J\subseteq I$.
Then, the following conditions hold:
\begin{enumerate}[leftmargin=*]
\item For any model~$I'$ of $\mon(A,V)$ such that $I'\setminus(\{\aux\} \cup \AtF(A,V)) = I$, we have that
      $\ext(I,I)\subseteq I'$ and $\ext(I,I)\models F(I',\mon(A,V))$.
\item If $J=I$ or $I\setminus J\subseteq C$ for some component~$C$ of~$\Pi$
      such that there is a rule $r\in\Pi$ with $H(r)\cap C\neq\emptyset$ and $A\in B(r)$,
      then $\ext(J,I)\models F(\ext(I,I),\mon(A,V))$ and
      $\ext(J,I)\subseteq J'$ for any model~$J'$ of $F(\ext(I,I),\mon(A,V))$
      such that $J'\setminus(\{\aux\} \cup \AtF(A,V)) = J$.%
\end{enumerate}
\end{restatable}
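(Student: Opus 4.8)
The plan is to reduce everything to the case in which $\odot$ is $>$. Since $\mon(A,V)=\mon(\gtA,V)\cup\mon(\ltA,V)$ with a shared head $\aux$, and $I\models A$ iff $I\models\gtA$ or $I\models\ltA$ (and likewise $J\models F(I,A)$ iff $J\models F(I,\gtA)$ or $J\models F(I,\ltA)$, via the strong equivalence~(A)), the $\neq$ case follows by running the $>$ argument on $\gtA$ and $\ltA$ separately and combining their conclusions through $\aux$. So assume $A$ has operator $>$ and bound $b$. The computational core is one arithmetic identity. Writing $N:=-\sum_{(w:l)\in\wlitn(A)}w\ge 0$, the bound of the monotone aggregate in~(\ref{eq:aggregate:rewrite}) is $b+N$, and whenever each fresh atom $p^F$ ($p\in V$) is set true exactly when $p$ is absent from a reference set $M$, the monotone body evaluates to (value of $A$ under $M$) $+N$. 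After taking the reduct with respect to $\ext(I,I)$ the frozen literals ($\naf l$ for $l\notin V$, and any negated literals among $\wlitp(A)$) become constants fixed by $I$, so I record the reduct-value $\sigma_I(K):=\sum_{i:\,K\models F(I,l_i)}w_i$ and check, term by term, that evaluating the reduced body at $\ext(J,I)$ (with $p^F$ true iff $p\notin J$) equals $\sigma_I(J)+N-\mathit{loss}$, where $\mathit{loss}:=\sum_{s\in I\setminus J,\ \wt(s,A)<0,\ s\notin V}|\wt(s,A)|$ collects exactly the removed negative-weight atoms carrying no $p^F$. The whole delicacy of the lemma sits in this loss term.

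With the identity, Part~1 is routine. If $I\not\models A$ the minimal body value is $\le b+N$, so the aggregate rule never fires; rule~(\ref{eq:pF:1}) forces $p^F\in I'$ for all $p\notin I$, giving $\ext(I,I)=I\cup\{p^F\mid p\notin I\}\subseteq I'$, and $\ext(I,I)$ satisfies $F(I',\mon(A,V))$ rule by rule. If $I\models A$, the minimal body value exceeds $b+N$, so any model $I'$ contains $\aux$, whence~(\ref{eq:pF:2}) forces all of $\AtF(A,V)$; then $\ext(I,I)=I\cup\{\aux\}\cup\AtF(A,V)\subseteq I'$, and it satisfies $F(I',\mon(A,V))$ trivially because it already contains every head atom.

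For Part~2 I would split on the three clauses defining $\ext(J,I)$. Satisfaction~(a) is again by cases: in the clause $J\models F(I,A)$ all heads are present, so $\ext(J,I)$ models the reduct outright; otherwise $\sigma_I(J)\le b$, so the identity gives a reduced body $\le b+N$ and the rule $\aux\leftarrow\cdots$ holds vacuously, while~(\ref{eq:pF:1})--(\ref{eq:pF:3}) hold because every $p\notin J$ contributes $p^F$ to $\ext(J,I)$. The delicate half is minimality~(b) in the clause $J\models F(I,A)$: for an arbitrary model $J'$ with $J'\setminus(\{\aux\}\cup\AtF(A,V))=J$, rule~(\ref{eq:pF:3}) (which the reduct turns into $p\vee p^F\leftarrow\top$) forces $p^F\in J'$ for every $p\notin J$, so the reduced body at $J'$ is at least $\sigma_I(J)+N-\mathit{loss}$; if this exceeds $b+N$ then $\aux\in J'$, and then~(\ref{eq:pF:2}) pulls in all of $\AtF(A,V)$, yielding $\ext(J,I)\subseteq J'$. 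Thus everything reduces to proving $\sigma_I(J)-\mathit{loss}>b$.

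Proving $\sigma_I(J)-\mathit{loss}>b$ is the main obstacle, and the only place the component hypothesis enters. If $\mathit{loss}=0$ (in particular whenever $V=\V$, or whenever $I\setminus J\subseteq\rec(\Pi,A)\subseteq V$), then $\sigma_I(J)-\mathit{loss}=\sigma_I(J)>b$ is just the case assumption $J\models F(I,A)$. If $\mathit{loss}>0$, pick an offending $s$: then $\wt(s,A)<0$ and $s\notin V\supseteq\rec(\Pi,A)$, so $s\notin\rec(\Pi,A)$ and hence $A\notin C$ (else $C=\rec(\Pi,A)\ni s$). Now the dependency graph does the work. The hypothesised rule $r$ with $A\in B(r)$ and $h\in H(r)\cap C$ yields an arc $h\to A$, so every atom of $C$ reaches $A$; were some removed atom $q$ to satisfy $\wt(q,A)>0$, the second clause in the definition of $\G_\Pi$ would give an arc $A\to q$, closing a cycle through $A$ and forcing $A\in C$, a contradiction. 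Hence, when $\mathit{loss}>0$, no removed atom carries a positive weight, so $\sigma_I(J)=\sigma_I(I)-\sum_{a\in I\setminus J}\wt(a,A)\ge\sigma_I(I)+\mathit{loss}$; since $\sigma_I(I)$ is the value of $A$ under $I$, which exceeds $b$ because $I\models A$, we obtain $\sigma_I(J)-\mathit{loss}\ge\sigma_I(I)>b$. For the $\neq$ case this step is even cleaner: the third clause of the definition of $\G_\Pi$ puts an arc from $A$ to every non-zero-weight atom, so any removed non-zero-weight atom already forces $A\in C$, whence $\mathit{loss}=0$ throughout and the minimality argument closes immediately.
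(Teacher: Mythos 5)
Your proof is correct and takes essentially the same route as the paper's: your arithmetic identity with the $\mathit{loss}$ term is a compact repackaging of the paper's inequality chains, and your dichotomy (if $\mathit{loss}>0$ then, via the arc $h\to A$ from the hypothesised rule and the arc $A\to q$ for $(w:q)\in\wlitp(A)$, any removed positive-weight atom would force $A\in C$ and hence $I\setminus J\subseteq\rec(\Pi,A)\subseteq V$) is exactly the paper's case disjunction that either $\{p\in I\setminus J\mid (w:p)\in\wlitp(A)\}=\emptyset$ or $\{p\in I\setminus J\mid (w:p)\in\wlitn(A)\}\subseteq V$, including the same $\gtA$/$\ltA$ split for $\neq$. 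The one step you leave tacit --- that the reduced rule~(\ref{eq:aggregate:rewrite}) is actually present in $F(\ext(I,I),\mon(A,V))$ before it can force $\aux\in J'$ --- follows in one line from monotonicity of the rewritten body (since $\ext(I,I)$ contains all of $\AtF(A,V)$ and $I\supseteq J$), which is the content the paper makes explicit via the contrapositives $\ext(I,I)\not\models\gtA'\Rightarrow J\not\models F(I,\gtA)$ and $\ext(I,I)\not\models\ltA'\Rightarrow J\not\models F(I,\ltA)$.
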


With the auxiliary result describing the formation of models of
$\mon(A,V)$ and its reducts at hand, we can show the main result of this paper
that the presented rewritings preserve the stable models of a program~$\Pi$.
\begin{restatable}{theorem}{ThmCorrectness}\label{thm:correctness}
Let $\Pi$ be a program,
$A$ be an aggregate, and
$V$ be a set of propositional atoms such that $\rec(\Pi,A)\subseteq V$.
Then, it holds that 
$\Pi \equiv_{\At(\Pi)} \rew(\Pi,A,V)$.
\end{restatable}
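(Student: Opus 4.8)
The plan is to build an explicit bijection between $\SM(\Pi)$ and $\SM(\rew(\Pi,A,V))$ that preserves restrictions to $\At(\Pi)$; this is exactly what $\equiv_{\At(\Pi)}$ requires, since both of its conditions (equal cardinalities and equal sets of $\At(\Pi)$-restrictions) follow at once from such a bijection. Write $\Pi^\ast$ for the program obtained from $\Pi$ by replacing $A$ with $\aux$, so that $\rew(\Pi,A,V)=\mon(A,V)\cup\Pi^\ast$. The candidate bijection is $I\mapsto\ext(I,I)$, with inverse $\hat I\mapsto\hat I\cap\At(\Pi)$: indeed $\ext(I,I)\cap\At(\Pi)=I$ because $\ext$ only ever adds $\aux$ and atoms of $\AtF(A,V)$, none of which lie in $\At(\Pi)$, and every interpretation of $\rew(\Pi,A,V)$ is contained in $\At(\Pi)\cup\{\aux\}\cup\AtF(A,V)$. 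It thus suffices to show that $I\mapsto\ext(I,I)$ restricts to a bijection $\SM(\Pi)\to\SM(\rew(\Pi,A,V))$.

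Two model correspondences drive the argument. At the program level, $I\models\Pi$ iff $\ext(I,I)\models\rew(\Pi,A,V)$: the clause structure of $\ext$ gives $\aux\in\ext(I,I)$ iff $I\models A$ (using $I\models A\Leftrightarrow I\models F(I,A)$), so each rule of $\Pi^\ast$ holds in $\ext(I,I)$ precisely when its $\Pi$-counterpart holds in $I$, while $\ext(I,I)\models\mon(A,V)$ is the minimal-model statement of Lemma~\ref{LemReduct}(1) taken at $I'=\ext(I,I)$. At the reduct level, for $J\subseteq I$ I would establish $J\models F(I,\Pi)$ iff $\ext(J,I)\models F(\ext(I,I),\rew(\Pi,A,V))$: on the $\Pi^\ast$-side this again reduces to $\aux\in\ext(J,I)$ iff $J\models F(I,A)$ (the middle and third clauses of $\ext$), so that $\aux$ faithfully stands in for the reduct $F(I,A)$ of the eliminated aggregate; on the $\mon(A,V)$-side it is exactly Lemma~\ref{LemReduct}(2), which both certifies $\ext(J,I)\models F(\ext(I,I),\mon(A,V))$ and shows $\ext(J,I)$ to be the subset-minimal such extension of $J$, provided $J=I$ or $I\setminus J$ remains within a component $C$ of~$\Pi$ that contains the head of some rule having $A$ in its body.

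With these in place, stability transfers in both directions. For the forward inclusion, given $I\in\SM(\Pi)$ and a hypothetical $K\subsetneq\ext(I,I)$ with $K\models F(\ext(I,I),\rew(\Pi,A,V))$, I set $J:=K\cap\At(\Pi)$; if $J=I$ then Lemma~\ref{LemReduct}(2) forces $\ext(I,I)\subseteq K$, a contradiction, and if $J\subsetneq I$ then Proposition~\ref{PropComponent}, applied to $\Pi$, replaces $J$ by $J':=I\setminus(C\setminus J)$ for a single component $C$, confining $I\setminus J'$ to $C$; the reduct-level correspondence then yields $J'\models F(I,\Pi)$ with $J'\subsetneq I$, contradicting $I\in\SM(\Pi)$. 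The reverse inclusion is dual: for $\hat I\in\SM(\rew(\Pi,A,V))$ and $I:=\hat I\cap\At(\Pi)$, Lemma~\ref{LemReduct}(1) gives $\ext(I,I)\subseteq\hat I$ with $\ext(I,I)\models F(\hat I,\mon(A,V))$, and since $\ext(I,I)$ agrees with $\hat I$ on $\At(\Pi)$ while $\aux$ is supported in the stable model $\hat I$ only through rules of the form~(\ref{eq:aggregate:rewrite}), $\ext(I,I)$ also models $F(\hat I,\Pi^\ast)$, so minimality of $\hat I$ forces $\hat I=\ext(I,I)$; the program-level correspondence then gives $I\models\Pi$, and any $J\subsetneq I$ with $J\models F(I,\Pi)$ lifts, via the same localisation, to a strict submodel of $F(\hat I,\rew(\Pi,A,V))$, so $I\in\SM(\Pi)$.

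I expect the main obstacle to be the interaction between the saturation atoms $p^F$ and the minimality test, namely excluding ``spurious'' submodels of the reduct that retain the whole $\At(\Pi)$-part yet shed some $\aux$ or $p^F$; this is precisely what the saturation rules~(\ref{eq:pF:1})--(\ref{eq:pF:3}) prevent and what Lemma~\ref{LemReduct} encapsulates. The genuinely delicate step is aligning the strongly connected components of $\G_{\rew(\Pi,A,V)}$ with those of $\G_\Pi$ so that Proposition~\ref{PropComponent}'s localisation can be fed into Lemma~\ref{LemReduct}(2): I must argue that a component whose gap interacts with $A$ is contained in $\rec(\Pi,A)\subseteq V$, so that its atoms were genuinely saturated, whereas a component disjoint from the recursion of $A$ leaves $F(I,A)$ unchanged and is handled by the plain $-w:\naf l$ replacement without disjunction. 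Checking that these regimes are exhaustive, and that the hypothesis of Lemma~\ref{LemReduct}(2) holds exactly when saturation is needed, is where the assumption $\rec(\Pi,A)\subseteq V$ is indispensable and where the bookkeeping must be most careful.
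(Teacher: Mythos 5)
Your overall architecture coincides with the paper's: the same map $I\mapsto\ext(I,I)$, the same reliance on Lemma~\ref{LemReduct} for $\mon(A,V)$ and its reducts, and the same use of Proposition~\ref{PropComponent} to localise countermodels (your reverse direction is essentially the paper's). But the forward direction as you state it is circular. Given $I\in\SM(\Pi)$ and $K\subsetneq\ext(I,I)$ with $K\models F(\ext(I,I),\rew(\Pi,A,V))$, you set $J:=K\cap\At(\Pi)$ and, in the case $J\subsetneq I$, invoke Proposition~\ref{PropComponent} ``applied to $\Pi$'' to pass to $J'=I\setminus(C\setminus J)$. That proposition, however, has $J\models F(I,\Pi)$ as a hypothesis, which at this point is precisely what you have not established; and if you use it only for the confinement $I\setminus J'\subseteq C$ (which is anyway trivial for any component $C$ meeting $I\setminus J$), then the next step, ``the reduct-level correspondence then yields $J'\models F(I,\Pi)$'', has no support: your correspondence is stated for the canonical extensions $\ext(\cdot,I)$, and nothing provides a model of $F(\ext(I,I),\rew(\Pi,A,V))$ whose $\At(\Pi)$-part is $J'$ rather than $J$. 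Moreover, Lemma~\ref{LemReduct}(2) carries the side condition that $C$ contain the head of a rule $r\in\Pi$ with $A\in B(r)$, which your sketch never discharges.

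The paper resolves both points by localising on the \emph{rewritten} side: Proposition~\ref{PropComponent} is applied to $\rew(\Pi,A,V)$, where the hypothetical countermodel actually lives, yielding $J'\subset\ext(I,I)$ with $\ext(I,I)\setminus J'$ inside one component $C'$ of $\G_{\rew(\Pi,A,V)}$; a short but essential path argument then transfers this to $\Pi$ --- any path in $\G_{\rew(\Pi,A,V)}$ between atoms of $\At(\Pi)$ either avoids $\aux$ and is a path in $\G_\Pi$, or maps to one through $A$ --- so the $\At(\Pi)$-part of the gap lies in a single component $C$ of $\G_\Pi$. The side condition of Lemma~\ref{LemReduct}(2) is then obtained from stability of~$I$: since $J\models F(I,\Pi)\cap F(\ext(I,I),\rew(\Pi,A,V))$ but $J\not\models F(I,\Pi)$, the violated rule $r$ must contain $A$, and $I\models r$ forces $H(r)\cap(I\setminus J)\neq\emptyset$, hence $H(r)\cap C\neq\emptyset$; only now does Lemma~\ref{LemReduct}(2) give $\ext(J,I)\subseteq J'$, so $\aux\in J'$ and $J'$ falsifies the rewritten rule $r'$, the desired contradiction. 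You correctly identify the component alignment between $\G_{\rew(\Pi,A,V)}$ and $\G_\Pi$ as the delicate step, but the resolution you sketch (localise in $\Pi$ first, then transfer) runs in the wrong direction and cannot be carried out as written.
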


The second objective is establishing the properties of a 
\emph{polynomial, faithful, and modular} translation 
\cite{DBLP:journals/jancl/Janhunen06}, i.e., a mapping that is polynomial-time computable,
preserves stable models (when auxiliary atoms are ignored),
and can be computed independently on parts of an input program.
The faithfulness of $\rew(\Pi,A,V)$ for any $\rec(\Pi,A)\subseteq V\subseteq\V$
is stated in Theorem~\ref{thm:correctness}.
Moreover, since at most $3\cdot n$ additional rules (\ref{eq:pF:1})--(\ref{eq:pF:3})
are introduced in $\mon(A,V)$ for~$A$ of the form~(\ref{eq:aggregate-normal}),
it is clear that $\rew(\Pi,A,V)$ is polynomial-time computable.
This also holds when applying the strong equivalences (A)--(O) to
replace aggregates by conjunctions,
where the worst cases (N) and~(O) yield a quadratic blow-up.

Hence, the final condition to be addressed is modularity.
Given that the refined rewriting $\rew(\Pi,A,\rec(\Pi,A))$
refers to the components of an entire program~$\Pi$,
this rewriting cannot be done in parts.
The unoptimized rewriting $\rew(\Pi,A,\V)$, however,
consists of the subprogram $\mon(A,\V)$, which is independent of~$\Pi$,
and otherwise merely replaces~$A$ by~$\aux$ in~$\Pi$.
Thus, under the assumption that $A$ does not occur outside of~$\Pi$
(where it cannot be replaced by $\aux$),
$\rew(\Pi,A,\V)$ complies with the modularity condition.
\begin{restatable}{proposition}{ThmModular}\label{ThmModular}
Let $\Pi,\Pi'$ be programs and
$A$ be an aggregate such that $A\notin\Ag(\Pi')$.
Then, it holds that $\rew(\Pi \cup \Pi',A,\V) = \rew(\Pi,A,\V) \cup \Pi'$.
\end{restatable}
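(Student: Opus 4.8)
The plan is to unfold the definition of the rewriting and reduce the claim to two elementary facts about the syntactic substitution of $A$ by $\aux$. Recall that for $V=\V$ the rewriting $\rew(\Pi,A,\V)$ is by definition the union of the subprogram $\mon(A,\V)$ and the program obtained from $\Pi$ by replacing every occurrence of $A$ with $\aux$; write $\sigma(\Pi)$ for the latter. The crucial structural observation is that $\mon(A,\V)$ depends only on $A$ and $\V$ and not on the first argument of $\rew$, so the very same copy of $\mon(A,\V)$, with identical fresh atoms $\aux$ and $p^F$, occurs in each of the three rewritings appearing in the statement.

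First I would establish that $\sigma$ distributes over set union, i.e.\ $\sigma(\Pi\cup\Pi')=\sigma(\Pi)\cup\sigma(\Pi')$. This is immediate because $\sigma$ acts rule by rule and set union is insensitive to how rules are grouped; in particular, rules shared between $\Pi$ and $\Pi'$, or rules that happen to become identical after substitution, merge harmlessly under $\cup$, so the identity survives. Second, I would invoke the hypothesis $A\notin\Ag(\Pi')$: since $A$ occurs in no rule of $\Pi'$, the substitution leaves $\Pi'$ untouched, giving $\sigma(\Pi')=\Pi'$.

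Combining these, $\rew(\Pi\cup\Pi',A,\V)=\mon(A,\V)\cup\sigma(\Pi\cup\Pi')=\mon(A,\V)\cup\sigma(\Pi)\cup\sigma(\Pi')=\mon(A,\V)\cup\sigma(\Pi)\cup\Pi'$, and the last expression equals $\rew(\Pi,A,\V)\cup\Pi'$ directly by the definition of $\rew(\Pi,A,\V)$. The only case that needs a separate word is $\odot$ being $\neq$, where $\mon(A,\V)=\mon(\gtA,\V)\cup\mon(\ltA,\V)$; since this decomposition still depends only on $A$ and $\V$, the same chain of equalities goes through verbatim.

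I do not expect a genuine obstacle here: the statement is a purely syntactic identity, and the argument is a direct computation from the definition of $\rew$. The only point demanding care is to confirm that the fresh atoms contributed by $\mon(A,\V)$ are literally the same symbols on both sides of the equation rather than merely corresponding under a renaming; this holds precisely because both sides invoke $\mon$ with the identical arguments $A$ and $\V$, and the naming of $\aux$ and the atoms $p^F$ is determined by $A$ alone. This is also why modularity is claimed for $V=\V$ and not for the component-dependent choice $\rec(\Pi,A)$, which would make $\mon$ sensitive to the whole program.
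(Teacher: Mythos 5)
Your proposal is correct and matches the paper's argument in substance: the paper's proof simply asserts the two inclusions $\rew(\Pi,A,\V)\cup\Pi'\subseteq\rew(\Pi\cup\Pi',A,\V)$ and $\rew(\Pi\cup\Pi',A,\V)\setminus\rew(\Pi,A,\V)\subseteq\Pi'$, and your explicit computation---writing $\rew(\cdot,A,\V)=\mon(A,\V)\cup\sigma(\cdot)$, noting that the substitution $\sigma$ distributes over union and that $A\notin\Ag(\Pi')$ forces $\sigma(\Pi')=\Pi'$---is exactly the verification those inclusions rest on, made explicit. Your closing remarks on the fresh atoms being determined by $A$ alone and on why $V=\V$ (rather than $\rec(\Pi,A)$) is essential for modularity are also consistent with the paper's surrounding discussion.
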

Note that $A\notin\Ag(\Pi')$ is not a restriction,
given that an element $w:\bot$ with an arbitrary weight~$w$ can be added
for obtaining a new aggregate~$A'$ that is strongly equivalent to~$A$.
In practice, however, one would rather aim at reusing a propositional atom $\aux$
that represents the satisfaction of~$A$ instead of redoing the rewriting
with another fresh atom $\aux'$.


\else
The notion of strongly equivalence \cite{lipeva01a,turner03a,ferraris05a} will be used in this section.
Let $\pi := \ell_1 \wedge \cdots \wedge \ell_n$ be a conjunction of literals, for some $n \geq 1$.
A pair $(J,I)$ of interpretations such that $J\subseteq I$ is an \emph{SE-model} of $\pi$ if $I \models \pi$ and $J \models F(I,\ell_1) \wedge \cdots \wedge F(I,\ell_n)$.
Two conjunctions $\pi,\pi'$ are \emph{strongly equivalent}, denoted by $\pi \sequiv \pi'$, if they have the same SE-models.
Strong equivalence means that replacing $\pi$ by $\pi'$ in any logic program
preserves its stable models.

\begin{proposition}[Lifschitz et al. 2001; Turner 2003; Ferraris 2005]\label{prop:se}
Let $\pi,\pi'$ be two conjunctions of literals such that $\pi \sequiv \pi'$.
Let $\Pi$ be a program, and $\Pi'$ be the program obtained from $\Pi$ by replacing any occurrence of $\pi$ by $\pi'$.
It holds that $\Pi \equiv_\V \Pi'$.
\end{proposition}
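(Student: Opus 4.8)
The plan is to reduce everything to a single local fact about the reduct and then propagate it through the definitions of model and stable model. Since here $V=\V$, the relation $\Pi\equiv_\V\Pi'$ collapses to plain set equality $\SM(\Pi)=\SM(\Pi')$ (as every interpretation is a subset of~$\V$, the projection $I\cap\V$ equals~$I$, and the cardinality condition is then subsumed). So it suffices to show that an interpretation~$I$ is a stable model of~$\Pi$ exactly when it is a stable model of~$\Pi'$.

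First I would record the standard property that the reduct preserves satisfaction under the reducing interpretation: for every literal~$\ell$ and interpretation~$I$, $I\models\ell$ iff $I\models F(I,\ell)$. This is proved by structural induction on~$\ell$, checking the atomic case, the negation-as-failure case (where $F(I,\naf\ell')$ is $\top$ or $\bot$ precisely according to whether $I\not\models\ell'$), and each aggregate case (where the truth value depends only on the set $\{i\mid I\models l_i\}=\{i\mid I\models F(I,l_i)\}$). Taking $J=I$ in the definition of SE-model, this fact shows that $(I,I)$ is an SE-model of a conjunction~$\pi$ iff $I\models\pi$; hence $\pi\sequiv\pi'$ immediately yields that $\pi$ and $\pi'$ have the same ordinary models.

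Next I would localize the substitution. Every occurrence of~$\pi$ in~$\Pi$ sits inside a rule body, which I write as $\gamma\wedge\pi$; the corresponding rule of~$\Pi'$ has body $\gamma\wedge\pi'$ and the same head, while all rules not mentioning~$\pi$ are identical in both programs. Because $\pi$ and $\pi'$ have the same models, a body $\gamma\wedge\pi$ is satisfied by~$I$ exactly when $\gamma\wedge\pi'$ is, so $I\models B(r)$ iff $I\models B(r')$, and therefore $I\models\Pi$ iff $I\models\Pi'$. For the reducts I would argue rule by rule: if $I\not\models B(r)$, then neither $F(I,r)$ nor $F(I,r')$ enters the respective reduct, so such rules may be ignored; if $I\models B(r)$, then in particular $I\models\pi$, and now the defining property of $\pi\sequiv\pi'$ gives $J\models F(I,\pi)$ iff $J\models F(I,\pi')$ for every $J\subseteq I$. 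Since $F(I,\gamma)$ and the head are untouched, this yields $J\models F(I,r)$ iff $J\models F(I,r')$, and consequently $F(I,\Pi)$ and $F(I,\Pi')$ are satisfied by exactly the same interpretations~$J$.

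Finally I would combine the two conclusions: $I\models\Pi$ iff $I\models\Pi'$, and $F(I,\Pi)$, $F(I,\Pi')$ admit the same strict subset models of~$I$; by the definition of stable model these give $I\in\SM(\Pi)$ iff $I\in\SM(\Pi')$, whence $\Pi\equiv_\V\Pi'$. The one point demanding care—and the conceptual heart of the argument—is the alignment between the two places where the hypothesis $I\models\pi$ is used: the SE-model definition only constrains $J\models F(I,\pi)$ when $I\models\pi$, and this is exactly the situation in which the rule survives into the reduct (since $I\models B(r)$ forces $I\models\pi$). Making this matching explicit, and noting that rules with $I\not\models\pi$ drop out of both reducts so require no strong-equivalence guarantee, is what turns the definition of strong equivalence into preservation of stable models.
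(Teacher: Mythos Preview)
Your argument is correct. The paper does not supply its own proof of this proposition; it is stated as a citation of known results (Lifschitz et~al.\ 2001; Turner 2003; Ferraris 2011), so there is no in-paper proof to compare against. Your derivation is the standard unpacking of the definitions: the key observations---that $(I,I)$ being an SE-model of~$\pi$ is equivalent to $I\models\pi$, and that the SE-model condition is only needed precisely when $I\models\pi$, which is exactly when the rule survives into the reduct---are the right ones, and you have made the alignment between the hypothesis of the SE-model definition and the reduct construction explicit and clean.
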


The following strongly equivalences can be proved by showing equivalence with respect to models, and by noting that \naf is neither introduced nor eliminated:
\begin{enumerate}
\item[(SE1)] $\SUM[w_1 : l_1, \ldots, w_n : l_n] \odot b \sequiv \SUM[w : l_i \mid i \in [1..n],\ \forall j \in [1..i-1]\ l_j \neq l_i,\ w := \sum_{k \in [1..n],\ l_k = l_i} w_k,\ w \neq 0] \odot b$;
\item[(SE2)] $\SUM[w_1 : l_1, \ldots, w_n : l_n] > b \equiv_{SE} \SUM[w_1 : l_1, \ldots, w_n : l_n] \geq b+1$;
\item[(SE3)] $\SUM[w_1 : l_1, \ldots, w_n : l_n] \leq b \sequiv \SUM[-w_1 : l_1, \ldots, -w_n : l_n] \geq -b$;
\item[(SE4)] $\SUM[w_1 : l_1, \ldots, w_n : l_n] < b \sequiv \SUM[w_1 : l_1, \ldots, w_n : l_n] \leq b-1$;
\item[(SE5)] $\SUM[w_1 : l_1, \ldots, w_n : l_n] = b \sequiv \SUM[w_1 : l_1, \ldots, w_n : l_n] \leq b \wedge \SUM[w_1 : l_1, \ldots, w_n : l_n] \geq b$.
\end{enumerate}
For example, (SE1) eliminates repeated literals by summing up the associated coefficients, and also eliminates literals whose coefficient is zero.
For (SE3), instead, it can be observed that $I \models \SUM[w_1 : l_1, \ldots, w_n : l_n] \leq b$ if and only if $\sum_{i \in [1..n],\ I \models l_i} w_i \leq b$, and by multiplying both sides by $-1$ if and only if $\sum_{i \in [1..n],\ I \models l_i} -w_i \geq -b$, that is, if and only if $I \models \SUM[-w_1 : l_1, \ldots, -w_n : l_n] \geq -b$.
Further strongly equivalences are reported below.
\begin{enumerate}
\item[(SE6)] $\COUNT[l_1, \ldots, l_n] \odot b \sequiv \SUM[1 : l_1, \ldots, 1 : l_n] \odot b$;
\item[(SE7)] $\AVG[w_1 : l_1, \ldots, w_n : l_n] \odot b \sequiv \SUM[w_1-b : l_1, \ldots, w_n-b : l_n] \odot 0 \wedge \COUNT[l_1, \ldots, l_n] \geq 1$;
\item[(SE8)] $\MIN[w_1 : l_1, \ldots, w_n : l_n] < b \sequiv \COUNT[l_i \mid i \in [1..n],\ w_i < b] \geq 1$;
\item[(SE9)] $\MIN[w_1 : l_1, \ldots, w_n : l_n] \leq b \sequiv \COUNT[l_i \mid i \in [1..n],\ w_i \leq b] \geq 1$;
\item[(SE10)] $\MIN[w_1 : l_1, \ldots, w_n : l_n] \geq b \sequiv \COUNT[l_i \mid i \in [1..n],\ w_i < b] \leq 0$;
\item[(SE11)] $\MIN[w_1 : l_1, \ldots, w_n : l_n] > b \sequiv \COUNT[l_i \mid i \in [1..n],\ w_i \leq b] \leq 0$;
\item[(SE12)] $\MIN[w_1 : l_1, \ldots, w_n : l_n] = b \sequiv \SUM([-n : l_i \mid i \in [1..n],\ w_i < b] \cup [1 : l_i \mid i \in [1..n],\ w_i = b]) \geq 1$;
\item[(SE13)] $\MIN[w_1 : l_1, \ldots, w_n : l_n] \neq b \sequiv \SUM([n : l_i \mid i \in [1..n],\ w_i < b] \cup [-1 : l_i \mid i \in [1..n],\ w_i = b]) \geq 0$;
\item[(SE14)] $\MAX[w_1 : l_1, \ldots, w_n : l_n] \odot b \sequiv \MIN[-w_1 : l_1, \ldots, -w_n : l_n] \ f(\odot)\ b$, where $>{} \stackrel{f}{\mapsto} {}<$, $\geq{} \stackrel{f}{\mapsto} {}\leq$, $\leq{} \stackrel{f}{\mapsto} {}\geq$, $<{} \stackrel{f}{\mapsto} {}>$, $={} \stackrel{f}{\mapsto} {}=$, and $\neq{} \stackrel{f}{\mapsto} {}\neq$;
\item[(SE15)] $\EVEN[l_1, \ldots, l_n] \sequiv \bigwedge_{i \in [0..\left\lfloor n/2 \right\rfloor]} \COUNT[l_1, \ldots, l_n] \neq 2 \cdot i$;
\item[(SE16)] $\ODD[l_1, \ldots, l_n] \sequiv \bigwedge_{i \in [1..\left\lceil n/2 \right\rceil]} \COUNT[l_1, \ldots, l_n] \neq 2 \cdot i - 1$.
\end{enumerate}

Given a program $\Pi$, the application of (SE1)--(SE16), from the last to the first, gives an equivalent program $\pre_1(\Pi)$ whose aggregates are sums with comparison operators $\geq$ and $\neq$, no repeated literals, and nonzero coefficients.

\begin{example}
Let $\Pi_4 := \{p \vee q \leftarrow,\ p \leftarrow \AVG[5 : p, 3 : p, 2 : q, 7 : q] \geq 4\}$.
After (SE7), the aggregate is replaced by $\SUM[1 : p, -1 : p, -2 : q, 3 : q] \geq 0 \wedge \COUNT[p,p,q,q] \geq 1$, which after (SE6) becomes $\SUM[1 : p, -1 : p, -2 : q, 3 : q] \geq 0 \wedge \SUM[1 : p, 1 : p, 1 : q, 1 :q] \geq 1$.
Finally, after (SE1), program $\pre_1(\Pi_4)$ is  $\{p \vee q \leftarrow,\ p \leftarrow \SUM[1 : q] \geq 0 \wedge \SUM[2 : p, 2 : q] \geq 1\}$.
Note that $\{p\}$ is the unique stable model of both $\Pi_4$ and $\pre_1(\Pi_4)$.
\end{example}

\begin{restatable}{theorem}{ThmPreOne}\label{ThmPreOne}
Let $\Pi$ be a program.
It holds that $\Pi \equiv_\V \pre_1(\Pi)$.
\end{restatable}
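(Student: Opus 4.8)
The plan is to view $\pre_1(\Pi)$ as the result of finitely many \emph{local} rewrites, each one an instance of one of the equivalences (SE1)--(SE16) applied to a single aggregate inside a rule body, performed from (SE16) down to (SE1). Since none of these rewrites introduces a fresh atom, every intermediate program ranges over the same $\V$, and it suffices to chain the relation $\equiv_\V$ across the individual rewrites. Concretely, I would first argue that each (SEi) is a genuine strong equivalence $\sequiv$ between conjunctions of literals, then lift it to whole rule bodies, and finally invoke Proposition~\ref{prop:se} once per rewrite, closing the argument by reflexivity and transitivity of $\equiv_\V$ (which, for $V=\V$, amounts to equality of stable-model sets and is therefore trivially transitive).

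To establish that a given (SEi) is a strong equivalence, I would use the defining criterion directly: a pair $(J,I)$ with $J\subseteq I$ is an SE-model of a conjunction $\pi$ iff $I\models\pi$ and $J\models F(I,\pi)$. Hence it is enough to show, for the two sides of (SEi), (i) that they have the same models under \emph{every} interpretation, and (ii) that for every $I$ their reducts have the same models under every $J\subseteq I$. The point of the remark that ``\naf is neither introduced nor eliminated'' is exactly that the reduct $F(I,\cdot)$ substitutes the same truth constants $\top,\bot$ for the same occurrences of negated literals on both sides of (SEi); consequently $J\models F(I,\cdot)$ evaluates each side under the single induced truth assignment (positive atoms taken from $J$, negated literals fixed by $I$), and condition (ii) collapses to the very same numeric/combinatorial identity as condition (i). The identities are all elementary: (SE1) merges coefficients of equal literals and discards zero net coefficients (which contribute nothing to any sum); (SE2) and (SE4) are integer bound shifts; (SE3) multiplies weights and bound by $-1$; (SE5) splits $=b$ into $\le b\wedge\ge b$; (SE6) reads a count as a unit-weight sum; (SE8)--(SE11) express the \MIN comparisons through the presence or absence of a selected weight small enough relative to $b$; (SE14) uses $\max S=-\min(-S)$ to turn a \MAX into a \MIN over the negated weights; and (SE15)/(SE16) spell out parity as the conjunction of the inequalities $\COUNT\neq k$ over all odd, respectively even, $k$ in range.

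The genuinely delicate cases, which I expect to be the main obstacle, are the counting arguments behind (SE7), (SE12) and (SE13). For (SE7) one multiplies the defining inequality $\sum w_i/m\ \odot\ b$ through by the cardinality $m$ of the selected literals; this is sound precisely because $m>0$ (so the direction of $\odot$ is preserved), and the extra conjunct $\COUNT\ge 1$ is what enforces $m\ge 1$ and handles the boundary case $m=0$, where \AVG is false. For (SE12) and (SE13) one must verify, by a case split on the number $a$ of selected weights strictly below $b$ and the number $c$ of selected weights equal to $b$, that the chosen coefficients make the threshold $\ge 1$, respectively $\ge 0$, hold exactly when $\MIN=b$, respectively $\MIN\neq b$; the factor $n$ guarantees that a single selected weight below $b$ dominates all contributions of weights equal to $b$, and the empty-selection and ``no weight equals $b$'' boundaries must be checked separately.

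Finally, I would close the two remaining formal gaps. First, each (SEi) is stated for a single aggregate, but $\pre_1$ replaces that aggregate \emph{inside} a body $\pi\wedge\rho$; this is justified by the congruence of $\sequiv$ under conjunction, which follows immediately from the fact that the SE-models of $\pi\wedge\rho$ are the intersection of those of $\pi$ and those of $\rho$, so that $\pi\sequiv\pi'$ yields $\pi\wedge\rho\sequiv\pi'\wedge\rho$. Thus every local rewrite turns a body into a strongly equivalent body, and Proposition~\ref{prop:se} gives $\equiv_\V$ between the program before and after that rewrite. Second, applying the equivalences ``from the last to the first'' guarantees that, after the pass, the only aggregates left are sums with operators in $\{\ge,\neq\}$, no repeated literals, and nonzero coefficients, i.e.\ exactly $\pre_1(\Pi)$; transitivity of $\equiv_\V$ over the finite chain then yields $\Pi\equiv_\V\pre_1(\Pi)$.
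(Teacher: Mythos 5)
Your proposal is correct and takes essentially the same route as the paper's proof: invoke Proposition~\ref{prop:se} to reduce the claim to verifying that each of (SE1)--(SE16) is a strong equivalence, use the observation that \naf is neither introduced nor eliminated to collapse the SE-model condition to ordinary equivalence of models, and check the resulting numeric identities case by case, with (SE7) and (SE12)--(SE13) singled out as the only delicate verifications, exactly as the paper does. Your explicit treatment of the reduct-collapse argument, the congruence of $\sequiv$ under conjunction, and the transitivity of $\equiv_\V$ over the finite rewrite chain merely spells out steps the paper leaves implicit.
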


\subsection{Eliminating non-monotone aggregates}\label{sec:rewriting}

The structure of input programs can be further simplified by eliminating non-monotone aggregates.
To ease the presentation, and without loss of generality, hereinafter aggregates are assumed to be of the following form:
\begin{equation}\label{eq:aggregate-normal}
 \begin{split}
  \SUM[-w_1 : p_1, \ldots, -w_j : p_j,
         -w_{j+1} : \naf\ell_{j+1}, \ldots, -w_k : \naf\ell_k, &\\
         w_{k+1} : p_{k+1}, \ldots, w_m : p_m,
         w_{m+1} : \naf\ell_{m+1}, \ldots, w_n : \naf\ell_n] & \odot b
 \end{split}
\end{equation}
where $n \geq m \geq k \geq 0$, $w_1,\ldots,w_n$ are positive integers, each $p_i$ is a propositional atom, each $l_i$ is a propositional literal, $\odot \in \{\geq,\neq\}$, and $b$ is an integer.
Intuitively, aggregated elements of (\ref{eq:aggregate-normal}) are partitioned in four sets, namely positive literals with negative weights, negative literals with negative weights, positive literals with positive weights, and negative literals with positive weights.

Let $\Pi$ be a program whose aggregates are of the form (\ref{eq:aggregate-normal}).
Program $\rew(\Pi)$ is obtained from $\Pi$ by replacing each occurrence of an aggregate of the form (\ref{eq:aggregate-normal}) by a fresh, hidden propositional atom $\aux$ \cite{eitowo05a,jannie12a}.
Moreover, if $\odot$ is $\geq$ then the following rule is added:
\begin{equation}\label{eq:aggregate:rewrite}
 \begin{split}
  \aux \leftarrow   \SUM[w_1 : p_1^F, \ldots, w_j : p_j^F,
         w_{j+1} : \naf\naf\ell_{j+1}, \ldots, w_k : \naf\naf\ell_k, \qquad\qquad \\
         w_{k+1} : p_{k+1}, \ldots, w_m : p_m,
         w_{m+1} : \naf\ell_{m+1}, \ldots, w_n : \naf\ell_n]  \geq b + w_1 + \cdots + w_k
 \end{split}
\end{equation}
where each $p_i^F$ is a fresh atom associated with the falsity of $p_i$, for all $i \in [1..k]$, because of the following rules also added to $\rew(\Pi)$:
\begin{eqnarray}
 p_i^F & \leftarrow & \naf p_i \label{eq:pF:1} \\
 p_i^F & \leftarrow & \aux \label{eq:pF:2} \\
 p_i \vee p_i^F & \leftarrow & \naf\naf \aux \label{eq:pF:3}
\end{eqnarray}
Similarly, if $\odot$ is $\neq$ then the following rules are added to $\rew(\Pi)$:
\begin{eqnarray}
 \begin{split}
  \aux \leftarrow   \SUM[w_1 : p_1^F, \ldots, w_j : p_j^F,
         w_{j+1} : \naf\naf\ell_{j+1}, \ldots, w_k : \naf\naf\ell_k, 
         w_{k+1} : p_{k+1}, \\ \ldots, w_m : p_m,
         w_{m+1} : \naf\ell_{m+1}, \ldots, w_n : \naf\ell_n]  \geq b + 1 + w_1 + \cdots + w_k
 \end{split}\label{eq:aggregate:rewrite-neq:1}\\
 \begin{split}
  \aux \leftarrow   \SUM[w_1 : p_1, \ldots, w_j : p_j,
         w_{j+1} : \naf\ell_{j+1}, \ldots, w_k : \naf\ell_k, 
         w_{k+1} : p_{k+1}^F, \ldots, \\ w_m : p_m^F,
         w_{m+1} : \naf\naf\ell_{m+1}, \ldots, w_n : \naf\naf\ell_n]  \geq - b + 1 + w_{k+1} + \cdots + w_n
 \end{split}\label{eq:aggregate:rewrite-neq:2}
\end{eqnarray}
together with rules (\ref{eq:pF:1})--(\ref{eq:pF:3}) for each new $p_i^F$.
Finally, any aggregate of the form $\SUM(S) \geq b$ such that $b \leq 0$ is removed because trivially satisfied.

Intuitively, any atom of the form $p_i^F$ introduced by the rewriting must be true whenever $p_i$ is false, but also when $\aux$ is true, so to implement what is usually referred to as \emph{saturation} in the literature.
Rules (\ref{eq:pF:1}) and (\ref{eq:pF:2}) encode such an intuition.
Moreover, rule (\ref{eq:pF:3}) guarantees that at least one between $p_i$ and $p_i^F$ belongs to any model of reducts obtained from interpretations containing $\aux$. 
It is interesting to observe that when $\aux$ belongs to $I$ the satisfaction of the associated aggregate can be tested according to all subsets of $I$ in the reduct $F(\Pi,I)$.

The intuition behind (\ref{eq:aggregate:rewrite}) is that an interpretation $I$ satisfied an aggregate of the form (\ref{eq:aggregate-normal}) such that $\odot$ is $\geq$ if and only if the following inequality is satisfied:
\begin{equation}\label{eq:inequality}
    \sum_{i = 1}^j -w_i \cdot I(p_i) + \sum_{i = j+1}^k -w_i \cdot I(\naf\ell_i) +
    \sum_{i = k+1}^m w_i \cdot I(p_i) + \sum_{i = m+1}^n w_i \cdot I(\naf\ell_i) \geq b    
\end{equation}
where $I(l) = 1$ if $I \models l$, and $I(l) = 0$ otherwise, for all literals $l$.
Moreover, inequality (\ref{eq:inequality}) is satisfied if and only if the following inequality is satisfied:
\begin{equation}\label{eq:inequality:add}
 \begin{split}
    \sum_{i = 1}^j -w_i \cdot I(p_i) + \sum_{i = j+1}^k -w_i \cdot I(\naf\ell_i) +
    \sum_{i = k+1}^m w_i \cdot I(p_i) + \sum_{i = m+1}^n w_i \cdot I(\naf\ell_i) +{} \\
  {} + w_1 + \cdots + w_k \geq b + w_1 + \cdots + w_k 
 \end{split}
\end{equation}
and by distributivity (\ref{eq:inequality:add}) is equivalent to the following inequality:
\begin{equation}\label{eq:inequality:positive}
 \begin{split}
    \sum_{i = 1}^j w_i \cdot (1-I(p_i)) + \sum_{i = j+1}^k w_i \cdot (1-I(\naf\ell_i)) +
    \sum_{i = k+1}^m w_i \cdot I(p_i) + {}\\
    {} + \sum_{i = m+1}^n w_i \cdot I(\naf\ell_i)  
   \geq b + w_1 + \cdots + w_k .
 \end{split}
\end{equation}
Note that $1 - I(\ell) = I(\naf\ell)$ for all literals $\ell$, and $p_i^F$ is associated with the falsity of $p_i$, for all $i \in [1..j]$.
It is important to observe that negation was not used for positive literals in order to avoid oversimplifications in program reducts.
Indeed, as already explained, for all $i \in [1..j]$, atom $p_i^F$ will be derived true whenever $p_i$ is false, but also when the aggregate is true.

The intuition behind (\ref{eq:aggregate:rewrite-neq:1})--(\ref{eq:aggregate:rewrite-neq:2}) is similar.
Essentially, an aggregate $\SUM(S) \neq b$ of the form (\ref{eq:aggregate-normal}) is true if and only if either $\SUM(S) \geq b+1$ or $\SUM(S) \leq b-1$ is true, and (SE3) is applied to the last aggregate in order to use the previously explained rewriting.
%
Let $\rew_1$ denote the composition $\rew \circ \pre_1$.

\begin{example}\label{ex:rew}
Consider again program $\Pi_1$ from Example~\ref{ex:syntax}.
Its rewriting $\rew_1(\Pi_1)$ is as follows:
\[
\begin{array}{cc}
 x_1 \leftarrow \naf\naf x_1 & x_2 \leftarrow \naf\naf x_2 \quad
 y_1 \leftarrow \mathit{unequal} \quad y_2 \leftarrow \mathit{unequal} \quad
 \bot \leftarrow \naf \mathit{unequal} \\
  \mathit{unequal} \leftarrow \aux 
 & \aux \leftarrow \SUM[1 : x_1^F; 2 : x_2^F; 2 : y_1^F; 3 : y_2^F] \geq 4 \\
 & \aux \leftarrow \SUM[1 : x_1; 2 : x_2; 2 : y_1; 3 : y_2] \geq 6 \\
 x_1^F \leftarrow \naf x_1 & x_1^F \leftarrow \aux \qquad x_1 \vee x_1^F \leftarrow \naf\naf \aux \\
 x_2^F \leftarrow \naf x_2 & x_2^F \leftarrow \aux \qquad x_2 \vee x_2^F \leftarrow \naf\naf \aux \\
 y_1^F \leftarrow \naf y_1 & y_1^F \leftarrow \aux \qquad y_1 \vee y_1^F \leftarrow \naf\naf \aux \\
 y_2^F \leftarrow \naf y_2 & y_2^F \leftarrow \aux \qquad y_2 \vee y_2^F \leftarrow \naf\naf \aux
\end{array}
\]
The only stable model of $\rew_1(\Pi_1)$ is $\{x_1, \mathit{unequal}, y_1, y_2, \aux, x_1^F, x_2^F, y_1^F, y_2^F\}$.
Consider now program $\Pi_3$ from Example~\ref{ex:wrong:2}.
Its rewriting $\rew_1(\Pi_3)$ is as follows:
\[
\begin{array}{c}
 p \leftarrow \aux \qquad p \leftarrow q \qquad q \leftarrow p \qquad
 \aux \leftarrow \SUM[1 : p; 1 : q^F] \geq 1 \\
 q^F \leftarrow \naf b \qquad q^F \leftarrow \aux \qquad q \vee q^F \leftarrow \naf\naf \aux
\end{array}
\]
The only stable model of $\rew_1(\Pi_3)$ is $\{p, q, \aux, q^F\}$.
\end{example}

It is not difficult to see that program $\rew_1(\Pi)$ is \lparse-like.
Moreover, as hinted by the previous examples, stable models of $\Pi$ are preserved by $\rew_1(\Pi)$.
In fact, a formal proof of the following theorem is provided in the next section.

\subsection{Properties of the main rewriting}\label{sec:properties}

The rewriting introduced in the previous section is a \emph{polynomial, faithful, and modular} translation function \cite{DBLP:journals/jancl/Janhunen06}, i.e., it is polynomial-time computable, preserves stable models (if auxiliary atoms are ignored), and can be computed independently on parts of the input program.
In fact, $\rew$ is modular because rules can be processed in parallel if new fresh atoms are introduced for any aggregate.

\begin{restatable}{theorem}{ThmModular}\label{ThmModular}
Let $\Pi,\Pi'$ be programs such that $\Pi \cap \Pi' = \emptyset$.
Hence, $\rew(\Pi \cup \Pi') = \rew(\Pi) \cup \rew(\Pi')$ and $\rew(\Pi) \cap \rew(\Pi') = \emptyset$.
\end{restatable}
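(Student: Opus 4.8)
The plan is to exploit that $\rew$ is defined strictly \emph{rule by rule}. First I would make explicit the freshness convention that the statement tacitly relies on: the atoms $\aux$ and $p_i^F$ created by the rules~(\ref{eq:aggregate:rewrite})--(\ref{eq:pF:3}) and~(\ref{eq:aggregate:rewrite-neq:1})--(\ref{eq:aggregate:rewrite-neq:2}) are drawn from a reserved namespace disjoint from the atoms of any input program, and are chosen canonically \emph{per aggregate occurrence}, so that distinct occurrences---in particular, occurrences located in two distinct rules---never share a fresh atom. Under this reading, for a single rule~$r$ I would write $\rew(\{r\})$ for the set consisting of the rule obtained from~$r$ by replacing its aggregate occurrences with their fresh atoms, together with all auxiliary rules~(\ref{eq:aggregate:rewrite})--(\ref{eq:pF:3}) (resp.\ (\ref{eq:aggregate:rewrite-neq:1})--(\ref{eq:aggregate:rewrite-neq:2}) with~(\ref{eq:pF:1})--(\ref{eq:pF:3})) generated from the aggregate occurrences of~$r$. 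Since this construction depends only on~$r$ and not on the surrounding program, $\rew$ decomposes as $\rew(\Pi)=\bigcup_{r\in\Pi}\rew(\{r\})$, the key lemma driving both conclusions.

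Granting this decomposition, the first equality is purely set-theoretic, since the index set of a union splits over set union:
\begin{equation*}
 \rew(\Pi\cup\Pi')=\bigcup_{r\in\Pi\cup\Pi'}\rew(\{r\})=\Big(\bigcup_{r\in\Pi}\rew(\{r\})\Big)\cup\Big(\bigcup_{r\in\Pi'}\rew(\{r\})\Big)=\rew(\Pi)\cup\rew(\Pi').
\end{equation*}
I note that this step uses only the locality of $\rew$, not the hypothesis $\Pi\cap\Pi'=\emptyset$.

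For the disjointness claim I would classify every rule of $\rew(\Pi)$ according to whether it mentions a fresh atom. A rule mentioning no fresh atom must be an original, aggregate-free rule of~$\Pi$, because a replaced rule that still carries an aggregate contains some~$\aux$, and each rule of the forms~(\ref{eq:aggregate:rewrite})--(\ref{eq:pF:3}) and~(\ref{eq:aggregate:rewrite-neq:1})--(\ref{eq:aggregate:rewrite-neq:2}) contains~$\aux$ or some~$p_i^F$. A rule mentioning a fresh atom is, by the per-occurrence convention, tied to a unique aggregate occurrence, hence to a unique source rule. Suppose now $\rho\in\rew(\Pi)\cap\rew(\Pi')$. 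If $\rho$ mentions a fresh atom, that atom identifies a single occurrence that would have to lie both in a rule of~$\Pi$ and in a rule of~$\Pi'$; but $\Pi\cap\Pi'=\emptyset$ makes the rules, and therefore the occurrences, of $\Pi$ and $\Pi'$ disjoint---a contradiction. If $\rho$ mentions no fresh atom, then $\rho$ is an aggregate-free rule belonging to both $\Pi$ and $\Pi'$, again contradicting $\Pi\cap\Pi'=\emptyset$. Hence $\rew(\Pi)\cap\rew(\Pi')=\emptyset$.

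The main obstacle is not a deep argument but fixing the freshness convention precisely enough to support both conclusions at once: the naming must depend only on the aggregate occurrence, so that the decomposition $\rew(\Pi)=\bigcup_{r\in\Pi}\rew(\{r\})$ is program-independent (which the first equality needs), yet it must still separate occurrences sitting in different rules (which the disjointness needs). In particular, a coarser per-\emph{aggregate} convention that assigned the same~$\aux$ to syntactically identical aggregates occurring in both $\Pi$ and $\Pi'$ would validate the first equality but falsify the second, as the auxiliary rules defining that shared~$\aux$ would then lie in $\rew(\Pi)\cap\rew(\Pi')$. I would therefore state the occurrence-level freshness explicitly before invoking it.
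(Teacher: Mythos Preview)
Your proposal is correct and follows essentially the same approach as the paper, whose entire proof reads: ``Immediate if different auxiliary atoms are used for different aggregates because the rewriting can be computed by processing one rule at time.'' You have simply unpacked this one-liner---making the rule-by-rule decomposition $\rew(\Pi)=\bigcup_{r\in\Pi}\rew(\{r\})$ explicit and carefully arguing the disjointness claim---and your discussion of the per-occurrence versus per-aggregate freshness convention is a genuine clarification that the paper leaves implicit.
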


Given Theorems~\ref{ThmPreOne}--\ref{ThmModular}, to prove faithfulness, it is enough to show that $\Pi \equiv_{\At(\Pi)} \rew(\Pi)$, where $\Pi$ only contains a single occurrence of an aggregate $A$ of the form (\ref{eq:aggregate-normal}).

Let $J \subseteq I \subseteq \At(\Pi)$.
The \emph{extension} of $J$ with respect to $I$ is defined as follows: 
\begin{equation}
 \ext(J,I) := \left\{
    \begin{array}{ll}
        J \cup \{p^F \in \At(\rew(\Pi)) \mid p \notin I\} & \mbox{if } I \not\models A \\
        J \cup \{p^F \in \At(\rew(\Pi)) \mid p \notin J\} & \mbox{if } I \models A,\ J \not\models F(I,A) \\
        J \cup \{\aux\} \cup \{p^F \in \At(\rew(\Pi))\} & \mbox{if } I \models A,\ J \models F(I,A).
    \end{array}
 \right.
\end{equation}
The following statements can be proved:
\begin{eqnarray}
    & I \models \Pi \textit{ if and only if } \ext(I,I) \models \rew(\Pi). \label{eq:statement:1}\\
    & J \models F(I,\Pi) \textit{ if and only if } \ext(J,I) \models F(\ext(I,I),\rew(\Pi)). \label{eq:statement:2} \\
    & \textit{If } K \in \SM(\rew(\Pi)) \textit{ then } K = \ext(K \cap \At(\Pi),K \cap \At(\Pi)). \label{eq:statement:3} 
\end{eqnarray}
Indeed, (\ref{eq:statement:1})--(\ref{eq:statement:2}) have been hinted by equations (\ref{eq:inequality})--(\ref{eq:inequality:positive}) in the previous section.
As for (\ref{eq:statement:3}), it essentially states that for each stable model $K$ of $\rew(\Pi)$ there must exist $I \subseteq \At(\Pi)$ such that $K = \ext(I,I)$.
In fact, if $K \not\models A$ but $\aux \in K$ then $K \setminus \{\aux\} \models F(K,\rew(\Pi))$, which is impossible.
Similarly, if $K \not\models A$, $p \in K$, and $p^F \in K$ then  $K \setminus \{p^F\} \models F(K,\rew(\Pi))$, which is impossible.
Finally, because of rules (\ref{eq:pF:2}), if $K \models A$ then $\{\aux\} \cup \{p^F \in \At(\rew(\Pi))\} \subseteq K$.
To conclude the proof it is enough to observe that any subset-minimal model $J$ of $F(\ext(I,I),\rew(\Pi))$ is such that $J = \ext(J \cap \At(\Pi),I)$.

\begin{restatable}[Correctness]{theorem}{ThmCorrectness}\label{thm:correctness}
Let $\Pi$ be a program.
Program $\rew(\Pi)$ is \lparse-like, and $\Pi \equiv_{\At(\Pi)} \rew_1(\Pi)$ holds.
\end{restatable}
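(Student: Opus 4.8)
The plan is to prove the two assertions in turn. That $\rew(\Pi)$ is \lparse-like is syntactic: every aggregate surviving in $\rew(\Pi)$ is one of~(\ref{eq:aggregate:rewrite}), (\ref{eq:aggregate:rewrite-neq:1}), or~(\ref{eq:aggregate:rewrite-neq:2}), each a \SUM with operator~$\geq$ whose coefficients $w_1,\ldots,w_n$ are the positive integers carried over from the normal form~(\ref{eq:aggregate-normal}); since any rewritten aggregate whose bound is $\leq 0$ is discarded as trivially satisfied, every remaining bound is positive, so all weights and bounds are non-negative. For $\Pi\equiv_{\At(\Pi)}\rew_1(\Pi)$ I would first reduce to a single aggregate. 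Theorem~\ref{ThmPreOne} gives $\Pi\equiv_\V\pre_1(\Pi)$, and since $\pre_1$ introduces no atoms we have $\SM(\Pi)=\SM(\pre_1(\Pi))$ and $\At(\pre_1(\Pi))=\At(\Pi)$; by transitivity it thus suffices to prove $\pre_1(\Pi)\equiv_{\At(\pre_1(\Pi))}\rew(\pre_1(\Pi))$. As $\rew$ assigns pairwise-disjoint fresh atoms to distinct aggregates, Theorem~\ref{ThmModular} lets me assemble this from single-aggregate equivalences, so the argument collapses to the case where $\Pi$ contains exactly one aggregate~$A$ of the form~(\ref{eq:aggregate-normal}).

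For that case I would establish statements~(\ref{eq:statement:1})--(\ref{eq:statement:3}) around the map $I\mapsto\ext(I,I)$. Statement~(\ref{eq:statement:1}) is the arithmetic content of~(\ref{eq:inequality})--(\ref{eq:inequality:positive}): adding $w_1+\cdots+w_k$ to both sides turns the possibly non-monotone inequality for~$A$ into the monotone one in the head rule~(\ref{eq:aggregate:rewrite}), reading each $p_i^F$ as the falsity of $p_i$ through $1-I(\ell)=I(\naf\ell)$; when $\odot$ is~$\neq$, $A$ is split into $\geq b+1$ and $\leq b-1$ (the latter sign-flipped by (SE3)), yielding the two head rules~(\ref{eq:aggregate:rewrite-neq:1})--(\ref{eq:aggregate:rewrite-neq:2}). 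Statement~(\ref{eq:statement:3}) is the saturation bookkeeping sketched in the text: in a stable~$K$, rule~(\ref{eq:pF:2}) propagates $\aux$ to all $p^F$, whereas if $K\not\models A$ then dropping a stray $\aux$, or a $p_i^F$ with $p_i\in K$, produces a strictly smaller model of $F(K,\rew(\Pi))$, contradicting stability; hence $K=\ext(K\cap\At(\Pi),K\cap\At(\Pi))$.

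To combine these, recall that $I\in\SM(\Pi)$ iff $I\models\Pi$ and no proper subset of~$I$ models $F(I,\Pi)$. Statement~(\ref{eq:statement:1}) transports the first condition. For minimality I would prove the auxiliary observation that every subset-minimal model~$J$ of $F(\ext(I,I),\rew(\Pi))$ satisfies $J=\ext(J\cap\At(\Pi),I)$, the point being that the reduced saturation rules $p_i\vee p_i^F\leftarrow\top$ and $p_i^F\leftarrow\aux$ pin down the auxiliary part of any minimal model; together with the reduct-level correspondence~(\ref{eq:statement:2}) this gives a bijection between the proper-subset models of $F(I,\Pi)$ below~$I$ and those of $F(\ext(I,I),\rew(\Pi))$ below $\ext(I,I)$. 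Hence $I\mapsto\ext(I,I)$ maps $\SM(\Pi)$ bijectively onto $\SM(\rew(\Pi))$ by statement~(\ref{eq:statement:3}), and since $\ext(I,I)\cap\At(\Pi)=I$, the equivalence $\Pi\equiv_{\At(\Pi)}\rew(\Pi)$ follows.

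I expect the delicate step to be statement~(\ref{eq:statement:2}) together with the minimal-model observation. One must check, across all three cases of $\ext$, that the saturation rules~(\ref{eq:pF:1})--(\ref{eq:pF:3}) neither create spurious small models---when $I\models A$ and $J\models F(I,A)$, saturation must force the full auxiliary set, matching the third case of $\ext$---nor destroy minimality---when $J\not\models F(I,A)$, precisely the $p^F$ with $p\notin J$ may appear. The $\neq$ aggregate compounds this, since the two head rules for $\aux$ interact and one must argue that derivability of $\aux$ from a subset of~$I$ matches satisfaction of~$A$ there through either~(\ref{eq:aggregate:rewrite-neq:1}) or~(\ref{eq:aggregate:rewrite-neq:2}); keeping the truth of the monotone head aggregate under subsets aligned with $F(I,A)$ is the crux of the correctness proof.
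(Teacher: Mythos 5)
Your proposal is correct and follows essentially the same route as the paper's own proof: the same reduction via Theorem~\ref{ThmPreOne} and the modularity of $\rew$ (Theorem~\ref{ThmModular}) to a program with a single aggregate occurrence, the same three statements (\ref{eq:statement:1})--(\ref{eq:statement:3}) about the map $J\mapsto\ext(J,I)$, and the same auxiliary observation that any subset-minimal model $J$ of $F(\ext(I,I),\rew(\Pi))$ satisfies $J=\ext(J\cap\At(\Pi),I)$, assembled in the same way to transfer stability in both directions. The only material you add is the explicit syntactic check of the \lparse-like claim (including the removal of aggregates with bound $\leq 0$), which the paper states without proof in the surrounding text.
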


It is also possible to show that $\rew(\Pi)$ is polynomial-time computable.
For this purpose, the size of a program $\Pi$, denoted $\norm{\Pi}$, is defined as the number of symbols occurring in $\Pi$.
In more detail, each propositional atom and each negated literal is considered one symbol, while an aggregate involving $n$ literals is counted as $n$ symbols.
(No other symbol is considered in the size of $\Pi$.)
Each aggregate $A$ in $\Pi$ is replaced by a fresh propositional atom, and at most two rules of size $1+\norm{A}$ are added.
In the really worst-case, $\norm{A} \approx \norm{\Pi}$, and the size of these rules is at most $5 \cdot \norm{\Pi}$.
Finally, $3 \cdot \norm{A}$ rules of total size $7 \cdot \norm{A}$ are also introduced for each processed aggregate.
Again, in the worst case, these rules amount to $7 \cdot \norm{\Pi}$ symbols.

\begin{restatable}{theorem}{ThmLinear}\label{ThmLinear}
Let $\Pi$ be a program, and $m$ be the size of the largest aggregate in $\Ag(\Pi)$.
Program $\rew(\Pi)$ is polynomial-time computable, and
$\norm{\rew(\Pi)} \leq 12 \cdot \norm{\Pi}$.
\end{restatable}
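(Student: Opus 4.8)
The plan is to prove the two assertions separately, both through a local, per-aggregate analysis, exploiting that $\rew$ transforms each occurrence of an aggregate of the form~(\ref{eq:aggregate-normal}) in isolation. For polynomial-time computability I would argue that $\rew$ scans $\Pi$ once and, for each aggregate $A$, performs only bounded-cost local operations: it replaces the occurrence of $A$ by the fresh atom $\aux$, emits rule~(\ref{eq:aggregate:rewrite}) when $\odot$ is $\geq$ (or the pair~(\ref{eq:aggregate:rewrite-neq:1})--(\ref{eq:aggregate:rewrite-neq:2}) when $\odot$ is $\neq$), and emits the three saturation rules~(\ref{eq:pF:1})--(\ref{eq:pF:3}) for each fresh atom $p_i^F$. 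Each step only reorders the literals of $A$ and recomputes a bound by adding $w_1+\cdots+w_k$; together with the trivial-satisfaction test for $\SUM(S)\geq b$ with $b\leq 0$, this is polynomial in $\norm{A}\leq m$ and in the bit-length of the integer weights and bound. Summed over the at most $\norm{\Pi}$ aggregates, the total time is polynomial in $\norm{\Pi}$.

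The substantive claim is the size bound, which I would establish by adding up the contributions to $\norm{\rew(\Pi)}$. The portion of $\rew(\Pi)$ inherited from $\Pi$, with every aggregate replaced by a single atom, has size at most $\norm{\Pi}$, because each aggregate counts as at least one symbol and is replaced by exactly one. For a fixed aggregate $A$ I would bound the rules it generates: the aggregate rule~(\ref{eq:aggregate:rewrite}), or the pair~(\ref{eq:aggregate:rewrite-neq:1})--(\ref{eq:aggregate:rewrite-neq:2}), contributes at most $2(1+\norm{A})$ symbols, since each such rule pairs the head $\aux$ with a body aggregate having the same number of literals as $A$; and because at most $\norm{A}$ fresh atoms $p_i^F$ are introduced, each spawning three rules~(\ref{eq:pF:1})--(\ref{eq:pF:3}) of sizes $2$, $2$, and $3$, the saturation rules contribute at most $7\norm{A}$ symbols. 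Since the fresh atoms are distinct across distinct aggregates, these per-aggregate bounds add without overlap.

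I would then sum over all aggregates, using that they are disjoint parts of $\Pi$, so that $\sum_A\norm{A}\leq\norm{\Pi}$, and that their number is at most $\norm{\Pi}$; this yields $\norm{\rew(\Pi)}\leq \norm{\Pi}+\sum_A\bigl(2(1+\norm{A})+7\norm{A}\bigr)\leq 12\,\norm{\Pi}$, where the inherited program together with the aggregate rules account for at most $5\,\norm{\Pi}$ and the saturation rules for at most $7\,\norm{\Pi}$. The step requiring the most care is pinning down the constant: one must fix exactly how $\norm{\cdot}$ counts a doubly negated literal $\naf\naf\ell$ and the body aggregates of the added rules, and then verify that the loose grouping just described is valid for every $\norm{\Pi}\geq 1$. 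Everything else is routine symbol counting.
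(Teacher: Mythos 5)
Your proposal is correct and follows essentially the same route as the paper: the identical symbol-counting conventions (doubly negated literals and negated literals as one symbol each, an aggregate with $n$ literals as $n$ symbols), the same decomposition into the inherited program plus aggregate rules (at most $5\cdot\norm{\Pi}$) and the saturation rules (\ref{eq:pF:1})--(\ref{eq:pF:3}) at sizes $2+2+3=7$ per fresh atom (at most $7\cdot\norm{\Pi}$), summing to $12\cdot\norm{\Pi}$. If anything, your explicit summation $\sum_A\norm{A}\leq\norm{\Pi}$ over disjoint aggregates is tidier than the paper's worst-case estimate $\norm{A}\approx\norm{\Pi}$, but it is the same argument.
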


\subsection{Refined rewriting}\label{sec:refined}

Rewriting $\rew_1$ can be improved by taking into account the (positive) \emph{dependency graph} of $\Pi$, denoted $\G_\Pi$, where $\Pi$ is a program whose aggregates are of the form (\ref{eq:aggregate-normal}), repeated below.
\begin{equation*}
 \begin{split}
  \SUM[-w_1 : p_1, \ldots, -w_j : p_j,
         -w_{j+1} : \naf\ell_{j+1}, \ldots, -w_k : \naf\ell_k, &\\
         w_{k+1} : p_{k+1}, \ldots, w_m : p_m,
         w_{m+1} : \naf\ell_{m+1}, \ldots, w_n : \naf\ell_n] & \odot b.
 \end{split}
\end{equation*}
The vertices of $\G_\Pi$ are $\At(\Pi) \cup \Ag(\Pi)$, and $\G_\Pi$ has an arc from $u$ to $v$ if one of the following conditions holds:
(i) there is a rule $r \in \Pi$ such that $u \in H(r)$ and $v \in B(r)$;
(ii) $u \in \Ag(\Pi)$ is of the form (\ref{eq:aggregate-normal}) such that $\odot$ is $\geq$ and $v = p_i$ for some $i \in [k+1..m]$;
(iii) $u \in \Ag(\Pi)$ is of the form (\ref{eq:aggregate-normal}) such that $\odot$ is $\neq$ and $v = p_i$ for some $i \in [1..j]\cup[k+1..m]$.
Note that only positive literals contribute arcs in general, and for sums with $\geq$ arcs are introduced only for positive literals with positive weights.
Hence, an arc $(u,v)$ in the dependency graph indicates that the truth of $u$ may imply the truth of $v$ in program reducts.
A strongly connected component of $\G_\Pi$, also referred to as \emph{component} of $\Pi$, is a maximal set of pairwise reachable nodes, where a node $p$ reaches a node $q$ if there is a path from $u$ to $v$ in $\G_\Pi$.
The component of $\Pi$ containing $v \in \At(\Pi) \cup \Ag(\Pi)$ is denoted by $\scc(\Pi,v)$.
An aggregate $A \in \Ag(\Pi)$ of the form (\ref{eq:aggregate-normal}) is \emph{recursive} in $\Pi$ if $\scc(\Pi,A) \neq \{A\}$.

\begin{example}
The components of $\Pi_1$ in Example~\ref{ex:syntax} are $\{x_1\}$, $\{x_2\}$, and $\{y_1,$ $y_2,$ $\mathit{unequal}, A\}$, where $A := \SUM[1 : x_1; 2 : x_2; 2 : y_1; 3 : y_2] \neq 5$ is recursive.
Similarly, program $\Pi_3$ in Example~\ref{ex:wrong:2} has a unique component, and the aggregate is recursive.
Note that $\Pi_3 \setminus \{p \leftarrow q\}$, instead, has two components, namely $\{p, \SUM[1 : p, -1 : q] \geq 0\}$ and $\{q\}$, because the aggregate does not depend positively on $q$.
\end{example}

Elements in each aggregate can be partitioned in recursive and non-recursive.
Without loss of generality, for an aggregate $A$ of the form (\ref{eq:aggregate-normal}), let $i \in [0..j]$ be an index such that $\{p_1,\ldots,p_i\} \cap \scc(\Pi,A) = \emptyset$, and $\{p_{i+1},\ldots,p_j\} \subseteq \scc(\Pi,A)$.
Program $\pre_2(\Pi)$ is obtained from $\Pi$ by replacing each aggregate $A$ with
\begin{equation}\label{eq:aggregate-refined}
 \begin{split}
  \SUM[w_1 : \naf p_1, \ldots, w_i : \naf p_i,
         -w_{i+1} : p_{i+1} \ldots, -w_j : p_j, \\
         -w_{j+1} : \naf\ell_{j+1}, \ldots, -w_k : \naf\ell_k, 
         w_{k+1} : p_{k+1}, \ldots, w_m : p_m,\\
         w_{m+1} : \naf\ell_{m+1}, \ldots, w_n : \naf\ell_n]  \odot b + w_1 + \cdots + w_i
 \end{split}
\end{equation}
where intuitively non-recursive atoms associated with negative weights are replaced by negative literals, so that the subsequent application of $\rew$ will not introduce atoms of the form $p^F$ and associated rules (\ref{eq:pF:1})--(\ref{eq:pF:3}) for $p_1,\ldots,p_i$.
Let $\rew_2$ denote the composition $\rew \circ \pre_2 \circ \pre_1$.

\begin{example}
Note that $\pre_2(\Pi_3) = \Pi_3$, while $\pre_2(\Pi_3 \setminus \{p \leftarrow q\}) = \{p \leftarrow \SUM[1 : p, 1 : \naf q] \geq 1,\ q \leftarrow p\}$.
Moreover, note that $\Pi_3 \setminus \{p \leftarrow q\}$ and $\pre_2(\Pi_3 \setminus \{p \leftarrow q\})$ have no stable models.
Concerning $\rew_2(\Pi_1)$, it comprises the following rules:
\[
\begin{array}{cc}
 x_1 \leftarrow \naf\naf x_1 & x_2 \leftarrow \naf\naf x_2 \quad
 y_1 \leftarrow \mathit{unequal} \quad y_2 \leftarrow \mathit{unequal} \quad
 \bot \leftarrow \naf \mathit{unequal} \\
  \mathit{unequal} \leftarrow \aux 
 & \aux \leftarrow \SUM[1 : \naf x_1; 2 : \naf x_2; 2 : y_1^F; 3 : y_2^F] \geq 4 \\
 & \aux \leftarrow \SUM[1 : x_1; 2 : x_2; 2 : y_1; 3 : y_2] \geq 6 \\
 y_1^F \leftarrow \naf y_1 & y_1^F \leftarrow \aux \qquad y_1 \vee y_1^F \leftarrow \naf\naf \aux \\
 y_2^F \leftarrow \naf y_2 & y_2^F \leftarrow \aux \qquad y_2 \vee y_2^F \leftarrow \naf\naf \aux
\end{array}
\]
The only stable model of $\rew_2(\Pi_1)$ is $\{x_1, \mathit{unequal}, y_1, y_2, \aux, y_1^F, y_2^F\}$.
\end{example}

Correctness of $\rew_2$ can be proved by extending Theorem~\ref{thm:correctness}.

\begin{restatable}{theorem}{ThmSimple}\label{ThmSimple}
Let $\Pi$ be a program.
Program $\rew_2(\Pi)$ satisfies $\Pi \equiv_{\At(\Pi)} \rew_2(\Pi)$.
\end{restatable}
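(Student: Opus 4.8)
The plan is to extend the proof of Theorem~\ref{thm:correctness} so as to accommodate the additional preprocessing step $\pre_2$. First I would discharge $\pre_1$ using Theorem~\ref{ThmPreOne}, which gives $\Pi\equiv_\V\pre_1(\Pi)$ and leaves $\At(\Pi)$ unchanged, so that it suffices to prove $\Sigma\equiv_{\At(\Sigma)}\rew(\pre_2(\Sigma))$ for $\Sigma:=\pre_1(\Pi)$, whose aggregates are already of the form~(\ref{eq:aggregate-normal}). By modularity (Theorem~\ref{ThmModular}) and the reduction already performed in Theorem~\ref{thm:correctness}, I would then restrict attention to a single aggregate $A$, keeping in mind that the component of $A$ in $\G_\Sigma$ is computed once for the whole program and is thus fixed throughout. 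The proof reuses the extension function $\ext$ verbatim, now evaluated over $\At(\rew(\pre_2(\Sigma)))$: by the construction of $\pre_2$ via~(\ref{eq:aggregate-refined}), the atoms $p^F$ occurring in $\rew(\pre_2(\Sigma))$ are exactly those for the \emph{recursive} negative-weight atoms $p_{i+1},\dots,p_j$ in the component of $A$, whereas the non-recursive atoms $p_1,\dots,p_i$ are frozen into the literals $\naf p_\ell$.

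With this $\ext$ in place, I would re-establish statements~(\ref{eq:statement:1})--(\ref{eq:statement:3}) for $\rew(\pre_2(\Sigma))$. Statement~(\ref{eq:statement:1}) is routine, since $\pre_2$ preserves models: the identities~(\ref{eq:inequality})--(\ref{eq:inequality:positive}) show that at the full interpretation $I$ each frozen literal $\naf p_\ell$ contributes $w_\ell\,(1-I(p_\ell))$, which together with the bound shift $w_1+\dots+w_i$ reproduces the original contribution $-w_\ell\,I(p_\ell)$. Statement~(\ref{eq:statement:2}) follows by the same case distinction as before: the case split of $\ext$ at the threshold $J\models F(I,A)$ places $\aux$ in the extension exactly when the head-defining rule needs it, and in the reduct the frozen literals $\naf p_\ell$ are fixed by $I$ consistently with their treatment in $F(I,A)$, so that $J\models F(I,\Sigma)$ iff $\ext(J,I)\models F(\ext(I,I),\rew(\pre_2(\Sigma)))$ for $J\subseteq I$.

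The crux, and the step I expect to be the main obstacle, is statement~(\ref{eq:statement:3}): that every stable model of $\rew(\pre_2(\Sigma))$ has the form $\ext(I,I)$, equivalently that every subset-minimal model $K$ of a reduct $F(\ext(I,I),\rew(\pre_2(\Sigma)))$ equals $\ext(K\cap\At(\Sigma),I)$. The difficulty is that, for the frozen atoms, $\pre_2$ removes the disjunctive saturation rule~(\ref{eq:pF:3}), and it is precisely such disjunctions that, in Theorem~\ref{thm:correctness}, forbid the ``collapsed'' submodels responsible for unfoundedness. Freezing a \emph{recursive} atom would indeed be unsound for this reason --- as witnessed by $\Pi_3$, where replacing the recursive $q$ by $\naf q$ opens a positive cycle $p\leftrightarrow\aux$ that collapses to a spurious smaller model and destroys the stable model $\{p,q\}$.

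The resolution rests on non-recursiveness. Since each frozen $p_\ell$ lies outside the component of $A$, there is no positive cycle through $p_\ell$ and $\aux$ in $\G_{\rew(\pre_2(\Sigma))}$: the aggregate never depends positively on $p_\ell$ (its weight is negative, and after freezing it occurs only through $\naf p_\ell$), and $p_\ell$ does not reach $A$ in $\G_\Sigma$. I would therefore argue that the elimination steps of Theorem~\ref{thm:correctness} pin the recursive gadget atoms exactly as $\ext$ prescribes, while the frozen atoms $p_1,\dots,p_i$ are determined in the reduct solely by the rules inherited from $\Sigma$ and hence agree with $I$ as $\ext$ demands, yielding $K=\ext(K\cap\At(\Sigma),I)$. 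Consequently any collapse in $\rew(\pre_2(\Sigma))$ projects to a genuine smaller model of $F(I,\Sigma)$ and conversely, so $I\in\SM(\Sigma)$ iff $\ext(I,I)\in\SM(\rew(\pre_2(\Sigma)))$; together with surjectivity this gives $\Sigma\equiv_{\At(\Sigma)}\rew(\pre_2(\Sigma))$ and hence $\Pi\equiv_{\At(\Pi)}\rew_2(\Pi)$. The principal technical effort is this last pinning argument, i.e.\ verifying that freezing is harmless exactly for atoms outside the component of $A$.
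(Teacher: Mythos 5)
Your architecture is reasonable and your diagnosis of the crux is right: freezing is sound only for atoms outside the component of $A$, and your $\Pi_3$ discussion of why freezing a recursive atom destroys stability is exactly the correct counterexample. Note, though, that the paper factors the proof differently: it first shows $\Pi \equiv_\V \pre_2(\Pi)$ directly --- i.e., that replacing an aggregate $A$ of form~(\ref{eq:aggregate-normal}) by its frozen variant $A'$ preserves stable models --- and only then applies Theorem~\ref{thm:correctness} to $\pre_2(\pre_1(\Pi))$, which is legitimate because $A'$ is again of form~(\ref{eq:aggregate-normal}) and $\rew$ then introduces $p^F$ atoms only for the recursive negative-weight atoms. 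You instead re-prove statements (1)--(3) wholesale for $\rew(\pre_2(\Sigma))$; that merged route could in principle work, but your execution has a genuine gap.

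The gap is in your statement~(2) and in the ``pinning'' claim of statement~(3). As you state it, (2) is false for the refined rewriting: the reduct operation $F(I,\cdot)$ fixes \emph{negated} literals only, so in $F(I,A)$ a frozen atom $p_\ell$ (negative weight, non-recursive) is still evaluated with respect to $J$, whereas in the reduct of the rewritten rule the literal $\naf p_\ell$ is fixed to its value under $I$. Concretely, for $p_\ell \in I\setminus J$ the side $J\models F(I,A)$ no longer pays the penalty $-w_\ell$, while the frozen side still does via the shifted bound, so the two sides diverge; your sentence that the frozen literals are ``fixed by $I$ consistently with their treatment in $F(I,A)$'' is exactly where this breaks. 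Likewise, minimal models of the reduct need \emph{not} agree with $I$ on the frozen atoms --- if $I$ fails stability through unfoundedness among $p_1,\ldots,p_i$, the witnessing $J$ differs from $I$ precisely there --- so they are not ``determined \ldots{} and hence agree with $I$.'' What rescues the argument, and what your proposal gestures at but never states or proves, is a localization lemma in the style of Proposition~\ref{PropComponent}: whenever $J\subset I$ models $F(I,\Pi)$, some model of the reduct differs from $I$ only inside a single strongly connected component. With it, statement~(2) is needed only for $J$ with $I\setminus J$ confined to the component of $A$, where $J$ and $I$ agree on all frozen atoms and the reducts of $A$ and $A'$ coincide in effect; for differences confined to other components, the rules through $A$ (respectively $\aux$) are trivially satisfied because their head atoms survive in $J$, so counterexamples transfer verbatim. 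Tellingly, the paper's own proof of this theorem stalls at exactly this step (the source carries a TODO pointing to the needed proposition), so supplying and proving that lemma is not a routine detail you may wave through --- it is the actual remaining mathematical content.
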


\fi


\section{Related work}\label{sec:related}

Several semantics were proposed in the literature for interpreting ASP programs with aggregates.
Among them, F-stable model semantics \cite{DBLP:journals/ai/FaberPL11,DBLP:journals/tocl/Ferraris11} was considered in this paper because it is implemented by widely-used ASP solvers \cite{DBLP:journals/tplp/FaberPLDI08,DBLP:journals/ai/GebserKS12}.
Actually, the definition provided in Section~\ref{sec:background} is slightly different than those in \cite{DBLP:journals/ai/FaberPL11,DBLP:journals/tocl/Ferraris11}.
In particular, the language considered in \cite{DBLP:journals/tocl/Ferraris11} has a broader syntax allowing for arbitrary nesting of propositional formulas.
The language considered in \cite{DBLP:journals/ai/FaberPL11}, instead, does not explicitly allow the use of double negation, which however can be simulated by means of auxiliary atoms.
For example, in \cite{DBLP:journals/ai/FaberPL11} a rule $p \leftarrow \naf\naf p$ must be modeled by using a fresh atom $p^F$ and the following subprogram:
$\{p \leftarrow \naf p^F,\ p^F \leftarrow \naf p\}$.
Moreover, in \cite{DBLP:journals/ai/FaberPL11} aggregates cannot contain negated literals, which can be simulated by          auxiliary atoms as well.
On the other hand, negated aggregates are permitted in \cite{DBLP:journals/ai/FaberPL11}, while they are not considered in this paper.
Actually, programs with negated aggregates are those for which \cite{DBLP:journals/tocl/Ferraris11} and \cite{DBLP:journals/ai/FaberPL11} disagree.
As a final remark, the reduct of \cite{DBLP:journals/ai/FaberPL11} does not remove negated literals from satisfied bodies, which however are necessarily true in all counter-models because double negation is not allowed.

Techniques to rewrite logic programs with aggregates into equivalent programs with simpler aggregates were investigated in the literature right from the beginning \cite{DBLP:journals/ai/SimonsNS02}.
In particular, rewritings into \lparse-like programs, which differ from those presented in this paper, were considered in \cite{DBLP:journals/tplp/LiuY13}.
As a general comment, since disjunction is not considered in \cite{DBLP:journals/tplp/LiuY13}, all aggregates causing a jump from the first to the second level of the polynomial hierarchy are excluded a priori.
This is the case for aggregates of the form $\SUM(S) \neq b$, $\AVG(S) \neq b$, and $\COUNT(S) \neq b$, as       noted in \cite{DBLP:journals/tplp/SonP07}, but also for comparators other than $\neq$ when negative weights are involved.
In fact, in \cite{DBLP:journals/tplp/LiuY13} negative weights are eliminated by a rewriting similar to the one in~(\ref{eq:aggregate:rewrite}), but negated literals are introduced instead of auxiliary atoms, which may lead to counterintuitive results \cite{DBLP:journals/tplp/FerrarisL05}.
A different rewriting was presented in \cite{DBLP:journals/tocl/Ferraris11}, whose output are programs with nested expressions, a construct that is not supported by current ASP systems.
Other relevant rewriting techniques were proposed in \cite{DBLP:conf/lpnmr/BomansonJ13,DBLP:conf/jelia/BomansonGJ14}, and proved to be quite efficient in practice.
However, these rewritings produce aggregate-free programs preserving F-stable models only in the stratified case, or if recursion is limited to convex aggregates.
On the other hand, it is interesting to observe that the rewritings of \cite{DBLP:conf/lpnmr/BomansonJ13,DBLP:conf/jelia/BomansonGJ14} are applicable to the output of the rewritings presented in this paper in order to completely eliminate aggregates, thus  preserving F-stable models in general.

The rewritings given in Section~\ref{sec:compilation} do not apply to other 
semantics 
whose stability checks are not based on minimality \cite{DBLP:journals/tplp/PelovDB07,DBLP:journals/tplp/SonP07,DBLP:journals/ai/ShenWEFRKD14}, or whose program reducts do not contain aggregates \cite{DBLP:journals/tplp/GelfondZ14}.
They also disregard
DL \cite{DBLP:journals/ai/EiterILST08} and HEX \cite{DBLP:journals/jair/EiterFKRS14} atoms, extensions of ASP for interacting with external knowledge bases, possibly expressed in different languages, that act semantically similar to aggregate functions.
%

As a final remark, the notion of a (positive) dependency graph given in Section~\ref{sec:refined} 
refines the concept of recursion through aggregates.
In fact, many works \cite{DBLP:journals/jair/AlvianoCFLP11,DBLP:journals/ai/FaberPL11,DBLP:journals/ai/SimonsNS02,DBLP:journals/tplp/SonP07}
consider an aggregate as recursive as soon as aggregated literals
depend on the evaluation of the aggregate.
According to this simple definition, the aggregate in the following program 
is deemed to be recursive:
$\{p \leftarrow \SUM[-1 : q] > -1,\ q\leftarrow p\}$.
However,
(negative) recursion through a rule like $p\leftarrow \naf q$
is uncritical for the computation of F-stable models,
as it cannot lead to circular support.
In fact, the 
         dependency graph introduced in Section~\ref{sec:refined} 
does not include arcs for such non-positive dependencies,
so that strongly connected components render potential
circular support more precisely.
For example, 
the aforementioned program has three components, namely
$\{p\}$, $\{q\}$, and $\{\SUM[-1 : q] > -1\}$.
If the dependency of $\SUM[-1 : q] > -1$ on $q$ were mistakenly
considered as positive, the three components would be joined into one,
thus unnecessarily extending the scope of stability checks.

\section{Conclusion}\label{sec:conclusion}

The representation of knowledge in ASP is eased by the availability of several constructs, among them aggregation functions.
As it is common in combinatorial problems, the structure of input instances is simplified in order to improve the efficiency of low-level reasoning. 
Concerning aggregation functions, the simplified form processed by current ASP solvers is known as monotone, and by complexity arguments faithfulness of current rewritings is subject to specific conditions, i.e., input programs can only contain convex aggregates.
The (unoptimized) translation 
presented in this paper is instead polynomial, faithful, and modular for all common aggregation functions, including non-convex instances of \SUM, \AVG, and \COUNT.
Moreover, the rewriting approach extends to aggregation functions such as \MIN, \MAX, \EVEN, and \ODD.
The proposed rewritings are implemented in a prototype system 
and also    adopted   in the recent version~4.5  of the grounder \textsc{gringo}.

\section*{Acknowledgement}
Mario Alviano was partially supported by MIUR within project ``SI-LAB BA2KNOW  -- Business Analitycs to Know'', by Regione Calabria, POR Calabria FESR 2007-2013  within project ``ITravel PLUS'' and project ``KnowRex'', by the National Group for Scientific Computation (GNCS-INDAM), and by Finanziamento Giovani Ricercatori UNICAL.
Martin Gebser was supported by AoF (grant \#251170)
within the Finnish Centre of Excellence in Computational Inference Research (COIN).

\clearpage


\label{lastpage}

\clearpage
\appendix

\section{Proofs}

\ifInput
\sloppy

\begin{restatable}{proposition}{ThmPreOne}\label{ThmPreOne}
The strong equivalences stated in (A)--(O) hold.
\end{restatable}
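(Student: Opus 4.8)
The plan is to prove each of (A)--(O) by reducing the equality of SE-models to a single arithmetic (or combinatorial) identity that holds uniformly over all truth assignments to the literals involved. Recall that $(J,I)$ with $J\subseteq I$ is an SE-model of a conjunction $\pi$ iff $I\models\pi$ and $J\models F(I,\pi)$, and that $F(I,\cdot)$ distributes over conjuncts; hence the SE-models of a conjunction are the intersection of the SE-models of its conjuncts, so the two-conjunct right-hand sides (D) and (E) can be treated componentwise. The key structural observation is that the reduct of an aggregate $A=\SUM[w_1:l_1,\ldots,w_n:l_n]\odot b$ is again an aggregate $\SUM[w_1:F(I,l_1),\ldots,w_n:F(I,l_n)]\odot b$ over the reduced literals, where for a positive literal $l_i=p$ we have $J\models F(I,l_i)$ iff $p\in J$, whereas for a negated literal $l_i=\naf l$ we have $J\models F(I,l_i)$ iff $I\not\models l$, a value fixed by $I$ alone.

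First I would make precise the slogan that \naf is neither introduced nor eliminated: in every one of (A)--(O) each literal occurs on both sides with the same polarity, so the Boolean value $t_i:=[\,J\models F(I,l_i)\,]$ is computed identically for the left- and right-hand aggregates. Consequently $J\models F(I,A)$ is a predicate depending only on the tuple $(t_i)\in\{0,1\}^n$ together with the weights, bound, and comparison operator of $A$, and likewise for the target $A'$. I would therefore isolate, for each case, a universal identity asserting that for every assignment $(t_i)$ the defining condition of the left-hand aggregate holds iff that of the right-hand aggregate holds. Instantiating this identity at $t_i=[\,I\models l_i\,]$ yields exactly the equivalence with respect to models $I\models A\iff I\models A'$, while instantiating it at the reduct values $t_i=[\,J\models F(I,l_i)\,]$ yields $J\models F(I,A)\iff J\models F(I,A')$; combining the two gives equality of SE-models, hence $A\sequiv A'$.

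It then remains to verify the universal identity in each case. For (A)--(C) it is the arithmetic fact that $\sum_{i\in S}w_i \odot b$ is equivalent to the corresponding condition after negating all weights and adjusting the bound, valid for every index set $S\subseteq[1..n]$; (D) splits $=b$ into the conjunction $>b-1$ and (with negated weights) $>-b-1$, using that an integer $s$ equals $b$ iff $s>b-1$ and $-s>-b-1$. For (E) I would multiply through by the positive cardinality $m=|S|$, noting that $\AVG\odot b$ additionally requires $m\geq 1$, which is the conjunct $\SUM[1:l_i]>0$. Cases (F)--(I) rest on the observation that, for any $S$, the minimum of the selected weights lies below (resp.\ at most) $b$ iff some selected literal has weight below (resp.\ at most) $b$, which a count or a negated count of the threshold literals detects; (L) reduces \MAX\ to \MIN\ by negating weights and flipping the operator via $f$, uniformly in $S$. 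Case (M) is immediate, and (N)--(O) use that $|S|$ is even (resp.\ odd) iff it differs from every odd (resp.\ even) value in the relevant range, which the displayed conjunction of $\neq$ constraints expresses.

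The main obstacle I anticipate is the pair of \MIN\ identities (J) and (K), where the weights $1-n\cdot(b-w_i)$ and $n\cdot(b-w_i)-1$ must be shown to encode $\min=b$ and $\min\neq b$ exactly: here one has to check that a single selected literal of weight $b$ contributes $+1$ while any selected literal of weight strictly below $b$ contributes a negative (resp.\ positive) amount large enough to dominate the whole sum, so that the threshold $>0$ (resp.\ $>-1$) is met iff the minimum equals (resp.\ differs from) $b$ --- and crucially this must hold for \emph{every} index set $S$, not merely those realized by some interpretation. Verifying these weight choices for all $S$, together with keeping the bound bookkeeping in (D) and (E) consistent at the reduct level, is where the real work lies; the remaining cases become routine once the universal-identity reduction above is in place.
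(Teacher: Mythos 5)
Your proposal is correct and follows essentially the same route as the paper's proof: the paper likewise observes that, since the reducts on both sides of each equivalence fix the same negative literals, it suffices to verify plain equivalence, and then checks each of (A)--(O) by a chain of arithmetic biconditionals that in fact holds for arbitrary index sets --- exactly the universal identities you isolate. Your explicit instantiation of those identities both at $t_i=[\,I\models l_i\,]$ and at the reduct values $t_i=[\,J\models F(I,l_i)\,]$, together with your dominance analysis of the weights $1-n\cdot(b-w_i)$ and $n\cdot(b-w_i)-1$ in (J) and (K), merely spells out what the paper compresses into its one-sentence justification before the case analysis.
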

\begin{proof}
Let $I$ be an interpretation.
Given that the reducts with respect to~$I$
of expressions on both sides of $\equiv_\V$ in (A)--(O)
fix the same (negative) literals,
it is sufficient to show equivalence.
\begin{enumerate}[leftmargin=*,labelsep=2pt,label=(\Alph{enumi})]
\item
$\SUM[w_1 : l_1, \ldots, w_n : l_n] < b$ is true in $I$ if and only if

$\sum_{i \in [1..n], I \models l_i} w_i < b$ if and only if

$\sum_{i \in [1..n], I \models l_i}-w_i >-b$ if and only if

$\SUM[-w_1 : l_1, \ldots, -w_n : l_n] > -b$ is true in $I$.
\item
$\SUM[w_1 : l_1, \ldots, w_n : l_n] \leq b$ is true in $I$ if and only if

$\sum_{i \in [1..n], I \models l_i} w_i \leq b$ if and only if

$\sum_{i \in [1..n], I \models l_i}-w_i >-b-1$ if and only if

$\SUM[-w_1 : l_1, \ldots, -w_n : l_n] > -b-1$ is true in $I$.
\item
$\SUM[w_1 : l_1, \ldots, w_n : l_n] \geq b$ is true in $I$ if and only if

$\sum_{i \in [1..n], I \models l_i} w_i \geq b$ if and only if

$\sum_{i \in [1..n], I \models l_i} w_i > b-1$ if and only if

$\SUM[w_1 : l_1, \ldots, w_n : l_n] > b-1$ is true in $I$.
\item
$\SUM[w_1 : l_1, \ldots, w_n : l_n] = b$ is true in $I$ if and only if

$\sum_{i \in [1..n], I \models l_i} w_i = b$ if and only if

$\sum_{i \in [1..n], I \models l_i} w_i > b-1$ and 
$\sum_{i \in [1..n], I \models l_i}-w_i > -b-1$ if and only if

$\SUM[w_1 : l_1, \ldots, w_n : l_n] > b-1 \wedge
 \SUM[-w_1 : l_1, \ldots, -w_n : l_n] > -b-1$ is true in $I$.
\item
$\AVG[w_1 : l_1, \ldots, w_n : l_n] \odot b$ is true in $I$ if and only if

$m := |\{i \in [1..n] \mid I \models l_i\}|$, $m \geq 1$, and $\sum_{i \in [1..n],  I \models l_i} \frac{w_i}{m} 
       \odot b$ if and only if

$m := |\{i \in [1..n] \mid I \models l_i\}|$, $m \geq 1$, and $\sum_{i \in [1..n],  I \models l_i} w_i \odot m\cdot b$ if and only if

$\{i \in [1..n] \mid I \models l_i\}\neq\emptyset$ and $\sum_{i \in [1..n],  I \models l_i} (w_i-b) \odot 0$ if and only if


$\SUM[w_1-b : l_1, \ldots, w_n-b : l_n] \odot 0 \wedge
 \SUM[1:l_1, \ldots, 1:l_n] > 0$ is true in $I$.
\item
$\MIN[w_1 : l_1, \ldots, w_n : l_n] < b$ is true in $I$ if and only if

$\min(\{w_i \mid i \in [1..n],  I \models l_i\} \cup \{+\infty\}) < b$ if and only if

$\{i \in [1..n] \mid w_i < b, I \models l_i\}\neq\emptyset$ if and only if

$\SUM[1:l_i \mid i \in [1..n], w_i < b] > 0$ is true in $I$.
\item
$\MIN[w_1 : l_1, \ldots, w_n : l_n] \leq b$ is true in $I$ if and only if

$\min(\{w_i \mid i \in [1..n],  I \models l_i\} \cup \{+\infty\}) \leq b$ if and only if

$\{i \in [1..n] \mid w_i \leq b, I \models l_i\}\neq\emptyset$ if and only if

$\SUM[1:l_i \mid i \in [1..n], w_i \leq b] > 0$ is true in $I$.
\item
$\MIN[w_1 : l_1, \ldots, w_n : l_n] \geq b$ is true in $I$ if and only if

$\min(\{w_i \mid i \in [1..n],  I \models l_i\} \cup \{+\infty\}) \geq b$ if and only if

$\{i \in [1..n] \mid w_i < b, I \models l_i\}=\emptyset$ if and only if

$\SUM[-1:l_i \mid i \in [1..n], w_i < b] > -1$ is true in $I$.
\item
$\MIN[w_1 : l_1, \ldots, w_n : l_n] > b$ is true in $I$ if and only if

$\min(\{w_i \mid i \in [1..n],  I \models l_i\} \cup \{+\infty\}) > b$ if and only if

$\{i \in [1..n] \mid w_i \leq b, I \models l_i\}=\emptyset$ if and only if

$\SUM[-1:l_i \mid i \in [1..n], w_i \leq b] > -1$ is true in $I$.
\item
$\MIN[w_1 : l_1, \ldots, w_n : l_n] = b$ is true in $I$ if and only if

$\min(\{w_i \mid i \in [1..n],  I \models l_i\} \cup \{+\infty\}) = b$ if and only if

$\{i \in [1..n] \mid w_i = b, I \models l_i\}\neq\emptyset$ and
$\{i \in [1..n] \mid w_i < b, I \models l_i\}=\emptyset$ if and only if

$\SUM[1-n\cdot(b-w_i) : l_i \mid i \in [1..n], w_i \leq b] > 0$ is true in $I$.
\item
$\MIN[w_1 : l_1, \ldots, w_n : l_n] \neq b$ is true in $I$ if and only if

$\min(\{w_i \mid i \in [1..n],  I \models l_i\} \cup \{+\infty\}) \neq b$ if and only if

$\{i \in [1..n] \mid w_i = b, I \models l_i\}=\emptyset$ or
$\{i \in [1..n] \mid w_i < b, I \models l_i\}\neq\emptyset$ if and only if

$\SUM[n\cdot(b-w_i)-1 : l_i \mid i \in [1..n], w_i \leq b] > -1$ is true in $I$.
\item
$\MAX[w_1 : l_1, \ldots, w_n : l_n] \odot b$ is true in $I$ if and only if

$\max(\{w_i \mid i \in [1..n], I \models l_i\} \cup \{-\infty\}) \odot b$ if and only if

$\min(\{-w_i \mid i \in [1..n],  I \models l_i\} \cup \{+\infty\}) \ f(\odot)\ {-b}$ if and only if

$\MIN[-w_1 : l_1, \ldots, -w_n : l_n] \ f(\odot)\ {-b}$ is true in $I$,

where $<{} \stackrel{f}{\mapsto} {}>$, $\leq{} \stackrel{f}{\mapsto} {}\geq$, $\geq{} \stackrel{f}{\mapsto} {}\leq$, $>{} \stackrel{f}{\mapsto} {}<$, $={} \stackrel{f}{\mapsto} {}=$, and $\neq{} \stackrel{f}{\mapsto} {}\neq$.
\item
$\COUNT[l_1, \ldots, l_n] \odot b$ is true in $I$ if and only if

$|\{i \in [1..n] \mid I \models l_i\}| \odot b$ if and only if

$\SUM[1 : l_1, \ldots, 1 : l_n] \odot b$ is true in $I$.
\item
$\EVEN[l_1, \ldots, l_n]$ is true in $I$ if and only if

$|\{i \in [1..n] \mid I \models l_i\}|$ is an even number if and only if

$|\{i \in [1..n] \mid I \models l_i\}|\neq 2 \cdot i' - 1$ for all $i'\in [1..\left\lceil n/2 \right\rceil]$ if and only if

$\bigwedge_{i \in [1..\left\lceil n/2 \right\rceil]} \SUM[1:l_1, \ldots, 1:l_n] \neq 2 \cdot i - 1$ is true in $I$.
\item
$\ODD[l_1, \ldots, l_n]$ is true in $I$ if and only if

$|\{i \in [1..n] \mid I \models l_i\}|$ is an odd number if and only if

$|\{i \in [1..n] \mid I \models l_i\}|\neq 2 \cdot i'$ for all $i' \in [0..\left\lfloor n/2 \right\rfloor]$ if and only if

$\bigwedge_{i \in [0..\left\lfloor n/2 \right\rfloor]} \SUM[1:l_1, \ldots, 1:l_n] \neq 2 \cdot i$ is true in $I$.
\end{enumerate}
\end{proof}

\PropComponent*
\begin{proof}
For any components $C_1\neq C_2$ of~$\Pi$,
the existence of some path from $\alpha_1\in C_1$ to $\beta_2\in C_2$ in $\G_\Pi$
implies that there is no path from any $\alpha_2\in C_2$ to $\beta_1\in C_1$ in $\G_\Pi$.
Hence,
since $J\subset I$ and $\G_\Pi$ is finite, there is some component~$C$ of~$\Pi$
such that $I\cap (C\setminus J)\neq\emptyset$ and $\beta\in C\setminus J$ for any path from
$\alpha\in C$ to $\beta\in I\setminus J$ in $\G_\Pi$.
Consider any rule~$r\in F(I,\Pi)$ such that $H(r)\cap (I\setminus (C\setminus J)) = \emptyset$.
Then, $I\models B(r)$ and $I\models r$ yield that $H(r)\cap I\neq\emptyset$,
so that $H(r)\cap C\neq\emptyset$.
On the other hand, since $J\subseteq I\setminus (C\setminus J)$,
we have that $H(r)\cap J=\emptyset$,
which together with $J\models r$ implies that $J\not\models B(r)$.
That is, there is some positive literal $\beta\in B(r)$ such that
$I\models \beta$, $J\not\models F(I,\beta)$,
some $\alpha \in C$ has an arc to~$\beta$ in $\G_\Pi$,
and one of the following three cases applies:
\begin{enumerate}[leftmargin=*]
\item
If $\beta\in I\setminus J$, then $\beta\in C\setminus J$,
so that $I\setminus (C\setminus J)\not\models B(r)$ and $I\setminus (C\setminus J)\models r$.
\item
If $\beta$ is an aggregate of the form~(\ref{eq:aggregate-normal}) such that $\odot$ is $>$,
for any $p\in I\setminus J$ such that $(w:p)\in\wlitp(\beta)$,
we have that $p\in C\setminus J$ because some $\alpha\in C$ has a path to~$p$ in~$\G_\Pi$.
Along with $J\subset I$,
this in turn yields that
\begin{equation*}
\mbox{$\sum$}_{(w:p)\in\wlitp(\beta),p\in I\setminus (C\setminus J)}w = \mbox{$\sum$}_{(w:p)\in\wlitp(\beta),p\in J}w\text{.}
\end{equation*}
Moreover, $J\subseteq I\setminus (C\setminus J)$ implies that
\begin{equation*}
\mbox{$\sum$}_{(w:p)\in\wlitn(\beta),p\in I\setminus (C\setminus J)}w \leq \mbox{$\sum$}_{(w:p)\in\wlitn(\beta),p\in J}w\text{,}
\end{equation*}
so that
\begin{equation*}
\begin{array}{@{}r@{}l@{}l}
& \mbox{$\sum$}_{(w:p)\in\wlita(\beta),p\in I\setminus (C\setminus J)}w
\\
{} = {} &
\mbox{$\sum$}_{(w:p)\in\wlitp(\beta),p\in I\setminus (C\setminus J)}w & {} + \mbox{$\sum$}_{(w:p)\in\wlitn(\beta),p\in I\setminus (C\setminus J)}w
\\
{} \leq {} &
\mbox{$\sum$}_{(w:p)\in\wlitp(\beta),p\in J}w & {} + \mbox{$\sum$}_{(w:p)\in\wlitn(\beta),p\in J}w
\\ 
{} = {} &
\mbox{$\sum$}_{(w:p)\in\wlita(\beta),p\in J}w\text{.}
\end{array}
\end{equation*}
In view of $J\not\models F(I,\beta)$, we further conclude that
\begin{equation*}
\begin{array}{@{}r@{}l@{}l}
& \multicolumn{2}{@{}l}{\mbox{$\sum$}_{(w:l)\in\wlita(\beta),I\setminus (C\setminus J)\models F(I,l)}w}
\\ 
{} = {} &
\mbox{$\sum$}_{(w:p)\in\wlita(\beta),p\in I\setminus (C\setminus J)}w & {} + \mbox{$\sum$}_{(w:l)\in\wlita(\beta),l\notin\V,I\models l}w
\\ 
{} \leq {} &
\mbox{$\sum$}_{(w:p)\in\wlita(\beta),p\in J}w & {} + \mbox{$\sum$}_{(w:l)\in\wlita(\beta),l\notin\V,I\models l}w
\\
{} = {} &
\mbox{$\sum$}_{(w:l)\in\wlita(\beta),J\models F(I,l)}w
\\
{} \leq {} &
b\text{.}
\end{array}
\end{equation*}
That is, $I\setminus (C\setminus J)\not\models F(I,\beta)$,
so that $I\setminus (C\setminus J)\not\models B(r)$ and $I\setminus (C\setminus J)\models r$.
\item
If $\beta$ is an aggregate of the form~(\ref{eq:aggregate-normal}) such that $\odot$ is $\neq$,
for any $p\in I\setminus J$ such that $(w:p)\in\wlita(\beta)$,
we have that $p\in C\setminus J$ because some $\alpha\in C$ has a path to~$p$ in~$\G_\Pi$.
Along with $J\subset I$,
this in turn yields that
\begin{equation*}
\mbox{$\sum$}_{(w:p)\in\wlita(\beta),p\in I\setminus (C\setminus J)}w = \mbox{$\sum$}_{(w:p)\in\wlita(\beta),p\in J}w\text{.}
\end{equation*}
In view of $J\not\models F(I,\beta)$, we further conclude that
\begin{equation*}
\begin{array}{@{}r@{}l@{}l}
& \multicolumn{2}{@{}l}{\mbox{$\sum$}_{(w:l)\in\wlita(\beta),I\setminus (C\setminus J)\models F(I,l)}w}
\\ 
{} = {} &
\mbox{$\sum$}_{(w:p)\in\wlita(\beta),p\in I\setminus (C\setminus J)}w & {} + \mbox{$\sum$}_{(w:l)\in\wlita(\beta),l\notin\V,I\models l}w
\\ 
{} = {} &
\mbox{$\sum$}_{(w:p)\in\wlita(\beta),p\in J}w & {} + \mbox{$\sum$}_{(w:l)\in\wlita(\beta),l\notin\V,I\models l}w
\\
{} = {} &
\mbox{$\sum$}_{(w:l)\in\wlita(\beta),J\models F(I,l)}w
\\
{} = {} &
b\text{.}
\end{array}
\end{equation*}
That is, $I\setminus (C\setminus J)\not\models F(I,\beta)$,
so that $I\setminus (C\setminus J)\not\models B(r)$ and $I\setminus (C\setminus J)\models r$.
\end{enumerate}
Since $I\setminus (C\setminus J)\models r$ also holds for
any rule~$r\in F(I,\Pi)$ such that $H(r)\cap (I\setminus (C\setminus J)) \neq \emptyset$,
we have shown that $I\setminus (C\setminus J)\models F(I,\Pi)$.
\end{proof}

\LemModular*
\begin{proof}
\begin{enumerate}[leftmargin=*]
\item 
Let $I'\models\mon(A,V)$ such that $I'\setminus(\{\aux\} \cup \AtF(A,V)) = I$.
Then, in view of
$\{p^F\leftarrow\nolinebreak\naf p \mid p^F\in\AtF(A,V)\}\subseteq\mon(A,V)$,
we have that $\{p^F\in\AtF(A,V) \mid p\notin I\}\subseteq I'$,
so that
$I'\models\{p\vee p^F\leftarrow\naf\naf\aux \mid p^F\in\AtF(A,V)\}$.
Moreover, when $\aux\in I'$,
$\{p^F\leftarrow\nolinebreak\aux \mid p^F\in\AtF(A,V)\}\subseteq\mon(A,V)$
yields that
$\AtF(A,V)\subseteq I'$.

($>$) \
For $A$ of the form~(\ref{eq:aggregate-normal}) such that $\odot$ is $>$,
let $A'$ be the body of rule~(\ref{eq:aggregate:rewrite}) from $\mon(A,V)$.
Then, we have that $I'\models A'$ if and only if
\begin{equation}\label{eq:p1}
\begin{array}{@{}r@{}l}
& \mbox{$\sum$}_{(w:l)\in\wlitp(A), I\models l}w
-
\mbox{$\sum$}_{(w:l)\in\wlitn(A), l \notin V, I\not\models l}w
\\
{} - {} &
\mbox{$\sum$}_{(w:p)\in\wlitn(A), p \in V, p^F\in I'}w
>
b
-
\mbox{$\sum$}_{(w:l)\in\wlitn(A)}w
\text{.}
\end{array}
\end{equation}
By adding $\mbox{$\sum$}_{(w:l)\in\wlitn(A)}w$ on both sides,
(\ref{eq:p1}) yields
\begin{equation}\label{eq:p2}
\begin{array}{@{}r@{}l}
& \mbox{$\sum$}_{(w:l)\in\wlitp(A), I\models l}w
+
\mbox{$\sum$}_{(w:l)\in\wlitn(A), l \notin V, I\models l}w
\\
{} + {} &
\mbox{$\sum$}_{(w:p)\in\wlitn(A), p \in V, p^F\notin I'}w
>
b
\text{.}
\end{array}
\end{equation}
Since 
$\{p \in V\mid (w:p)\in\wlitn(A), p^F\notin I'\} \subseteq I$,
$I\models A$ implies that~(\ref{eq:p2}) holds
and $\{\aux\}\cup\AtF(A,V)\subseteq I'$,
but (\ref{eq:p2}) does not hold for
$I'=I\cup\{p^F\in\nolinebreak\AtF(A,V) \mid p\notin I\}$
otherwise.
In either case,
we have that
$\ext(I,I)\subseteq I'$ and
$\ext(I,I)\models F(I',\mon(A,V))$,
where
$\ext(I,I)\models A'$ if and only if $I\models A$.

($\neq$) \
For $A$ of the form~(\ref{eq:aggregate-normal}) such that $\odot$ is $\neq$,
(\ref{eq:p2}) holds when
$I\models\gtA$, where $\gtA := \SUM[w_1 :\nolinebreak l_1,\linebreak[1] \ldots, w_n : l_n] > b$.
Moreover, for $\ltA := \SUM[-w_1 : l_1, \ldots,\linebreak[1] -w_n :\nolinebreak l_n] > -b$,
let $\ltA'$ be the body of rule~(\ref{eq:aggregate:rewrite}) from $\mon(\ltA,V)$.
Then,
we have that $I'\models\ltA'$ if and only if
\begin{equation}\label{eq:p3}
\begin{array}{@{}r@{}l}
& \mbox{$\sum$}_{(w:l)\in\wlitp(A), l \notin V, I\not\models l}w
-
\mbox{$\sum$}_{(w:l)\in\wlitn(A), I\models l}w
\\
{} + {} &
\mbox{$\sum$}_{(w:p)\in\wlitp(A), p \in V, p^F\in I'}w
>
\mbox{$\sum$}_{(w:l)\in\wlitp(A)}w
-b
\text{.}
\end{array}
\end{equation}
By subtracting $\mbox{$\sum$}_{(w:l)\in\wlitp(A)}w$ on both sides and multiplying with~$-1$,
(\ref{eq:p3}) yields
\begin{equation}\label{eq:p4}
\begin{array}{@{}r@{}l}
& \mbox{$\sum$}_{(w:l)\in\wlitp(A), l \notin V, I\models l}w
+
\mbox{$\sum$}_{(w:l)\in\wlitn(A), I\models l}w
\\
{} + {} &
\mbox{$\sum$}_{(w:p)\in\wlitp(A), p \in V, p^F\notin I'}w
<
b
\text{.}
\end{array}
\end{equation}
Since 
$\{p \in V\mid (w:p)\in\wlitp(A), p^F\notin I'\} \subseteq I$,
$I\models \ltA$ implies that~(\ref{eq:p4}) holds
and $\{\aux\}\cup\AtF(A,V)\subseteq I'$,
but (\ref{eq:p4}) does not hold for
$I'=I\cup\{p^F\in\nolinebreak\AtF(A,V) \mid p\notin I\}$
otherwise.
In view of $I\models A$ if and only if $I\models\gtA$ or $I\models\ltA$,
we further conclude that
$\ext(I,I)=I\cup\{\aux\}\cup\AtF(A,V)\subseteq I'$ when $I\models A$,
while neither~(\ref{eq:p2}) nor~(\ref{eq:p4}) holds for
$I'=I\cup\{p^F\in\nolinebreak\AtF(A,V) \mid p\notin I\}=\ext(I,I)$ when $I\not\models A$.
In either case,
we have that
$\ext(I,I)\subseteq I'$ and
$\ext(I,I)\models F(I',\mon(A,V))$.
\item
Assume that $J=I$ or $I\setminus J\subseteq C$ for some component~$C$ of~$\Pi$
such that there is a rule $r\in\Pi$ with $H(r)\cap C\neq\emptyset$ and $A\in B(r)$,
and let $J'\models F(\ext(I,I),\mon(A,V))$
such that $J'\setminus(\{\aux\} \cup \AtF(A,V)) = J$.
Then, in view of
$\{p^F\leftarrow\top \mid p^F\in\AtF(A,V),\linebreak[1]p\notin\nolinebreak I\}\subseteq F(\ext(I,I),\mon(A,V))$,
we have that $\{p^F\in\AtF(A,V) \mid p\notin I\}\subseteq\nolinebreak J'$.
When $I\not\models A$,
then $F(\ext(I,I),\mon(A,V))=\{p^F\leftarrow\top \mid p^F\in\AtF(A,V),\linebreak[1]p\notin\nolinebreak I\}$,
$\ext(J,I)=J\cup\{p^F\in\AtF(A,V)\mid p\notin I\}\subseteq J'$, and
$\ext(J,I)\models F(\ext(I,I),\linebreak[1]\mon(A,V))$.
Below assume that $I\models A$, so that
$\{p\vee\nolinebreak p^F\leftarrow\nolinebreak\top \mid\linebreak[1] p^F\in\nolinebreak\AtF(A,V)\}\subseteq F(\ext(I,I),\mon(A,V))$
implies 
$\{p^F\in\AtF(A,V) \mid p\notin J\}\subseteq J'$.

($>$) \
For $A$ of the form~(\ref{eq:aggregate-normal}) such that $\odot$ is $>$,
let $A'$ be the body of rule~(\ref{eq:aggregate:rewrite}) from $\mon(A,V)$.
Then,
(\ref{eq:p2}) yields that
$J'\models F(\ext(I,I),A')$ if and only if
\begin{equation}\label{eq:p5}
\begin{array}{@{}r@{}l@{}l}
& \mbox{$\sum$}_{(w:l)\in\wlitp(A), l \notin \V, I\models l}w
& {} +
\mbox{$\sum$}_{(w:l)\in\wlitn(A), l \notin V, I\models l}w
\\
{} + {} &
\mbox{$\sum$}_{(w:p)\in\wlitp(A), p \in J}w
& {} + 
\mbox{$\sum$}_{(w:p)\in\wlitn(A), p \in V, p^F\notin J'}w
>
b
\text{.}
\end{array}
\end{equation}
%
%
Moreover,
$
 \{p \in V\mid (w:p)\in\wlitn(A), p^F\notin J'\} 
 \subseteq J
 \subseteq I
$
implies that
\begin{equation}\label{eq:pp2}
\begin{array}{@{}r@{}l}
\mbox{$\sum$}_{(w:p)\in\wlitn(A), p \in V, p^F\notin J'}w
& {} \geq
\mbox{$\sum$}_{(w:p)\in\wlitn(A), p \in V\cap J}w
\\
& {} \geq
\mbox{$\sum$}_{(w:p)\in\wlitn(A), p \in V\cap I}w
\text{.}
\end{array}
\end{equation}
In view of the prerequisite that $J=I$ or
$I\setminus J\subseteq C$ for some component~$C$ of~$\Pi$ such that 
there is a rule $r\in\Pi$ with $H(r)\cap C\neq\emptyset$ and $A\in B(r)$,
if $J\subset I$,
some $\alpha \in C$ has an arc to~$A$ in $\G_\Pi$.
Along with $\rec(\Pi,A)\subseteq V$,
this yields that 
$\{p\in\nolinebreak I\setminus\nolinebreak J \mid\linebreak[1] (w:p)\in\wlitp(A)\}=\emptyset$ or
$\{p\in I\setminus J \mid (w:p)\in\wlitn(A)\}\subseteq\nolinebreak V$.
In the former case,
$I\models\nolinebreak A$,
$\mbox{$\sum$}_{(w:p)\in\wlitp(A), p \in J}w
=\mbox{$\sum$}_{(w:p)\in\wlitp(A), p \in I}w$, 
$\mbox{$\sum$}_{(w:l)\in\wlitn(A),l\notin V, J\models F(I,l)}w
\geq
\mbox{$\sum$}_{(w:l)\in\wlitn(A),l\notin V, I\models l}w$, 
and~(\ref{eq:pp2})
imply that
$J\models F(I,A)$ and
(\ref{eq:p5}) hold.
Moreover,
if $\{p\in I\setminus J \mid (w:p)\in\nolinebreak\wlitn(A)\}\subseteq\nolinebreak V$,
then
$\mbox{$\sum$}_{(w:l)\in\wlitn(A), l \notin V, J\models F(I,l)}w
=\mbox{$\sum$}_{(w:l)\in\wlitn(A), l \notin V, I\models l}w$
and~(\ref{eq:pp2})
yield that 
(\ref{eq:p5}) holds when $J\models F(I,A)$,
but (\ref{eq:p5}) does not hold for
$J'=J\cup\{p^F\in\nolinebreak\AtF(A,V) \mid p\notin\nolinebreak J\}$ 
otherwise.
In view of $\ext(I,I)\models A'$ if and only if $I\models A$,
we further conclude that
$\ext(J,I)=J\cup\{\aux\}\cup\AtF(A,V)\subseteq J'$ when $J\models F(I,A)$,
while~(\ref{eq:p5}) does not hold for
$J'=J\cup\{p^F\in\AtF(A,V) \mid p\notin J\}=\ext(J,I)$
when $J\not\models F(I,A)$.
In either case,
we have that
$\ext(J,I)\subseteq J'$ and
$\ext(J,I)\models F(\ext(I,I),\mon(A,V))$.
%

($\neq$) \
For $A$ of the form~(\ref{eq:aggregate-normal}) such that $\odot$ is $\neq$,
(\ref{eq:p5}) holds when $J\models F(I,\gtA)$,
where
$\gtA := \SUM[w_1 :\nolinebreak l_1,\linebreak[1] \ldots, w_n : l_n] > b$.
Moreover, for
$\ltA := \SUM[-w_1 : l_1, \ldots,\linebreak[1] -w_n :\nolinebreak l_n] > -b$,
let $\ltA'$ be the body of rule~(\ref{eq:aggregate:rewrite}) from $\mon(\ltA,V)$.
Then, (\ref{eq:p4}) yields that
$J'\models F(\ext(I,I),\ltA')$ if and only if
\begin{equation}\label{eq:p6}
\begin{array}{@{}r@{}l@{}l}
& \mbox{$\sum$}_{(w:l)\in\wlitp(A), l \notin V, I\models l}w
& {} +
\mbox{$\sum$}_{(w:l)\in\wlitn(A),l\notin\V, I\models l}w
\\
{} + {} &
\mbox{$\sum$}_{(w:p)\in\wlitp(A), p \in V, p^F\notin J'}w
& {} +
\mbox{$\sum$}_{(w:p)\in\wlitn(A), p\in J}w
<
b
\text{.}
\end{array}
\end{equation}
Dual to 
(\ref{eq:pp2}) above,
$
 \{p \in V \mid (w:p)\in\wlitp(A), p^F\notin J'\} 
 \subseteq
 J
$
implies that
\pagebreak[1]
\begin{equation}\label{eq:pp4}
\mbox{$\sum$}_{(w:p)\in\wlitp(A), p \in V, p^F\notin J'}w
\leq
\mbox{$\sum$}_{(w:p)\in\wlitp(A), p \in V\cap J}w
\text{.}
\end{equation}
In view of the prerequisite regarding $I\setminus J$ and since
$\rec(\Pi,A)\subseteq V$,
we have that $\{p\in I\setminus J \mid (w:p)\in\wlita(A)\}\subseteq V$.
Hence,
$\mbox{$\sum$}_{(w:l)\in\wlitp(A), l \notin V, J\models F(I,l)}w
=\mbox{$\sum$}_{(w:l)\in\wlitp(A), l \notin V, I\models l}w$
and~(\ref{eq:pp4})
yield that 
(\ref{eq:p6}) holds when $J\models F(I,\ltA)$,
but (\ref{eq:p6}) does not hold for
$J'=J\cup\{p^F\in\AtF(A,V) \mid p\notin J\}$
otherwise.
Moreover,
note that $\ext(I,I)\not\models \ltA'$
implies that
\pagebreak[1]
\begin{equation*}
\begin{array}{@{}r@{}l@{}l}
&
\mbox{$\sum$}_{(w:l)\in\wlitp(A), J\models F(I,l)}w
& {} +
\mbox{$\sum$}_{(w:l)\in\wlitn(A), J\models F(I,l)}w
\\
{} \geq {} &
\mbox{$\sum$}_{(w:l)\in\wlitp(A), l\notin V, J\models F(I,l)}w
& {} +
\mbox{$\sum$}_{(w:l)\in\wlitn(A), J\models F(I,l)}w
\\
{} = {} &
\mbox{$\sum$}_{(w:l)\in\wlitp(A), l\notin V, I\models l}w
& {} +
\mbox{$\sum$}_{(w:l)\in\wlitn(A), J\models F(I,l)}w
\\
{} \geq {} &
\mbox{$\sum$}_{(w:l)\in\wlitp(A), l\notin V, I\models l}w
& {} +
\mbox{$\sum$}_{(w:l)\in\wlitn(A), I\models l}w
\\
{} \geq {} &
b
\text{,}
\end{array}
\end{equation*}
so that $J\not\models F(I,\ltA)$.
Similarly,
for the body $\gtA'$ of rule~(\ref{eq:aggregate:rewrite}) from $\mon(\gtA,V)$,
since
$\mbox{$\sum$}_{(w:l)\in\wlitn(A), l \notin V, J\models F(I,l)}w
=\mbox{$\sum$}_{(w:l)\in\wlitn(A), l \notin V, I\models l}w$,
$\ext(I,I)\not\models \gtA'$ yields
\begin{equation*}
\begin{array}{@{}r@{}l@{}l}
&
\mbox{$\sum$}_{(w:l)\in\wlitp(A), J\models F(I,l)}w
& {} +
\mbox{$\sum$}_{(w:l)\in\wlitn(A), J\models F(I,l)}w
\\
{} \leq {} &
\mbox{$\sum$}_{(w:l)\in\wlitp(A), J\models F(I,l)}w
& {} +
\mbox{$\sum$}_{(w:l)\in\wlitn(A), l\notin V, J\models F(I,l)}w
\\
{} = {} &
\mbox{$\sum$}_{(w:l)\in\wlitp(A), J\models F(I,l)}w
& {} +
\mbox{$\sum$}_{(w:l)\in\wlitn(A), l\notin V, I\models l}w
\\
{} \leq {} &
\mbox{$\sum$}_{(w:l)\in\wlitp(A), I\models l}w
& {} +
\mbox{$\sum$}_{(w:l)\in\wlitn(A), l\notin V, I\models l}w
\\
{} \leq {} &
b
\text{,}
\end{array}
\end{equation*}
so that $J\not\models F(I,\gtA)$.
In turn,
$J\models F(I,\gtA)$ implies
$\ext(I,I)\models \gtA'$, and
$\ext(I,I)\models \ltA'$ follows from $J\models F(I,\ltA)$.
In view of
$J\models F(I,A)$ if and only if $J\models F(I,\gtA)$ or $J\models F(I,\ltA)$,
$J\models F(I,A)$ yields that
$\ext(I,I)\models \gtA'$ and (\ref{eq:p5}) or
$\ext(I,I)\models \ltA'$ and (\ref{eq:p6}) hold,
so that
$\ext(J,I)=J\cup\{\aux\}\cup\AtF(A,V)\subseteq J'$,
while neither~(\ref{eq:p5}) nor~(\ref{eq:p6}) holds for
$J'=J\cup\{p^F\in\AtF(A,V) \mid p\notin J\}=\ext(J,I)$
when $J\not\models F(I,A)$.
In either case,
we have that
$\ext(J,I)\subseteq J'$ and
$\ext(J,I)\models F(\ext(I,I),\mon(A,V))$.
\end{enumerate}
\end{proof}

\ThmCorrectness*
\begin{proof}
($\Rightarrow$) \
Let $I\in\SM(\Pi)$.
Then, by Lemma~\ref{LemReduct},
$\ext(I,I)\models F(\ext(I,I),\mon(A,V))$ and
$\ext(I,I)\subseteq I'$ as well as $\ext(I,I)\models F(I',\mon(A,V))$
for any model $I'$ of $\mon(A,V)$
such that $I'\setminus(\{\aux\} \cup \AtF(A,V)) = I$.
Since $I\models \Pi$ and $\aux\in\ext(I,I)$ if and only if $I\models A$,
this yields $\ext(I,I)\models\rew(\Pi,A,V)$ as well as $\ext(I,I)\models F(I',\rew(\Pi,A,V))$
for any model $I'$ of $\rew(\Pi,A,V)$
such that $I'\setminus(\{\aux\} \cup \AtF(A,V)) = I$,
so that $I'\in\SM(\rew(\Pi,A,V))$ implies $I'=\ext(I,I)$.

Let $J'\subset\ext(I,I)$ such that
$\ext(I,I)\setminus J'\subseteq C'$ for some component~$C'$
of $\rew(\Pi,A,V)$, and assume that
$J'\models (F(I,\Pi)\cap F(\ext(I,I),\rew(\Pi,A,V)))\cup F(\ext(I,I),\mon(A,V))$.
For $J:=J'\setminus(\{\aux\} \cup \AtF(A,V))$,
note that any path from $\alpha\in I\setminus J$ to $\beta\in I\setminus J$
in $\G_{\rew(\Pi,A,V)}$ that does not include $\aux$
is a path in $\G_\Pi$ as well,
while it maps to a path in $\G_\Pi$ (that includes~$A$) otherwise.
Hence, $I\setminus J\subseteq C$ for 
some component~$C$ of~$\Pi$,
so that $J'\models F(\ext(I,I),\mon(A,V))$
yields $J\subset I$ by Lemma~\ref{LemReduct}. 
Since $I\in\SM(\Pi)$, we have that $J\not\models F(I,\Pi)$,
while $J'\models F(I,\Pi)\cap F(\ext(I,I),\rew(\Pi,A,V))$
implies $J\models F(I,\Pi)\cap F(\ext(I,I),\rew(\Pi,A,V))$
because $\At(F(I,\Pi))\cap(\{\aux\} \cup \AtF(A,V))=\emptyset$. 
That is, $J\not\models F(I,\Pi)\setminus F(\ext(I,I),\rew(\Pi,A,V))$,
so that $I\models B(r)$, $J\models B(F(I,r))$, and $H(r)\cap J=\emptyset$ for some
rule $r\in \Pi\setminus \rew(\Pi,A,V)$.
For such a rule~$r$,
we have that $A\in B(r)$,
and $I\models r$ yields $H(r)\cap (I\setminus J)\neq\emptyset$.
Hence, by Lemma~\ref{LemReduct}, 
$\ext(J,I)\subseteq J'$, where
$A\in B(r)$ together with
$I\models B(r)$ and
$J\models B(F(I,r))$ imply $\aux\in\ext(J,I)$.
This means that
$J'\models (B(F(I,r))\setminus \{A\})\cup\{\aux\}$,
while
$H(r)\cap J'=H(r)\cap J=\emptyset$, so that
$F(\ext(I,I),r')\in F(\ext(I,I),\rew(\Pi,A,V))$ and
$J'\not\models F(\ext(I,I),r')$
for the rule $r'\in\rew(\Pi,A,V)$ that replaces~$A$ in~$r$ by $\aux$.
We thus conclude that $J'\not\models F(\ext(I,I),\rew(\Pi,A,V))$ and
$\{I'\in\nolinebreak\SM(\rew(\Pi,A,V)) \mid I'\setminus(\{\aux\} \cup \AtF(A,V)) = I\}
=\{\ext(I,I)\}$.

($\Leftarrow$) \
Let $I'\in\SM(\rew(\Pi,A,V))$ and
$I:=I'\setminus(\{\aux\} \cup \AtF(A,V))$.
Then, by Lemma~\ref{LemReduct}, we have that
$\ext(I,I)\subseteq I'$ and $\ext(I,I)\models F(I',\mon(A,V))$,
which yields 
$\ext(I,I)\models F(I',\rew(\Pi,A,V))$ and $I'=\ext(I,I)$.
Moreover,
$I\models \Pi$ holds because
$\At(\Pi)\cap(\{\aux\} \cup \AtF(A,V))=\emptyset$ 
and $\aux\in\ext(I,I)$ if and only if $I\models A$.

Let $J\subset I$ for some component~$C$ of~$\Pi$,
and assume that
$J\models F(I,\Pi)\cap F(\ext(I,I),\linebreak[1]\rew(\Pi,A,V))$.
For $J':=\ext(I,I)\setminus(I\setminus J)$,
since
$\ext(I,I)\in\SM(\rew(\Pi,A,V))$ and
$J'\subset \ext(I,I)$,
we have that
$J'\not\models F(\ext(I,I),\linebreak[1]\rew(\Pi,A,V))$,
while $J\models F(I,\Pi)\cap F(\ext(I,I),\linebreak[1]\rew(\Pi,A,V))$
implies $J'\models F(I,\Pi)\cap F(\ext(I,I),\linebreak[1]\rew(\Pi,A,V))$
because $\At(F(I,\Pi))\cap(\{\aux\} \cup \AtF(A,V))=\emptyset$
and $J'\setminus(\{\aux\} \cup \AtF(A,V)) = J$.
Moreover,
$\emptyset \subset H(r)\cap (\{\aux\} \cup \AtF(A,V)) \subseteq J'$
holds for any $r\in F(\ext(I,I),\mon(A,V))$,
so that
$J'\models F(\ext(I,I),\linebreak[1]\mon(A,V))$.
That is,
$J'\not\models F(\ext(I,I),\rew(\Pi,A,V))\setminus (F(I,\Pi)\cup\linebreak[1] F(\ext(I,I),\linebreak[1]\mon(A,V)))$,
so that
$\ext(I,I)\models B(r')$, $J'\models B(F(\ext(I,I),r'))$,
and $H(r')\cap J'=\emptyset$ for some
rule $r'\in \rew(\Pi,A,V)\setminus(\Pi\cup\mon(A,V))$.
For such a rule~$r'$,
we have that $\aux\in B(r')$,
and $\ext(I,I)\models r'$ yields $H(r')\cap (I\setminus J)\neq\emptyset$.
Thus,
$H(r)\cap (I\setminus J)\neq\emptyset$ and $A\in B(r)$ 
for the rule $r\in\Pi$ such that $r'$ replaces~$A$ in~$r$ by $\aux$.
Hence, by Lemma~\ref{LemReduct}, 
$\ext(J,I)\models F(\ext(I,I),\linebreak[1]\mon(A,V))$,
but
$\ext(J,I)\not\models F(\ext(I,I),\linebreak[1]\rew(\Pi,A,V))$
in view of $\ext(J,I)\subset\ext(I,I)$.
Since
$\ext(J,I)\models F(I,\Pi)\cap F(\ext(I,I),\linebreak[1]\rew(\Pi,A,V))$
because $\At(F(I,\Pi))\cap(\{\aux\} \cup \AtF(A,V))=\emptyset$
and $\ext(J,I)\setminus(\{\aux\} \cup \AtF(A,V)) = J$,
this means that $\aux\in\ext(J,I)$,
$I\models A$, $J\models F(I,A)$,
$\ext(J,I)\not\models F(\ext(I,I),r')$, and
$J\not\models F(I,r)$ for rules $r'\in \rew(\Pi,A,V)\setminus(\Pi\cup\mon(A,V))$
and $r\in\Pi$ as above.
We thus conclude that $J\not\models F(I,\Pi)$ and
$I\in\SM(\Pi)$.
\end{proof}

\ThmModular*
\begin{proof}
The claim follows immediately by observing that
$\rew(\Pi,A,\V)\cup\Pi'\subseteq\rew(\Pi \cup \Pi',A,\V)$
and 
$\rew(\Pi \cup \Pi',A,\V)\setminus\rew(\Pi,A,\V)\subseteq\Pi'$.
\end{proof}


\else

\ThmPreOne*
\begin{proof}
Because of Proposition~\ref{prop:se} it is enough to show that (SE1)--(SE16) are correct.
As already observed in Section~\ref{sec:tosum}, since $\naf$ is neither introduced nor eliminated, we have to show that models are preserved by each strongly equivalence.
The argument for (SE1) and (SE3) is reported in Section~\ref{sec:tosum}, and correctness of (SE2), (SE4)--(SE6) is easy to see.

As for (SE7), note that $(x_1 + \cdots + x_m) / m = k$ if and only if $x_1 + \cdots + x_m = m \cdot k$ if and only if $(x_1 - k) + \cdots + (x_m - k) = 0$, where $m > 0$ and $x_1,\ldots,x_m$ are integers.

Concerning (SE8), by definition $I \models \MIN[w_1 : l_1, \ldots, w_n : l_n] < b$ if and only if there is at least one $i \in [1..n]$ such that $w_i < b$ and $I \models l_i$.
This is the case if and only if $I \models \COUNT[l_i \mid i \in [1..n],\ w_i < b] \geq 1$.
Similar for (SE9).

On the contrary, for (SE10), $I \models \MIN[w_1 : l_1, \ldots, w_n : l_n] \geq b$ if and only if for all $i \in [1..n]$ such that $w_i < b$ we have $I \not\models l_i$, and this is the case if and only if $I \models \COUNT[l_i \mid i \in [1..n],\ w_i < b] \leq 0$.
Similar for (SE11).

As for (SE12), $I \models \MIN[w_1 : l_1, \ldots, w_n : l_n] = b$ if and only if for all $i \in [1..n]$ such that $w_i < b$ we have $I \not\models l_i$, and there is at least one $i \in [1..n]$ such that $w_i = b$ and $I \models l_i$.
This is the case if and only if $I \models \SUM([-n : l_i \mid i \in [1..n],\ w_i < b] \cup [1 : l_i \mid i \in [1..n],\ w_i = b]) \geq 1$.

On the contrary, for (SE13), $I \models \MIN[w_1 : l_1, \ldots, w_n : l_n] \neq b$ if and only if either there is some $i \in [1..n]$ such that $w_i < b$ and $I \models l_i$, or for all $i \in [1..n]$ such that $w_i = b$ we have $I \not\models l_i$.
This is the case if and only if $I \models \SUM([n : l_i \mid i \in [1..n],\ w_i < b] \cup [-1 : l_i \mid i \in [1..n],\ w_i = b]) \geq 0$.

Correctness of (SE14)--(SE16) is easy to see.
\end{proof}

\ThmModular*
\begin{proof}
Immediate if different auxiliary atoms are used for different aggregates because the rewriting can be computed by processing one rule at time.
\end{proof}

\ThmCorrectness*
\begin{proof}
Because of Theorem~\ref{ThmPreOne}, we can assume that $\Pi$ is the output of $\pre_1$, i.e., aggregates in $\Ag(\Pi)$ are of the form (\ref{eq:aggregate-normal}).
Let $\Pi$ be a program, and $\Pi'$ be the program obtained from $\Pi$ by replacing all occurrences of an aggregate $A \in \Ag(\Pi)$ with a fresh propositional atom $p$.
Since $\Pi \equiv_{\At(\Pi)} \Pi' \cup \{p \leftarrow A\}$, and $\rew$ is modular, we have just to show that $\Pi' \cup \{p \leftarrow A\} \equiv_{\At(\Pi)} \Pi' \cup \rew(\{p \leftarrow A\})$.
We can thus prove (\ref{eq:statement:1})--(\ref{eq:statement:3}) for such a program.

\paragraph{Proof of (\ref{eq:statement:1}).}
First of all, note that $I \models \Pi'$ if and only if $\ext(I,I) \models \Pi'$.
Hence, we have just to consider $\{p \leftarrow A\}$ and $\rew(\{p \leftarrow A\})$.
If $I \not\models p \leftarrow A$ then $I \models A$ and $p \notin I$.
In this case, $\ext(I,I) \not\models p \leftarrow \aux$ because $\aux \in \ext(I,I)$ and $p \notin \ext(I,I)$, which means that $\ext(I,I) \not\models \rew(\{p \leftarrow A\})$.

As for the other direction, note that rules of the form (\ref{eq:pF:1})--(\ref{eq:pF:3}) are satisfied by construction.
Moreover, because of (\ref{eq:inequality})--(\ref{eq:inequality:positive}) and (SE3), rules with $\aux$ in the head are satisfied as well.
Hence, if $\ext(I,I) \not\models \rew(\{p \leftarrow A\})$ then $\ext(I,I) \not\models p \leftarrow \aux$, that is, $p \notin \ext(I,I)$ and $\aux \in \ext(I,I)$, which implies $p \notin I$ and $I \models A$, i.e., $I \not\models p \leftarrow A$.

\paragraph{Proof of (\ref{eq:statement:2}).}
Note that $F(I,\Pi' \cup \{p \leftarrow A\}) = F(I,\Pi') \cup F(I,\{p \leftarrow A\})$, and $F(\ext(I,I),\Pi' \cup \rew(\{p \leftarrow A\})) = F(\ext(I,I),\Pi') \cup F(\ext(I,I),\rew(\{p \leftarrow A\})) = F(I,\Pi') \cup F(\ext(I,I),\rew(\{p \leftarrow A\}))$.
As in proof of (\ref{eq:statement:1}), $J \models F(I,\Pi')$ if and only if $\ext(J,I) \models F(I,\Pi')$, therefore we have just to consider $F(I,\{p \leftarrow A\})$ and $F(\ext(I,I),\rew(\{p \leftarrow A\}))$.
If $J \not\models F(I,\{p \leftarrow A\})$ then $I \models A$ and $J \not\models p \leftarrow F(I,A)$, that is, $p \notin J$ and $J \models F(I,A)$.
We have that $\aux \in \ext(I,I)$ and $\aux \in \ext(J,I)$, therefore $\ext(J,I) \not\models p \leftarrow \aux$, which means $\ext(J,I) \not\models F(\ext(I,I),\rew(\{p \leftarrow A\}))$.

As for the other direction, $\ext(J,I) \not\models F(\ext(I,I),\rew(\{p \leftarrow A\}))$ implies $\ext(J,I) \not\models p \leftarrow \aux$, i.e., $p \notin \ext(J,I)$ and $\aux \in \ext(J,I)$.
From $\aux \in \ext(J,I)$ we have $I \models A$ and $J \models F(I,A)$.
Hence, $p \leftarrow F(I,A)$ belongs to $F(I,\{p \leftarrow A\})$ and we have $J \not\models F(I,\{p \leftarrow A\})$.

\paragraph{Proof of (\ref{eq:statement:3}).}
Let $K \models \Pi' \cup \rew(\{p \leftarrow A\})$ be such that $K \neq \ext(K \cap \At(\Pi'), K \cap \At(\Pi'))$.
We shall show that $K \notin \SM(\Pi' \cup \rew(\{p \leftarrow A\}))$.
First of all, note that rules of the form (\ref{eq:pF:2}) guarantee $\{\aux,p_1,\ldots,p_j\} \subseteq K$ whenever $K \models A$, which implies $K = \ext(K \cap \At(\Pi'), K \cap \At(\Pi'))$.
Hence, $K \not\models A$ must hold.
If $\aux \in K$ then $K \setminus \{\aux\} \models F(K,\Pi' \cup \rew(\{p \leftarrow A\}))$.
Similarly, if $\{p_i,p_i^F\} \subseteq K$ for some $i \in [1..j]$ then $K \setminus \{p_i^F\} \models F(K,\Pi' \cup \rew(\{p \leftarrow A\}))$.

\medskip

Note that the proof of (\ref{eq:statement:3}) can be also used to show that any subset-minimal model $K$ of the program reduct $F(\ext(I,I),\Pi' \cup \rew(\{p \leftarrow A\}))$ is such that $K = \ext(K \cap \At(\Pi'),I)$.
We are ready to conclude the proof of the main claim.

If $I \in \SM(\Pi)$ then $\ext(I,I) \models \rew(\Pi)$ because of (\ref{eq:statement:1}).
Let $J \subseteq \ext(I,I)$ be a subset-minimal interpretation such that $J \models F(\ext(I,I),\rew(\Pi))$.
Hence, $J = \ext(J \cap \At(\Pi),I)$, and because of (\ref{eq:statement:2}) we have $J \cap \At(\Pi) \models F(I,\Pi)$, which implies $J \cap \At(\Pi) = I$ and therefore $J = \ext(I,I)$, i.e., $\ext(I,I) \in \SM(\rew(\Pi))$.

As for the other direction, if $I \in \SM(\rew(\Pi))$ then $I = \ext(I \cap \At(\Pi), I \cap \At(\Pi))$ because of (\ref{eq:statement:3}).
Hence, $I \cap \At(\Pi) \models \Pi$ because of (\ref{eq:statement:1}).
Let $J \subseteq I \cap \At(\Pi)$ be such that $J \models F(I \cap \At(\Pi),\Pi)$.
Because of (\ref{eq:statement:2}) we have that $\ext(J,I \cap \At(\Pi)) \models F(I,\rew(\Pi))$.
Since $\ext(J,I \cap \At(\Pi)) \subseteq I$ it must be the case that $\ext(J,I \cap \At(\Pi)) = I$, i.e., $J = I \cap \At(\Pi)$, which means that $J \in \SM(\Pi)$.
\end{proof}

\ThmLinear*
\begin{proof}

At most $\norm{\Pi}$ rules of the form (\ref{eq:pF:1})--(\ref{eq:pF:3}) are introduced, for a total size of $7 \cdot \norm{\Pi}$ in the worst case.
Moreover, $\rew(\Pi)$ replaces each aggregate with a fresh auxiliary atom, and introduces at most two rules with the fresh atom in head and whose body has the same size of the aggregate.
The size of these rules is bounded by $5 \cdot \norm{\Pi}$.
Hence, $\norm{\rew(\Pi)} \leq 12 \cdot \norm{\rew{\Pi}}$.
%
\end{proof}

\ThmSimple*
\begin{proof}
Because of Theorems~\ref{ThmPreOne}, we know that $\Pi \equiv_\V \pre_1(\Pi)$.
Moreover, because of Theorem~\ref{thm:correctness}, we already know that $\Pi \equiv_{\At(\Pi)} \rew(\Pi)$ for programs whose aggregates are of the form (\ref{eq:aggregate-normal}).
Hence, we have just to show that $\Pi \equiv_\V \pre_2(\Pi)$ for programs whose aggregates are of the form (\ref{eq:aggregate-normal}).
Actually, it is sufficient to show that $\Pi \cup \{p \leftarrow A\} \equiv_\V \Pi \cup \{p \leftarrow A'\}$, where $A$ is an aggregate of the form (\ref{eq:aggregate-normal}), and $A'$ is the aggregate of the form (\ref{eq:aggregate-refined}) produced by $\pre_2$.

For all $I \subseteq \V$ we have that $I \models A$ if and only if $I \models A'$.
Hence, $I \models \Pi \cup \{p \leftarrow A\}$ if and only if $I \models \Pi \cup \{p \leftarrow A'\}$.
Let $I$ be a model of $\Pi \cup \{p \leftarrow A\}$.
If $I \not\models A$ then $I \in \SM(\Pi \cup \{p \leftarrow A\})$ if and only if $I \in \SM(\Pi \cup \{p \leftarrow A'\})$.
Let $I \models A$.
Hence, $p \in I$ holds because $I$ is a model.

For all $J \subseteq I$ such that $I \cap \{p_1,\ldots,p_i\} = J \cap \{p_1,\ldots,p_i\}$ we have that $J \models F(I,A)$ if and only if $J \models F(I,A')$, and therefore $J \models F(I,\Pi \cup \{p \leftarrow A\})$ if and only if $J \models F(I,\Pi \cup \{p \leftarrow A'\})$.
In this case $I \in \SM(\Pi \cup \{p \leftarrow A\})$ if and only if $I \in \SM(\Pi \cup \{p \leftarrow A'\})$.

\TODO{This part is wrong. We need Proposition 1 in Martin's notes.}
On the other hand, for all $J \subseteq I$ such that $I \cap \{p_1,\ldots,p_i\} \neq J \cap \{p_1,\ldots,p_i\}$, since $p$ and $p_1,\ldots,p_i$ belong to different components, we have that $J \models F(I,\Pi \cup \{p \leftarrow A\})$ if and only if $I \setminus (\{p_1,\ldots,p_i\} \setminus J) \models F(I,\Pi \cup \{p \leftarrow A\})$, and $J \models F(I,\Pi \cup \{p \leftarrow A'\})$ if and only if $I \setminus (\{p_1,\ldots,p_i\} \setminus J) \models F(I,\Pi \cup \{p \leftarrow A'\})$.

Now observe that for all $X \subseteq \{p_1,\dots,p_i\}$ we have that $I \setminus X \models F(I,\Pi \cup \{p \leftarrow A\})$ if and only if $I \setminus X \models F(I,\Pi \cup \{p \leftarrow A'\})$ (because $F(I,\{p \leftarrow A\})$ and $F(I,\{p \leftarrow A'\})$ are trivially satisfied).
We can conclude that $I \in \SM(\Pi \cup \{p \leftarrow A\})$ if and only if $I \in \SM(\Pi \cup \{p \leftarrow A'\})$.
\end{proof}

\fi

\end{document}
